\documentclass[11pt, a4paper]{article}
\usepackage{fontenc}
\usepackage[english]{babel}
\usepackage[utf8]{inputenc}
\usepackage{mathtools}
\usepackage{amsmath}
\usepackage{amsthm}
\usepackage{algorithm}
\usepackage{algorithmic}
\usepackage{amssymb}
\usepackage{stackengine}
\mathtoolsset{showonlyrefs}
\usepackage{tikz}
\usepackage{pgfplots}
\pgfplotsset{major grid style={dashed}}
\pgfplotsset{compat=1.18}
\usepackage[dvipsnames]{xcolor}
\usepackage{enumitem}
\usepackage
[
        a4paper,
        left=2.cm,
        right=2.cm,
        top=2.5cm,
        bottom=2.5cm,
]
{geometry}
\usepackage{comment}
\usepackage{mathrsfs} 
\usepackage{dsfont}
\usepackage{esint}
\usepackage[export]{adjustbox}
\usepackage{multirow}
\usepackage{mathtools}
\usepackage{enumitem}
\usepackage{scalerel}
\usepackage{color}
\usepackage{physics}
\definecolor{hanblue}{rgb}{0.27, 0.42, 0.81}
\definecolor{mordantred19}{rgb}{0.68, 0.05, 0.0}
\definecolor{red}{rgb}{0.68, 0.05, 0.0}
\definecolor{green}{rgb}{0.0, 0.5, 0.0}
\usepackage[colorlinks, citecolor=hanblue,linkcolor=green]{hyperref}

\DeclareMathOperator{\Dom}{\operatorname{Dom}}

\newcommand{\R}{\mathbb{R}}

\newcommand{\N}{\mathbb{N}}
\renewcommand{\S}{\mathbb{S}}

\newcommand{\E}{\mathcal{E}}

\renewcommand{\dd}{\, \mathrm{d}}

\DeclarePairedDelimiterX{\set}[1]{\lbrace}{\rbrace}{#1}

\newcommand\restr[2]{{
  \left.\kern-\nulldelimiterspace
  #1 
  \vphantom{\big|} 
  \right|_{#2}
  }}

\allowdisplaybreaks

\theoremstyle{plain}
\newtheorem{thm}{Theorem}
\numberwithin{thm}{section}

\newtheorem{lemma}[thm]{Lemma}
\newtheorem{prop}[thm]{Proposition}

\newtheorem{cor}[thm]{Corollary}

\theoremstyle{definition}
\newtheorem{defi}[thm]{Definition}

\newtheorem{remark}[thm]{Remark}
\theoremstyle{remark}

\newcommand{\CE}{\mathrm{CE}}
\renewcommand{\leq}{\leqslant}
\renewcommand{\geq}{\geqslant}
\renewcommand{\le}{\leqslant}
\renewcommand{\ge}{\geqslant}

\title{Multi-Headed Transformer Architectures as Time-dependent Wasserstein Gradient Flows}
\author{
Alex Massucco\textsuperscript{*1}\hspace{35pt}
Leonardo Del Grande\textsuperscript{*2}\hspace{35pt}
Marcello Carioni\textsuperscript{2}\\[15pt]
Christoph Brune\textsuperscript{2}\hspace{35pt}
Carola-Bibiane Schönlieb\textsuperscript{1}
}
\date{}

\begin{document}

\maketitle

\footnotetext[1]{Department of Applied Mathematics and Theoretical Physics, University of Cambridge, Cambridge, UK}
\footnotetext[2]{Department of Mathematics, University of Twente, Enschede, Netherlands}
\begingroup
\renewcommand{\thefootnote}{\fnsymbol{footnote}}
\footnotetext[1]{These authors contributed equally to this work.}
\endgroup

\begin{abstract}
\noindent
In recent years, transformer architectures have revolutionized the field of language processing, opening the door to previously unforeseen possibilities. However, from a theoretical point of view, the mathematical models proposed in the literature often lack direct contact with the actual architectures and depend on strong simplifying assumptions. In this paper, we reduce this gap by modelling the data flow in multi-headed transformer architectures as time-dependent gradient flows for a suitable interaction energy capturing the design of the attention mechanism. The explicit dependence on time allows us to consider different weights for each head and for each layer, without imposing constraints on the initialization method. Moreover, we prove that, under a suitable integrability assumption on the evolution of the weights, each element of the $\omega$-limit set of the gradient flows is a stationary point of the interaction energy at a limiting weight distribution. Finally, we analyse the stability of the gradient flows considering perturbations of both the initial data and the weights. Specifically, on the one hand, we study the robustness of the proposed models with respect to noisy inputs, establishing a continuous dependence of the gradient flows on the initial data and uniqueness of the flows. On the other hand, we prove the $\Gamma$-convergence of the perturbed interaction energy to the unperturbed one, leading to the convergence of the corresponding gradient flows. We complement these theoretical results with numerical experiments that confirm the predicted energy-dissipation identity and clarify the asymptotic behavior of the dynamics in both the autonomous-like (Ornstein--Uhlenbeck) and the genuinely non-autonomous (oscillating-weights) regimes.
\end{abstract}

\textbf{AMS subject classifications (MSC2020):} 68T07, 35Q93, 49J53, 46N10, 49Q20.

\section{Introduction}

Since their introduction \cite{vaswani2017attention}, transformers architectures have rapidly emerged as a leading paradigm in machine learning, achieving state-of-art performance in many data-science tasks. These include astonishing development of Large Language Models (such as GPT \cite{radford2019language}, Gemini \cite{team2023gemini}, Deepseek \cite{liu2024deepseek}), together with advances in machine translation \cite{wu2016google}, computer vision \cite{he2016deep, dosovitskiy2020image, radford2021learning}, speech and audio recognition \cite{baevski2020wav2vec, radford2023robust}, protein and molecular modeling \cite{jumper2021highly}, drug discovery \cite{stokes2020deep}, weather forecasting \cite{lam2023learning}, and applications in health care \cite{topol2019high} and finance \cite{gu2020empirical}.
A central feature that has distinguished transformers from the other architectures is the presence and the choice of the self-attention mechanism \cite{vaswani2017attention,kim2021lipschitz, sander2022sinkformers, shen2023study, hua2022transformer, ramapuram2024theory, castin2025unified}, which enables each input element to share and integrate information from distant parts of the data efficiently. 

The remarkable success of transformer architectures raises fundamental questions about the underlying structural properties that drive their effectiveness and the role of the self-attention mechanism. A rigorous mathematical analysis of their behavior and inductive biases can yield valuable insights into how architectural modifications influence performance in real-world applications.
Such an understanding may also clarify key features of transformers, including depth heterogeneity \cite{zhang2024transformers}, the emergence of token clustering \cite{geshkovski2024emergence}, stability properties \cite{zhang2019fixup}, and physical interpretability \cite{geneva2022transformers}.

Motivated by these questions, a growing body of work has recently advanced the mathematical theory of transformers by interpreting their behavior as gradient flows minimizing suitable interaction energies \cite{geshkovski2023mathematical, burger2025analysis, castin2025unified, sander2022sinkformers}. 
Inspired by the seminal analysis in \cite{geshkovski2023mathematical}, the inference process of a transformer can be viewed as a particle interaction system, in which each particle represents a token evolving in time according to a dynamics governed by the transformer parameter matrices.
This observation has laid the groundwork for a mean-field theory of transformers, in which the particle interaction system is reinterpreted as a flow of time-dependent probability measures, each describing the evolving distribution of tokens.
This mean-field viewpoint has provided a rigorous framework to study the clustering behavior of transformers \cite{chen2025quantitative, geshkovski2024emergence, burger2025analysis, karagodin2024clustering}. In particular, it has been shown that the output measure tends to concentrate around a small number of tokens, with this concentration bias depending on the structural properties of the weight matrices of the transformer.
Beyond clustering phenomena, this theoretical framework has also sparked growing interest in controllability and approximation properties of transformers \cite{geshkovski2024measure}. 
Moreover, it has been observed in \cite{geshkovski2023mathematical} and rigorously analyzed in \cite{burger2025analysis} that the mean-field transformer dynamics can be described by a Wasserstein gradient flow of a suitable interaction energy. This observation has made it possible to employ the powerful and well-established framework of Wasserstein gradient flows \cite{ambrosio2008gradient} to rigorously explain several empirically observed features of transformer dynamics, such as energy dissipation and clustering.

Despite resting on solid theoretical foundations, existing analysis rely on rather idealized assumptions regarding the structure of the attention matrix and the time-independence of the weight parameters. In particular, prior works have primarily focused on the single-head setting with weight matrices that remain fixed throughout the evolution.
Operating under these assumptions is mathematically convenient but misaligned with modern architectures. In practice, parameters vary across both heads and layers, so the effective attention interaction changes with depth. In a continuous-depth formulation, this leads naturally to a genuinely non-autonomous vector field. By contrast, the autonomous regime is precisely the one in which clustering and concentration results are usually derived. Moreover, recent works (see e.g. \cite{tong2025neural}) provide numerical evidence that learned time-dependent weight dynamics can invalidate the autonomous approximation and may prevent the onset of clustering. This suggests that clustering is not a universal feature of deep transformer dynamics and may even be undesirable in autoregressive next-token prediction settings \cite{tong2025neural}. In addition to that, the stability properties of transformers, as well as their interplay with layer normalization and multi-head aggregation, are more faithfully described within non-autonomous frameworks, where the weights are allowed to vary both across heads and over depth \cite{zhang2019fixup, kan2025stability, xiong2020layer}. 
Finally, time-dependent models are more suited to capture underlying physical structures, symmetries, energy landscapes and complex dynamics in transformer inference \cite{ramsauer2020hopfield, huo2025capturing}.  
These observations motivate the central goal of this paper: \emph{to bridge the gap between the autonomous mean-field/Wasserstein gradient flow theory and the non-autonomous, multi-head architectures used in practice}. Concretely, we generalize the mean-field transformer framework by incorporating multi-head attention in a way that includes single head (and empirical head) models as a special case, and allowing parameters to evolve with depth via a time-dependent parameter distribution. We then analyze the resulting model from the perspective of non-autonomous Wasserstein gradient flows, developing a framework that accounts for the explicit time dependence of the interaction energy and incorporates its time derivative into energy balance relations describing the dynamics.
This maintains the explanatory power of the Wasserstein gradient flow perspective, while aligning the model with the depth heterogeneity that is intrinsic to modern transformers. 

\subsection{Mathematical framework and main results}

In mathematical terms we
deviate from \cite{geshkovski2023mathematical} and \cite{burger2025analysis} by considering multi-head transformer dynamics with time-dependent weights that is described by the following dynamical system for the tokens $x_i$, here written in the case of the softmax self-attention mechanism:
\begin{align}
\dot{x}_i(t) = P_{x_i(t)}^{\perp}\left(\frac{1}{H} \sum_{h=1}^H \sum_{j=1}^n  \frac{e^{\langle Q(\theta_t^{(h)})x_i, K(\theta_t^{(h)})x_j\rangle}}{\sum_{p=1}^n e^{\langle Q(\theta_t^{(h)})x_i, K(\theta_t^{(h)})x_p\rangle}} V(\theta^{(h)})x_j \right),
\end{align}
where $H$ is the number of heads,  $Q(\theta_t^{(h)}),K(\theta_t^{(h)}),V(\theta_t^{(h)})$ are the time-dependent and head-dependent matrices of \emph{queries}, \emph{keys} and \emph{values} depending on the transformer parameter $\theta_t^{(h)}$ for each head $h$ and $P^\perp$ is the projection on the tangent space of the sphere $\mathbb{S}^{d-1}$ (encoding the layer normalization). 
We then consider the mean-field limit of the previous dynamical system both in the heads $\theta^{(h)}$ and in the tokens $x_i$. This is achieved by representing the tokens as a time-dependent probability measures $t \mapsto \mu_t$ and the transformer parameters $\theta^{(h)}$ as time-dependent probability measures $t \mapsto \Theta_t \in \mathcal{P}_2(\R^\alpha)$.
This recovers the multi-head formulation when $\Theta$ is an empirical probability measure and it reduces to the autonomous setting when $\Theta$ does not depend on time.
This leads to a mean-field transformer dynamics described by the following PDE (Transformer PDE):
\begin{align}\label{eq:pdet}
\partial_t{\mu_t} + \operatorname{div}_{\mathbb{S}^{d - 1}} \left(\mu_t \int_{\R^{\alpha}} \mathcal{V}[\mu_t](x;\theta) \dd \Theta_t(\theta)\right) = 0
\end{align}
where $\mathcal{V}[\mu_t]$ is defined as 
\begin{align}
\mathcal{V}[\mu_t](x;\theta) = P_x^{\perp}\left(\int_{{\mathbb{S}^{d-1}}} \frac{e^{\langle Q(\theta)x, K(\theta) y\rangle}}{\int_{\mathbb{S}^{d-1}} e^{\langle Q(\theta)x, K(\theta)y'\rangle} d\mu_t(y')}  V(\theta)y\dd \mu_t(y)\right).    
\end{align}
We choose to model $\Theta_t$ as the solution of a stochastic differential equation $\dd \tilde\theta_t = f(\tilde\theta_t, t) \dd t + g(\tilde\theta_t, t) \dd W_t$ for a suitable choice of $f$ and $g$. This allows to model random initialization strategies as well as noise injection throughout the layers.
 We refer to Section \ref{sec:intro_multi} and Section \ref{sec:intro-meanfield} for a precise mathematical description of our models, the assumptions we use and the generalization to other self-attention mechanisms.

With the goal of developing a Wasserstein gradient flow analysis of the previously described model, following and generalizing \cite{burger2025analysis}, we consider a weighted Wasserstein distance ${\mathrm{W}}_{m,2}(\cdot,\cdot;\Theta)$ where the non-linear mobility $m$ reflects the layer normalization. In the infinite-head regime, the corresponding kinetic energy is obtained by averaging, with respect to the time-dependent parameter distribution \(\Theta_t\), the contributions of weight-dependent velocity fields. A key point of our analysis is that this weighted distance is equivalent to the classical \(2\)-Wasserstein distance. This allows us to use the compactness, lower-semicontinuity, and stability tools of optimal transport while working in a geometry adapted to the normalized attention dynamics.

To understand time-dependent gradient flows, that is the ultimate goal of our paper, the first step in the analysis is the characterization of absolutely continuous curves with respect to \(W_{m,2}(\cdot,\cdot;\Theta)\). This is already one of the points where the present time-dependent, multi-head setting differs from the single-head autonomous framework. Indeed, in the multi-head regime, the velocity field depends not only on the token variable \(x\), but also on the parameter variable \(\theta\). However, the evolution of the token distribution only depends on the velocity obtained after averaging over the parameter distribution \(\Theta_t\).

The cost of producing a given averaged velocity is determined by the different mobilities associated with the parameters \(\theta\). More precisely, a given averaged velocity $\bar v$ on $\mathbb S^{d-1}$ admits infinitely many representations as a $\Theta_t$-average of $\theta$-dependent velocities; among these, the one minimizing the kinetic cost, weighted pointwise in $x$ by $m_{\mu_t}(x,\theta)$, is selected by a constrained quadratic minimization. Solving it explicitly via Lagrange multipliers (see Lemma~\ref{lem:metric_der}) yields the effective scalar mobility at $x$:
\[
b^{\Theta_t}_{\mu}(x)
:=
\left(
\int_{\mathbb R^\alpha}
\frac{1}{m_{\mu_t}(x,\theta)}
\,d\Theta_t(\theta)
\right)^{-1},
\]
that is the \(\Theta_t\)-weighted harmonic mean of the mobilities associated with the different heads. Its qualitative behavior reflects a typical feature of harmonic means: $b_{\mu_t}^{\Theta_t}(x)$ is small, and transport at $x$ correspondingly cheap, as soon as a non-negligible fraction of heads, weighted by $\Theta_t$, admits a low mobility $m_{\mu_t}(x,\theta)$ at $x$. In this way, $b_{\mu_t}^{\Theta_t}$ describes the local transport geometry seen by the token distribution after averaging over the head parameters. In particular, the dependence on the individual weights is not lost, but is encoded in a single scalar mobility factor that, at each point $x \in \mathbb{S}^{d-1}$, determines the energetic cost of moving mass; in the single-head case $\Theta_t = \delta_{\bar\theta}$, it collapses to $b_{\mu_t}^{\Theta_t}(x) = m_{\mu_t}(x,\bar\theta)$, recovering the framework of \cite{burger2025analysis}. This effective mobility is the key object that allows us to describe the multi-head weighted geometry as a Wasserstein-type geometry on the token space, while still retaining the influence of the time-dependent parameter distribution.

Building on such analysis and taking inspiration from \cite{ferreira2018gradient,rossi2008metric} we then define Wasserstein gradient flows of the time-dependent interaction energy 
\begin{align}\label{eq:intintro}
    \mathcal{E}(\mu,t) = \frac{1}{2} \int_{\R^\alpha \times \mathbb{S}^{d-1} \times \mathbb{S}^{d-1}} \mathscr{K}(x,y,\theta) \, \dd\mu(y)\dd\mu(x)\dd\Theta_t(\theta) 
\end{align}
through a minimizing movement approach  with the goal of relating such time-dependent gradient flow to solutions of the transformer PDE. In particular for a suitable choice of $\mathscr{K}$ this recovers \eqref{eq:pdet} in the case $Q^T K = V$. More precisely, we show that minimizing movements are solutions of the Transformer PDE and viceversa solutions of the Transformer PDE are curves of maximal slope for the (time-dependent) interaction energy, again defined taking into account that the interaction energy depends explicitly on time \cite{ferreira2018gradient, rossi2008metric} (see Definition \ref{def:curves_max_slope} for a precise definition). 
Such characterization is again crucially based on the characterization of the metric derivative obtained by considering the local transport geometry seen through the effective mobility $b^{\Theta_t}_{\mu}(x)$ and allows to derive a corresponding energy-dissipation identity (EDI) for solutions of the transformer PDE.   


We then investigate the asymptotic behavior of solutions to the transformer PDE, exploiting the fact that they are curves of maximal slope for the time-dependent energy $\mathcal{E}(\cdot,t)$. In contrast to \cite{burger2025analysis,geshkovski2023mathematical}, for a general time-dependent family $t \mapsto \Theta_t$ one cannot expect convergence to stationary states, especially when $\Theta_t$ exhibits oscillatory behavior. Nevertheless, if $\int_0^\infty |\Theta_t'| \, \mathrm{d}t < \infty$,
then part of the asymptotic behavior of the autonomous setting can still be recovered. More precisely, the $\omega$-limit set is non-empty, and each of its elements is a stationary point of a limit energy associated with a convergent subsequence $(\Theta_{t_n})_n$.

We also study the stability of the transformer dynamics with respect to perturbations both of the initial token distribution and of the time-dependent family of probability measures $\Theta_t$ representing the weights. In particular, we show that stability with respect to the initialization is governed by the quantity
$M_\theta(t) = \int_0^t \int_{\mathbb{R}^\alpha} \|D(\theta)\|^2 \, \mathrm{d}\Theta_\tau(\theta)\, \mathrm{d}\tau$.
This highlights a head- and layer-dependent normalization of the form
$
z \mapsto \frac{z}{t\sqrt{H}}$,
where $H$ denotes the number of heads, which keeps $M_\theta(t)$ independent of the network depth.

Finally, we complement the theoretical analysis with numerical experiments on $\mathbb S^2$ that confirm the predicted dissipation and stability behavior of the dynamics. Two qualitatively different regimes are considered: an Ornstein--Uhlenbeck (OU) evolution of the weights, which satisfies the integrability assumption needed for our long-time results, and a deterministic oscillating evolution, which does not. In the OU regime, the token cloud is observed to relax to a stationary configuration consistent with Theorem~\ref{thm:stationary_points}, and the strong upper gradient is shown to decay at the predicted rate $\mathcal{O}(H^{-1})$ in the number of heads. Instead, in the oscillating regime, neither relaxation nor decay of the gradient is observed, illustrating the sharpness of the integrability assumption. In both cases the discrete energy identity \eqref{eq:discrete_energy_balance}, which is a numerical analogue of \eqref{eq:var_ineq}, is verified along the simulated trajectories.

\subsection{Summary of main contributions and plan of the paper}

Our main contributions can be summarized as follows:
\begin{itemize}
\item \textbf{Mean field model for multi-head time-dependent transformers.}  We introduce a measure-valued parameterization $\Theta_t$ that recovers single-head and multi-head formulations as special cases, and we allow these weights to vary with time, matching the non-autonomous structure of practical transformers. In particular, starting from a Transformer ODE model we will recover its mean field formulation, called Transformer PDE. This is carried out in Section \ref{sec:intro_multi}.
  \item \textbf{Weighted Wasserstein geometries and existence of minimizing movements.}
    We introduce nonlinear anisotropic mobilities induced by the attention mechanism and, generalizing \cite{burger2025analysis}, we define the corresponding weighted Wasserstein distance ${\mathrm{W}}_{m,2}(\cdot,\cdot;\Theta)$ for time-dependent weights $\Theta_t$. We prove that ${\mathrm{W}}_{m,2}(\cdot,\cdot;\Theta)$ is equivalent to the classical \({\mathrm{W}}_2\), and  we construct Wasserstein gradient flows solutions for the time-dependent energy \( \mathcal{E}(\cdot,t) \). A central step here is the characterization of the metric derivative in terms of the effective mobility $b^{\Theta_t}_\mu(x)$ introduced above, namely the $\Theta_t$-weighted harmonic mean of $m_\mu(x,\cdot)$. This is carried out in Section \ref{sec:time_dep_GF} and provides the geometric and analytical foundations used throughout the paper.

  \item \textbf{A gradient flow characterization for the Transformer PDE.}
  We show that solutions to the constructed gradient flows solve the mean-field continuity equation \eqref{eq:pdet}. Conversely, we prove that weak solutions of the associated continuity equation are curves of maximal slope for \(\mathcal{E}(\cdot,t)\) with respect to $\mathrm{W}_{m,2}(\cdot,\cdot;\Theta)$. Furthermore, we study the long-time behavior of the system, showing that if the oscillations of $\Theta_t$ remain suitably controlled as $t \to +\infty$, then the $\omega$-limit set is non-empty and is contained in the union of the sets of stationary points associated with the cluster points of $\Theta_t$.
  This is carried out in Section \ref{sec:grad_flow_PDE}. 

  \item \textbf{Robustness and stability for transformer inference dynamics.}
  We establish a Grönwall-type estimate that controls the distance between trajectories corresponding to different initial data, justifying layer-dependent normalizations, and to perturbations \(f^\varepsilon,g^\varepsilon\) of the weight-initialization functions \(f,g\). This is carried out in Section \ref{sec:stability}.

  \item \textbf{Numerical validation.} We derive a discrete analogue \eqref{eq:discrete_energy_balance} of the energy-dissipation identity \eqref{eq:var_ineq} for a projected Euler / Euler--Maruyama scheme of the coupled token--weight system, and use it to numerically corroborate the predicted long-time behavior on $\mathbb{S}^2$. Monte Carlo simulations in the Ornstein--Uhlenbeck regime quantitatively confirm the $\mathcal{O}(H^{-1})$ decay of the strong upper gradient in the number of heads $H$ and the relaxation of tokens to a stationary configuration predicted by Theorem~\ref{thm:stationary_points}; the oscillating-weights regime, in which the underlying integrability assumption fails, exhibits no such decay and no persistent clustering, providing numerical evidence for the necessity of the assumption. This is carried out in Section~\ref{sec:numerics_long_time}.
\end{itemize}

\section{From Transformer ODE to Transformer PDE} \label{sec:intro_multi}
\subsection{Multi-head transformers as time-dependent particle interacting systems}
Let us fix $n, d \in \mathbb{N}$, $n, d \geq 1$, and consider a set of tokens (or particles) $(x_i)_{i=1}^n \subseteq \mathbb{R}^d$. Moreover, for every $h = 1, \dots, H$ with $H \geq 1$, we set $\theta^{(h)} \in \R^N$, $N \in \mathbb{N}$, and $\tilde\theta = (\theta^{(1)}, \dots, \theta^{(H)}) \in \mathbb{R}^\alpha$, $\alpha := HN$.
Then, we define the \emph{generalized multi-headed attention vector field} $\mathcal{A}: \mathbb{R}^d \to \mathbb{R}^d$ as 
\begin{equation}\label{eq:attention_mechanism}
\mathcal{A}(x_i; \tilde\theta) := \frac{1}{H}\sum_{h = 1}^H\sum_{j = 1}^n A_{ij}(\theta^{(h)}) B_{ij}(\theta^{(h)}),
\end{equation}
where, for every $h \in [1, H]$, $B_{ij}(\theta^{(h)}) = B(x_i, x_j, \theta^{(h)}) \in \mathbb{R}^d$ selects the directions along which the attention head operates and $A_{ij}(\theta^{(h)}) = A(x_i, x_j, \theta^{(h)}) \in \R$ is the \textit{attention matrix} given by
\begin{equation}\label{eq:self_att}
A_{ij}(\theta^{(h)}) := \frac{\sigma(x_i, x_j, \theta^{(h)})}{\mathcal{N}_{i}(\theta^{(h)})},
\end{equation}
with $\sigma: \mathbb{R}^d \times \mathbb{R}^d \times \mathbb{R}^N \to \mathbb{R}$ and $\mathcal{N}_i(\theta^{(h)}) = \mathcal{N}(x_i, \theta^{(h)}) \in \mathbb{R}$ is a normalization factor. 
A common example of the above mechanism is the \textit{softmax self-attention}, in which, for every $\theta$
\begin{align}
\mathcal{N}_{i}(\theta) := \sum_{p=1}^{n} \sigma(x_i, x_p, \theta), \qquad \sigma(x_i, x_j; \theta) = e^{\langle Q(\theta) x_i, K(\theta) x_j\rangle}, \qquad B_{ij}(\theta) = V(\theta) x_j,
\end{align}
where $\langle\cdot,\cdot\rangle$ denotes the standard Euclidean scalar product, and $Q(\theta)$, $K(\theta) \in \mathbb{R}^{k \times d}$, $V(\theta) \in \mathbb{R}^{d \times d}$ are respectively the \textit{queries}, the \textit{keys} and the \textit{values} weights matrices. Moreover, to simplify the notation, it is often defined $D(\theta) := Q^\top(\theta) K(\theta) \in \R^{d\times d}$, so that \eqref{eq:attention_mechanism} reads as
\begin{equation}\label{eq:softmax_self_attention}
\mathcal{A}_{\mathrm{softmax}}(x_i; \tilde\theta) := \frac{1}{H}\sum_{h = 1}^H\sum_{j = 1}^n \frac{e^{\langle x_i, D(\theta^{(h)}) x_j\rangle}}{\sum_{p = 1}^n e^{\langle x_i, D(\theta^{(h)}) x_p\rangle}} V(\theta^{(h)})x_j.
\end{equation}
Heuristically, this mechanism creates a linear combination of the tokens weighted by the components in the rows of the matrix $A$, which we can interpret as the probability vectors encoding the (nonlinear) weighted coupling of a token with every other particle in the considered set. For a more comprehensive analysis of several other (less frequent) attention mechanisms, we refer, for instance, to \cite{castin2025unified}.

Employing \eqref{eq:attention_mechanism} and following \cite{geshkovski2023mathematical}, we can define a first simple example of a multi-headed transformer model as
\[
x_i^\ell = x_i^{\ell - 1} + hP_{x_i^{\ell -1}}^{\perp}\mathcal{A}(x^{\ell -1}_i; \tilde\theta), \qquad i \in [1, n], \, \ell \in [1, L], \, h > 0,
\]
where $P_x^{\perp}y := y - \langle y,x\rangle x$ is the orthogonal projector on the tangent space of the sphere $\mathbb{S}^{d - 1}$. The choice to include this projector (that is also used in \cite{burger2025analysis,geshkovski2023mathematical}) is justified by the strong empirical performance of architectures that normalize hidden states at each layer \cite{touvron2023llama}. In particular, the recently adopted normalization RMSNorm \cite{zhang2019root} scales each token to unit norm, effectively projecting representations onto $\mathbb{S}^{d-1}$ after each layer.

The above update formula can be interpreted (see for instance \cite{geshkovski2023mathematical}) as an $h$-step of an explicit Euler scheme, discretizing the following ODE
\begin{equation}
\left\{\begin{array}{l}
\dot{x}_i(t)=P_{x_i(t)}^{\perp}\mathcal{A}(x_i(t); \tilde\theta), \qquad i \in [1, n],\\
x_i(0)=x_i^0,
\end{array}\right.
\end{equation}
where the time $t \in [0, T]$, $T > 0$, embeds the continuous layer evolution. We will refer to this as the \textit{Transformer ODE}. 

The model presented is actually quite different from the architectures used in practice. In order to reduce this gap, we will allow the weight vector $\tilde \theta$ to depend on the time. We consider the following multi-headed fully time-dependent transformer model
\begin{equation}\label{eq:full_transformer_ODE}
\left\{\begin{array}{l}
\dot{x}_i(t) = P_{x_i(t)}^{\perp}\left(\frac{1}{H} \sum_{h=1}^H \sum_{j=1}^n A_{ij}(\theta^{(h)}_t) B_{ij}(\theta^{(h)}_t)\right), \qquad i \in [1, n]\\
\dd \tilde\theta_t = f(\tilde\theta_t, t) \dd t + g(\tilde\theta_t, t) \dd W_t,\\ 
x_i(0)=x_i^0, \, \tilde\theta_0=\theta^0,
\end{array}\right.
\end{equation}

where $W_t$ is a $\xi$-dimensional Brownian motion and $f: \R^\alpha \times\mathbb R \to \R^\alpha$, $g: \R^\alpha \times \mathbb R \to  \mathbb{R}^{\alpha \times \xi}$ are given by
\begin{align}
f(\tilde\theta_t, t)=\Big(f_1(\theta^{(1)}_t,t),\, \dots,\,f_H(\theta^{(H)}_t,t)\Big), \qquad g(\tilde\theta_t, t) = \Big{(}g_1(\theta_t^{(1)}, t), \,\dots, \, g_H(\theta_t^{(H)}, t) \Big{)},
\end{align}
satisfying that $f\in L^1_{\mathrm{loc}}([0,+\infty);\, L_{\mathrm{loc}}^\infty(\mathbb{R}^\alpha;\, \mathbb{R}^\alpha))$, $g \in L^2_{\mathrm{loc}}([0,+\infty);\, L_{\mathrm{loc}}^\infty(\mathbb{R}^\alpha;\, \mathbb{R}^{\alpha \times \xi}))$ and also that\\
$f_h \in L^1_{\mathrm{loc}}([0,+\infty);\, L^\infty_{\mathrm{loc}}(\mathbb{R}^N_\theta;\, \mathbb{R}^N))$, $g_h \in L^2_{\mathrm{loc}}([0,+\infty);\, L_{\mathrm{loc}}^\infty(\mathbb{R}^N;\, \mathbb{R}^{N \times \xi}))$ for every $h \in [1, H]$.

We emphasize that we have modelled the weights evolution as the solution of the SDE $\dd \tilde\theta_t = f(\tilde\theta_t, t) \dd t + g(\tilde\theta_t, t) \dd W_t$.
This can be mathematically modelled as an Itô process and includes a wide class of initialization techniques involving random sampling.
Finally, we underline that all the arguments we will present in the following can be generalized to any Lévy process with some additional technicalities involving its generator.

\subsection{A mean-field formulation for tokens and heads}\label{sec:intro-meanfield}

Now, we will give an equivalent mean-field interpretation of the previous transformer model. Let us consider the empirical measures $\mu_t \in \mathcal{P}_2(\mathbb{S}^{d - 1})$ and $\Theta_t \in \mathcal{P}_2(\R^\alpha)$ given by
\begin{equation*}
    \mu_t := \frac{1}{n} \sum_{i=1}^n \delta_{x_i(t)}\quad \text{and} \quad \Theta_t := \frac{1}{H} \sum_{h = 1}^H \delta_{\theta_t^{(h)}},
\end{equation*}
where $\delta_{x_i(t)}$ is a Dirac delta concentrated at the token $x_i(t)$. Then, recalling \eqref{eq:self_att}, and noticing that $P^\perp_{x}$ is linear in $y$, the first equation of \eqref{eq:full_transformer_ODE} can be rewritten as
\begin{equation}\label{eq:almost_pde_full_ode}
\dot{x}_i(t) = \int_{\R^{\alpha}}\mathcal{V}[\mu_t](x_i(t); \theta) \dd \Theta_t(\theta),
\end{equation}
where, for every $\mu \in \mathcal{P}_2(\mathbb{S}^{d - 1})$, we denote by $\mathcal{V}[\mu]: \mathbb{S}^{d-1} \times \R^\alpha \rightarrow \mathrm{T}\mathbb{S}^{d-1}$ the velocity field given by
\begin{equation}\label{eq:vect_field}
\mathcal{V}[\mu](x; \theta) := P_x^{\perp}\left(\int_{{\mathbb{S}^{d-1}}} A(x, y, \theta) B(x, y; \theta) \dd \mu(y)\right).
\end{equation}
Following \cite{burger2025analysis, geshkovski2023mathematical}, for every function $\varphi \in C_c^\infty(\mathbb{S}^{d - 1} \times (0,+\infty))$, by the chain rule, it holds
\begin{equation}\label{eq:chain_rule}
\frac{\dd}{\dd t} \int_{\mathbb{S}^{d-1}} \varphi(x, t) \dd \mu_t(x) = \int_{\mathbb{S}^{d-1}} \partial_t \varphi(x, t) + \Big{\langle}\nabla_{\mathbb{S}^{d-1}} \varphi(x, t), \dot{x}(t)\Big{\rangle} \dd \mu_t(x),
\end{equation}
where $\nabla_{\mathbb{S}^{d-1}}$ is the Riemannian gradient on the submanifold $\mathbb{S}^{d-1}$ induced by the metric associated with the standard inner product on $\mathbb{R}^d$, given by 
\[
\nabla_{\mathbb{S}^{d - 1}}\varphi(x) := P_x^\perp(\nabla \varphi(x)),
\]
with $\nabla$ indicating the usual gradient in $\mathbb{R}^d$. Then, by \eqref{eq:chain_rule} and \eqref{eq:almost_pde_full_ode}, we get the following weak continuity equation 
\begin{equation}\label{eq:cont_eq_weak}
\int_0^\infty \int_{\mathbb{S}^{d-1}} \partial_t \varphi(x, t) + \left\langle \nabla_{\mathbb{S}^{d-1}} \varphi(x, t), \int_{\R^\alpha} \mathcal{V}\left[\mu_t\right](x, \theta) \dd \Theta_t(\theta)\right\rangle \dd \mu_t(x) \dd t = 0, \, \forall \varphi \in C^\infty_c(\mathbb{S}^{d - 1} \times  (0,+\infty)).
\end{equation}
Finally, considering the second equation in~\eqref{eq:full_transformer_ODE}, since the SDE coefficients $f$ and $g$ depend only on $\tilde\theta_t$ and $t$, applying Itô's formula yields the following weak formulation
\begin{equation}\label{eq:ito_weak}
\begin{aligned}
\int_{0}^{\infty} \int_{\R^\alpha} \partial_t \psi(\theta, t) & + \Big{\langle} \nabla_\theta \psi(\theta, t), f(\theta, t)\Big{\rangle}  -  \nabla_\theta^2\psi(\theta, t): G(\theta, t) \dd \Theta_t(\theta) \dd t = 0, \, \forall \psi \in C^\infty_c(\R^\alpha \times (0,+\infty)),
\end{aligned}
\end{equation}
where $:$ indicates the Frobenius inner product, $G(\theta, t) = \frac{1}{2} g(\theta, t) g(\theta, t)^\top \in \mathbb{R}^{\alpha \times \alpha}$. 
Therefore, for some initial distributions $\mu^0 \in \mathcal{P}_2(\mathbb{S}^{d - 1})$ and $\Theta^0 \in \mathcal{P}_2(\R^\alpha)$, we can rewrite system \eqref{eq:full_transformer_ODE} as (a weak formulation of)
\begin{align}
\left\{\begin{array}{l}
\displaystyle{\partial_t{\mu_t} + \operatorname{div}_{\mathbb{S}^{d - 1}} \left(\mu_t \int_{\R^\alpha} \mathcal{V}[\mu_t](x;\theta) \dd \Theta_t(\theta)\right) = 0,}\\
\displaystyle{\partial_t{\Theta_t}(\theta) + \operatorname{div}_\theta \Big{(}\Theta_t(\theta) f(\theta, t)\Big{)} - \operatorname{div}_\theta\Big{(}\operatorname{div}_\theta(\Theta_t(\theta) G(\theta, t))\Big{)} = 0,}\\
\displaystyle{\mu_0= \mu^0, \, \Theta_0 = \Theta^0.}
\end{array}\right.
\label{eq:continuity_eq_full}
\end{align}
We will refer to the system \eqref{eq:continuity_eq_full} as the \emph{Transformer PDE}.
We also emphasize that the solution of the equation governing the flow of $\Theta_t$ in \eqref{eq:continuity_eq_full} can be obtained \textit{a priori} since $f$ and $g$ are given.

\section{Wasserstein Time-dependent Minimizing Movements}\label{sec:time_dep_GF}
In this section, we employ a minimizing movement construction to derive Wasserstein (weighted) time-dependent gradient flows for the interaction energy defined as follows:
\begin{equation}\label{eq:time_dep_interaction_energy}
\mathcal{E}(\mu, t) = \frac{1}{2} \iint_{\R^\alpha \times \mathbb{S}^{d-1}} \underbrace{\int_{\mathbb{S}^{d-1}} \mathscr{K}(x, y, \theta) \dd \mu(y)}_{:=\mathscr{K}[\mu](x, \theta)} \dd \mu(x) \dd \Theta_t(\theta),
\end{equation}
where $\Theta_t$ is a solution of the second equation in \eqref{eq:continuity_eq_full}.
We make the following assumptions on the kernel $\mathscr{K} : \mathbb{S}^{d-1} \times \mathbb{S}^{d-1} \times \R^\alpha \rightarrow \R$:
\begin{enumerate}[label=(\textbf{K\arabic*})]
    \item\label{hyp:ker4} $\theta \mapsto \mathscr{K}(x, y, \theta)$ is supported in a compact set contained in $ \mathbb{R}^\alpha$ for every $x,y \in \mathbb{S}^{d-1}$. 
    \item\label{hyp:ker1} $\mathscr{K}(\cdot,\cdot, \theta) \in C^{1, 1}(\mathbb{S}^{d-1} \times \mathbb{S}^{d-1})$ for all $\theta \in \mathbb{R}^\alpha$, $\mathscr{K}(x,y, \cdot) \in C^2(\R^\alpha)$ for all $x, y \in \mathbb{S}^{d-1}$.
    \item\label{hyp:ker3} $\mathscr{K}$ is symmetric in the first two entries, that is $\mathscr{K}(x, y, \cdot) = \mathscr{K}(y, x, \cdot)$ for every $x, y \in \mathbb{S}^{d-1}$.
\end{enumerate}
We emphasize that the explicit dependency on time in the definition of $\mathcal{E}$ is expressed through the distribution $\Theta_t$ provided by the second equation in the Transformer PDE \eqref{eq:continuity_eq_full}.

Following \cite{geshkovski2023mathematical, burger2025analysis} the Wasserstein gradient flow will be defined with respect to a weighted Wasserstein distance defined using non-linear mobilities needed to deal with the normalized attention. For this reason, in this section, we first define the weighted metric setting specifying the assumptions required for the non-linear mobility. Then, since $\mathcal{E}$ is a time-dependent functional, we adapt some definitions from \cite{ambrosio2008gradient} to work in this scenario, following closely the theory developed in \cite{ferreira2018gradient, rossi2008metric}. Finally, we will prove the existence of minimizing movements for $\mathcal{E}$, providing the first building block for the next discussion.

\subsection{Time-dependent Gradient Flows}
Let $(X, d)$ be a complete metric space and let 
\[
E:X\times[0, T]\to(-\infty,+\infty], \qquad T > 0,
\]
be such that for every $x\in X$, the map $t\mapsto E(x,t)$ is locally absolutely continuous. 
\begin{defi}[Metric derivative]\label{def:metric_derivative}
For any absolutely continuous curve $v:[0,T]\rightarrow X$, $T > 0$, we define the \emph{metric derivative} of $v$ for a.e. $t\in [0,T]$ as 
\begin{equation}\label{eq:metric_derivative}
\left|\dot{v}\right|(t)=\lim _{h \rightarrow 0} \frac{d(v(t+h), v(t))}{|h|}\,.
 \end{equation}
\end{defi}
\noindent
It is exactly the minimal function $g \in$ $L^1([0,T])$ satisfying the inequality 
\begin{equation}\label{eq:metric_slope_int}
d(v(s), v(r)) \leq \int_s^r g(t)\, d t \quad \forall\,0 \leq s \leq r \leq T\,.
\end{equation}

\begin{defi}[Strong upper gradient]\label{def:Strong_upper_gradient}
A family of functions $\mathcal{G}_t: X \rightarrow [0,+\infty]$ with $t\in[0,T]$ is a \textit{strong upper gradient} for $E\colon X\times[0,T]\to(-\infty,+\infty]$ if, for every absolutely continuous curve $v:[0,T]\rightarrow X$, the map $t\mapsto\mathcal{G}_t(v(t))$ is Borel on $[0,T]$ and it holds
\begin{equation}\label{def:strong_upper_grad}
\left| E(v(r),r) - E(v(s),s) - \int_s^r \partial_t E(v(t),t)\dd t \right| \leq \int_s^r \mathcal{G}_t(v(t))\left|\dot{v}\right|(t)\dd t, \qquad \forall\, 0\leq s\leq r\leq T.
\end{equation}
\end{defi}

\begin{defi}[Time-dependent curves of maximal slope]\label{def:curves_max_slope}
A locally absolutely continuous $u\colon[0,T]\to X$ is a \emph{curve of maximal slope} with respect to a family of functions $\mathcal{G}_t\colon X\rightarrow[0,+\infty]$, $t\in[0,T]$, if $t\mapsto\mathcal{G}_t(u(t))$ is Borel on $[0,T]$, $t\mapsto E(u(t),t)$ is absolutely continuous and, for all $0\leq s\leq r\leq T$, the following energy--dissipation inequality holds
\begin{equation}\label{eq:EDI_td}
E\!\left(u(r), r\right) 
+\frac12\int_s^r |\dot{u}|^2(t)\dd t
+\frac12\int_s^r \mathcal{G}_t^2\!\left(u(t)\right) \dd t
\;\leq\; 
E\!\left(u(s),s\right)+\int_s^r \partial_tE\!\left(u(t), t\right)\dd t.
\end{equation}
When equality holds in \eqref{eq:EDI_td} for every $0\leq s\leq r\leq T$ we say that $u$ satisfies the energy--dissipation identity (EDI).
\end{defi}

\subsection{Weighted Wasserstein distance}\label{sec:w_wass_dist}
Let us consider a Riemannian manifold $\mathcal{M} \subseteq \mathbb{R}^n$, $n \in \mathbb{N}$ with tangent bundle $\mathrm{T}\mathcal{M}$, and let $T > 0$. Then, for every couple of measures $\mu_0$, $\mu_T \in \mathcal{P}_2(\mathcal{M})$, we can define the classical Wasserstein distance through its dynamical (or Benamou-Brenier \cite{benamou2000computational}) formulation as follows:
\begin{align}\label{eq:wass}
{\mathrm{W}}_{2}^2(\mu_0, \mu_T) := \inf\left\{\int_0^T\int_{\mathcal{M}}  \|v_t(x)\|^2  \dd \mu_t(x) \dd t: (\mu_t, v_t) \in \mathrm{CE}((0, T); \mu_0 \rightarrow \mu_T)\right\},
\end{align}
where 
\[
\mathrm{CE}((0, T); \nu \rightarrow \eta) := \{(\mu_t, v_t) \in \mathrm{CE}(0, T): \mu_0 = \nu, \mu_T = \eta\}
\]
and $\mathrm{CE}(0, T)$ is the set of distributional solutions of the continuity equation on the manifold $\mathcal{M}$:
\begin{equation}\label{eq:CE}
\begin{aligned} \mathrm{CE}(0, T) :=\biggl\{(\mu_t, v_t) \in \mathrm{AC}([0, T];\mathcal{P}_2(\mathcal{M})) \times \Gamma([0,T] \times \mathcal{M};\mathrm{T}\mathcal{M})
: & \,\, \dot{\mu}_t = \mathrm{div}_{\mathcal{M}}(\mu_t v_t),\\
& \,\, \int_0^T \int_{\mathcal{M}} \|v_t\| \dd \mu_t(x) \dd t < \infty\biggr\}.\end{aligned}
\end{equation}
Here, $\Gamma([0,T] \times \mathcal{M},\mathrm{T}\mathcal{M})$ is the set of measurable vector fields $v : [0,T] \times \mathcal{M} \rightarrow \mathrm{T}\mathcal{M}$ such that $v_t(x) \in \mathrm{T}_x\mathcal{M}$ for every $x$. Moreover, the  continuity equation $\dot{\mu}_t = \mathrm{div}_{\mathcal{M}}(\mu_t v_t)$ is intended distributionally, that is 
\begin{equation}\label{eq:riem_ce}
\int_0^T\!\!\int_{\mathcal{M}}\partial_t\phi(x,t)
+ \langle \nabla_{\mathcal{M}}\phi(x,t),\, v_t(x)\rangle\,\mathrm d\mu_t(x)\,\dd t
=0
\end{equation}
for all test functions $\phi\in C_c^1\big((0,T)\times \mathcal{M}\big)$.

${\mathrm{W}}_2$ is an actual distance on $\mathcal{P}_2(\mathcal{M})$, so we will call a Wasserstein space the couple $(\mathcal{P}_2(\mathcal{M}), {\mathrm{W}}_2)$. We briefly recall in the following lemma the fundamental properties of this space (see for instance \cite{villani2008optimal, Santambrogio} for a more detailed analysis).
\begin{lemma}\label{lem:W2-prelim}
Let $(\mathcal{M}, d)$ be a Riemannian manifold. Then, the following holds:
\begin{enumerate}
  \item[\textnormal{(i)}] \emph{Lower boundedness and metricity.} 
  for every fixed $\nu\in\mathcal P_2(\mathcal{M})$ the map $\mu\mapsto {\mathrm{W}}_2^2(\mu,\nu)$ is finite and bounded from below on $\mathcal P_2(\mathcal{M})$.

  \item[\textnormal{(ii)}] \emph{Narrow lower semicontinuity.} Suppose $\mu_n,\nu_n\in\mathcal P_2(\mathcal{M})$ satisfy
  \[
  \mu_n \rightharpoonup \mu,\qquad \nu_n \rightharpoonup \nu \quad \text{narrowly in }\mathcal P(\mathcal{M}),
  \]
  and
  \[
  \sup_{n}\int_\mathcal{M} d(x,x_0)^2\,\mathrm d\mu_n(x) < \infty,\qquad 
  \sup_{n}\int_\mathcal{M} d(y,x_0)^2\,\mathrm d\nu_n(y) < \infty
  \]
  for some $x_0 \in \mathcal{M}$. Then, $\mu, \nu \in \mathcal{P}_2(\mathcal{M})$ and
  \[
  {\mathrm{W}}_2^2(\mu,\nu) \leq \liminf_{n\to\infty} {\mathrm{W}}_2^2(\mu_n,\nu_n).
  \]
  In particular, for a fixed $\nu\in\mathcal P_2(\mathcal{M})$, if $\mu_n\rightharpoonup\mu$ narrowly and $\sup_n\int d(x,x_0)^2\,\mathrm d\mu_n<\infty$, then $\mu\in\mathcal P_2(\mathcal{M})$ and $\mu\mapsto {\mathrm{W}}_2^2(\mu,\nu)$ is narrowly lower semicontinuous at $\mu$.

  \item[\textnormal{(iii)}] \emph{Narrow compactness of ${\mathrm{W}}_2$-bounded sets.} Let $A\subset\mathcal P_2(\mathcal{M})$ be ${\mathrm{W}}_2$-bounded, i.e., there exists $\bar \mu\in\mathcal P_2(\mathcal{M})$ and $r<\infty$ such that ${\mathrm{W}}_2(\mu,\bar \mu)\leq r$ for every $\mu\in A$. Then:
  \begin{enumerate}
    \item $A$ has uniformly bounded moment:
    \[
    \sup_{\mu\in A}\int_\mathcal{M} d(x,x_0)^2\,\mathrm d\mu(x) < \infty \quad \text{for any }x_0\in \mathcal{M};
    \]
    \item $A$ is tight, hence relatively compact for the narrow topology by Prokhorov’s theorem. In particular, every sequence in $A$ admits a narrowly convergent subsequence with limit in $\mathcal P_2(\mathcal{M})$.
  \end{enumerate}
\end{enumerate}
\end{lemma}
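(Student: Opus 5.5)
The plan is to prove the three items separately, using two tools: the equivalence between the dynamical formulation \eqref{eq:wass} and the static Kantorovich formulation, and the triangle inequality. First, by Benamou--Brenier on a Riemannian manifold (length space), $\mathrm{W}_2^2(\mu,\nu)=\min_{\gamma\in\Pi(\mu,\nu)}\int d(x,y)^2\dd\gamma$. Everything below is argued on the static side.

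For (i), lower boundedness is immediate, since $\mathrm{W}_2^2\ge 0$. For finiteness, I would take the product coupling $\mu\otimes\nu$ and use $d(x,y)^2\le 2d(x,x_0)^2+2d(y,x_0)^2$. This bounds $\mathrm{W}_2^2(\mu,\nu)$ by twice the sum of the second moments, which is finite on $\mathcal P_2(\mathcal{M})$.

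For (ii), take couplings $\gamma_n$ that are optimal, or $1/n$-optimal, for $(\mu_n,\nu_n)$.
\begin{itemize}
\item The marginals $\mu_n$ and $\nu_n$ converge narrowly, so each family is tight. Hence $\{\gamma_n\}$ is tight on $\mathcal{M}\times\mathcal{M}$.
\item Extract a subsequence along which the $\liminf$ is attained and $\gamma_n\rightharpoonup\gamma$. The limit $\gamma$ has marginals $\mu$ and $\nu$.
\item The cost $d^2$ is continuous and bounded below. Approximating it from below by $\min(d^2,k)$ and using the portmanteau theorem gives $\int d^2\dd\gamma\le\liminf\int d^2\dd\gamma_n$.
\item That $\mu,\nu\in\mathcal P_2$ follows from lower semicontinuity of $\int d(x,x_0)^2\dd\mu$, again via truncation, together with the uniform moment bound.
\end{itemize}
The special case with $\nu$ fixed follows by taking $\nu_n=\nu$.

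For (iii)(a), apply the triangle inequality with $\delta_{x_0}$:
\[
\Big(\int d(x,x_0)^2\dd\mu\Big)^{1/2}=\mathrm{W}_2(\mu,\delta_{x_0})\le r+\mathrm{W}_2(\bar\mu,\delta_{x_0}),
\]
which is uniform in $\mu\in A$.

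For (iii)(b), I would use Markov's inequality: $\mu(\{d(x,x_0)>R\})\le C/R^2$. Closed bounded sets of a complete Riemannian manifold are compact by Hopf--Rinow, so this gives tightness, and Prokhorov's theorem then gives relative narrow compactness. The fact that the limit lies in $\mathcal P_2$ comes from the moment lower semicontinuity used in (ii).

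I expect the main obstacle to be the step needing completeness of $\mathcal{M}$. For (iii)(b), Hopf--Rinow requires a complete manifold. I would assume this implicitly; it holds in the only case used in the paper, $\mathbb S^{d-1}$, which is compact, so (iii) is trivial there. A secondary point is passing from the dynamical definition to the static one, which I would cite from \cite{villani2008optimal, Santambrogio} rather than reprove.
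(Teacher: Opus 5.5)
Your proposal is correct and uses the standard textbook arguments: the static Kantorovich reformulation, tightness of near-optimal couplings together with lower semicontinuity of $\int d^2\,\mathrm d\gamma$, the triangle inequality through $\delta_{x_0}$, and Markov's inequality combined with Heine--Borel. This is exactly what the paper relies on, since it gives no proof of its own and refers to Villani and Santambrogio. Your caveat on (iii)(b) is necessary, not a mere technicality: on the incomplete manifold $\mathcal{M}=(0,1)$, the family $\{\delta_{1/n}\}_{n\ge 2}$ lies in a $\mathrm{W}_2$-ball around $\delta_{1/2}$ but is not tight, so completeness must be assumed. This costs nothing for the paper, which uses the lemma only on the compact sphere $\mathbb{S}^{d-1}$.
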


As highlighted in \cite{geshkovski2023mathematical}, because of the presence of the normalization factor $\mathcal{N}$, it is not possible to describe \eqref{eq:full_transformer_ODE} as the gradient flow of a functional in a classical Wasserstein space. In order to overcome this issue, following \cite{burger2025analysis}, we will embed $\mathcal{N}$ directly in the underlying metric considering a \emph{non-linear anisotropic mobility} $m: \mathcal{M} \times \mathbb{R}^\alpha \to \mathbb{R}^+$ as 
\begin{equation}\label{eq:m}
m_\mu(x, \theta) = \int_{\mathcal{M}} |\mathfrak{M}(x, y, \theta)| \dd \mu_t(y),
\end{equation}
where $\mathfrak{M}: \mathcal{M} \times \mathcal{M} \times \R^\alpha \to \mathbb{R}$. 
In the following, we will assume $\mathfrak{M}$ in \eqref{eq:m} satisfying \ref{hyp:ker1} and the following assumption:
\begin{enumerate}[label=(\textbf{M})]
    \item\label{hyp:ker2} there exists $c, \bar c>0$ such that $c \leqslant \mathfrak{M}(x, y, \theta) \leqslant \bar c$ for every $x,y \in \mathcal{M}$, $\theta\in \R^\alpha$.
\end{enumerate}

Let us now call $\pi_1: (p, q) \mapsto p$, $\pi_2: (p, q) \mapsto q$ the projections on the first and second component, respectively, and $\iota: p \mapsto (p, \pmb{0})$ the inclusion map such that $\pi_1 \circ \iota = \mathrm{Id}$. We are now ready to define a Wasserstein distance weighted by the mobility $m$ defined in \eqref{eq:m}.

\begin{defi}
For any $t \mapsto \Xi_t \in AC^2\!\big((0,T);\mathcal{P}_2(\R^\alpha)\big)$, we define the \emph{weighted Wasserstein distance} $\mathrm{W}_{m,2}(\cdot,\cdot\,;\Xi)$ between $\mu_0, \mu_T \in \mathcal{P}_2(\mathcal{M})$ as
\begin{align}\label{eq:weigh_wass}
{\mathrm{W}}_{m, 2}^2(\mu_0, \mu_T;\Xi) := \inf\Big\{\int_0^T\iint_{\mathcal{M} \times \R^\alpha} m_\mu(x, \theta) \|\pi_1(v_t(x, \theta))\|^2  & \dd \Xi_t(\theta) \dd \mu_t(x) \dd t: \\
& (\mu_t, v_t) \in \mathrm{CE}_{\theta}((0, T); \mu_0 \rightarrow \mu_T)\Big\},
\end{align}
where
\[
\mathrm{CE}_\theta((0, T); \nu \rightarrow \eta) := \{(\mu_t, v_t) \in \mathrm{CE}_\theta(0, T): \mu_0 = \nu, \mu_T = \eta\}
\]
and
\begin{equation}\label{eq:CE_theta}
\begin{aligned}
\mathrm{CE}_\theta(0,T):=\biggl\{
(\mu_t,v_t)\in&\;\mathrm{AC}(0,T;\mathcal{P}_2(\mathcal{M}))
\times\Gamma([0,T]\times\mathcal{M}\times\R^\alpha,
\mathrm{T}(\mathcal{M}\times \R^\alpha)):\\
&\dot{\mu}_t(x)
=\mathrm{div}_{\mathcal{M}}^x\biggl(
\mu_t(x)\int_{\R^\alpha}
\pi_1(v_t(x,\theta))\,d\Xi_t(\theta)\biggr),\\
&\int_0^T\!\int_{\mathcal{M}}
\biggl\|\int_{\R^\alpha}
\pi_1(v_t(x,\theta))\,\dd\Xi_t(\theta)
\biggr\|\,\dd\mu_t(x)\,\dd t<\infty
\biggr\}.
\end{aligned}
\end{equation}
\end{defi}
Note that we consider vector fields $v_t \in \Gamma([0,T]\times\mathcal{M} \times \R^\alpha, \mathrm{T}(\mathcal{M} \times \R^\alpha))$ that map between the product manifold $\mathcal{M}\times \R^\alpha$ and its tangent bundle. This will be useful to modify the metric depending on the weights. Due to the action of the projection $\pi_1$, the continuity equation above can be interpreted distributionally in $\mathcal{M}$ as in \eqref{eq:riem_ce}. The semicolon notation emphasizes that $\Xi$
enters as a \emph{parameter}: the distance
$\mathrm{W}_{m,2}(\cdot,\cdot\,;\Xi)$ is a metric on
$\mathcal{P}_2(\mathcal{M})$ for each fixed choice of $\Xi$. We note that, when
$\Xi=\delta_{\bar\theta}$ is a Dirac mass, the definition reduces to
the classical weighted Wasserstein distance of
\cite{dolbeault2009new,burger2025analysis} with mobility
$m_\mu(x,\bar\theta)$.

A wider analysis of weighted Wasserstein distances can be found in \cite{dolbeault2009new}, where such objects were first studied for $\mu_0,\mu_T \in \mathcal{P}_2(\mathbb{R}^n)$.

\begin{remark}
We first observe that for any couple $(\mu, v) \in \mathrm{CE}(0, T)$, the vector field defined as $\tilde v_t(x,\theta) = \iota(v_t(x))$ for every  $\theta$ is such that $(\mu, \tilde v) \in \mathrm{CE}_\theta(0, T)$. In particular, this implies that $\mathrm{CE}_\theta(0, T) \ne \emptyset$.
Moreover, we emphasize that the choice $\mathfrak{M} \equiv 1$ trivially satisfies \ref{hyp:ker1}, \ref{hyp:ker2} and, in this case, if we restrict the minimization problem to vector fields of the form $\tilde v_t(x,\theta) = \iota(v_t(x))$
we immediately recover the classical Wasserstein distance $\mathrm{W}_2$. 
\end{remark}

Thanks to \ref{hyp:ker1} and \ref{hyp:ker2}, we can adapt the arguments presented in \cite{burger2025analysis} and conclude the existence of a couple $(\mu, v) \in \mathrm{CE}_\theta(0,T)$ for which the infimum in \eqref{eq:weigh_wass} is actually attained. Moreover, it can also be proven that $\mathrm{W}_{m,2}(\cdot,\cdot\,;\Xi)$ is an actual distance on $\mathcal{P}_2(\mathcal{M})$.

To conclude, we remark that the introduction of the multi-headed
architecture does not allow for a trivial generalization of the
argument presented in \cite{burger2025analysis} to prove the
equivalence of $\mathrm{W}_{m,2}(\cdot,\cdot\,;\Xi)$ and
$\mathrm{W}_2$ and the consequent completeness of
$(\mathcal{P}_2(\mathcal{M}),\mathrm{W}_{m,2}(\cdot,\cdot\,;\Xi))$.
We then explicitly prove these properties in the following proposition.

\begin{prop}\label{prop:equiv_of_norms}
Let $\mathrm{W}_2$ and $\mathrm{W}_{m,2}(\cdot,\cdot\,;\Xi)$ be
as in \eqref{eq:wass} and \eqref{eq:weigh_wass}, respectively.
Then, there exist two positive constants $C_1$, $C_2$ such that
\begin{equation}\label{eq:equiv_wass_wei_wass}
C_1\,\mathrm{W}_2(\mu,\nu)
\leq\mathrm{W}_{m,2}(\mu,\nu;\Xi)
\leq C_2\,\mathrm{W}_2(\mu,\nu),
\qquad\forall\mu,\nu\in\mathcal{P}_2(\mathcal{M}).
\end{equation}
Moreover,
$(\mathcal{P}_2(\mathcal{M}),
\mathrm{W}_{m,2}(\cdot,\cdot\,;\Xi))$
is complete.
\end{prop}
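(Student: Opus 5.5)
The plan is to compare the two dynamic formulations directly, using only the pointwise bounds on the mobility coming from \ref{hyp:ker2}. Since $\mu_t$ is a probability measure, \ref{hyp:ker2} gives
\[
c \leq m_{\mu_t}(x,\theta) \leq \bar c
\qquad \text{for all } x,\theta,t .
\]

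\textbf{Upper bound.} Take any admissible pair $(\mu_t,v_t)\in\mathrm{CE}((0,T);\mu\to\nu)$, for instance a $\mathrm{W}_2$-geodesic. By the Remark above, its lift $\tilde v_t(x,\theta)=\iota(v_t(x))$ belongs to $\mathrm{CE}_\theta((0,T);\mu\to\nu)$. This uses that $\Xi_t$ is a probability measure, so that $\int \pi_1(\tilde v_t)\,\dd\Xi_t = v_t$. The weighted cost of the lift is then
\[
\int_0^T\!\!\iint m_{\mu_t}\|v_t\|^2\,\dd\Xi_t\,\dd\mu_t\,\dd t\;\leq\;\bar c\int_0^T\!\!\int\|v_t\|^2\,\dd\mu_t\,\dd t .
\]
Taking the infimum over $(\mu_t,v_t)$ gives $\mathrm{W}_{m,2}^2\leq \bar c\,\mathrm{W}_2^2$, so one can take $C_2=\sqrt{\bar c}$.

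\textbf{Lower bound.} Take any $(\mu_t,v_t)\in\mathrm{CE}_\theta((0,T);\mu\to\nu)$ with finite weighted cost, and set
\[
\bar v_t(x):=\int\pi_1(v_t(x,\theta))\,\dd\Xi_t(\theta).
\]
This is a measurable tangent field at $x$, since $\mathrm{T}_x\mathcal M$ is a linear space. The pair $(\mu_t,\bar v_t)$ solves the continuity equation in \eqref{eq:riem_ce} and satisfies the integrability condition, so it lies in $\mathrm{CE}((0,T);\mu\to\nu)$. By Jensen's inequality (Cauchy--Schwarz with respect to the probability measure $\Xi_t$),
\[
\|\bar v_t(x)\|^2\leq\int\|\pi_1 v_t(x,\theta)\|^2\,\dd\Xi_t(\theta)\leq \frac1c\int m_{\mu_t}(x,\theta)\|\pi_1 v_t(x,\theta)\|^2\,\dd\Xi_t(\theta).
\]
Integrating in $x$ and $t$ gives $\mathrm{W}_2^2\leq c^{-1}\times(\text{weighted cost})$. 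Taking the infimum yields $C_1=\sqrt c$.

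One could sharpen this using the harmonic-mean mobility $b^{\Xi_t}_{\mu}$, but the crude bound suffices here. The point to check carefully is the admissibility of $\bar v_t$, namely measurability and $\mu_t$-integrability. Measurability follows from Fubini. Integrability follows from Cauchy--Schwarz and the finiteness of the weighted cost, given the lower bound $c>0$.

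\textbf{Completeness.} This is the step I expect to be the main obstacle, because $\mathrm{W}_{m,2}$ is not known a priori to be lower semicontinuous. Let $(\mu_k)$ be Cauchy for $\mathrm{W}_{m,2}(\cdot,\cdot;\Xi)$. By \eqref{eq:equiv_wass_wei_wass} it is Cauchy for $\mathrm{W}_2$. Since $(\mathcal P_2(\mathcal M),\mathrm W_2)$ is complete (for the sphere, which is compact, this is standard), there is a $\mu$ with $\mathrm{W}_2(\mu_k,\mu)\to0$. The upper bound then gives $\mathrm{W}_{m,2}(\mu_k,\mu;\Xi)\leq C_2\mathrm W_2(\mu_k,\mu)\to0$. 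This is a direct consequence of the two-sided equivalence together with the fact, stated before the proposition, that $\mathrm{W}_{m,2}$ is a metric, so no lower semicontinuity of $\mathrm{W}_{m,2}$ is actually required. Completeness of $\mathrm{W}_2$ on a general $\mathcal M$ rests on the standard theory, using Lemma~\ref{lem:W2-prelim}(iii) for tightness and (ii) for identification of the limit.
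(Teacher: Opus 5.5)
Your proposal is correct and follows essentially the same route as the paper. The upper bound comes from the $\theta$-constant lift $\iota(w_t)$ of a $\mathrm{W}_2$-admissible pair together with $m\leq\bar c$. The lower bound comes from the $\Xi_t$-averaged field $\bar v_t$, Jensen's inequality and $m\geq c$. This gives the same constants $C_1=\sqrt{c}$ and $C_2=\sqrt{\bar c}$, and completeness is then transferred from $(\mathcal{P}_2(\mathcal{M}),\mathrm{W}_2)$ through the bi-Lipschitz equivalence exactly as in the paper, with no lower semicontinuity of $\mathrm{W}_{m,2}$ required.
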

\begin{proof}
    By assumption \ref{hyp:ker2} there exist
constants $0<c\leq\bar{c}<\infty$ such that
\begin{equation}\label{eq:m_bounds}
c\leq m_\mu(x,\theta)\leq\bar{c}
\qquad\forall\,\mu\in\mathcal{P}_2(\mathcal{M}),\;
(x,\theta)\in\mathcal{M}\times\R^\alpha.
\end{equation}

\medskip
\noindent\textbf{Step 1: lower bound.}
Let
$(\mu_t,v_t)\in\mathrm{CE}_\theta((0,T);\mu_0\to\mu_T)$.
Define the averaged velocity
\[
\bar{v}_t(x)
:=\int_{\R^\alpha}\pi_1(v_t(x,\theta))\,\dd\Xi_t(\theta).
\]
By the definition of $\mathrm{CE}_\theta(0,T)$, this is well-defined
for a.e.\ $(x,t)$ and satisfies
$\int_0^T\int_{\mathcal{M}}\|\bar{v}_t(x)\|\,\dd\mu_t(x)\dd t
<\infty$.
Since the continuity equation in $\mathrm{CE}_\theta(0,T)$ reads
$\dot{\mu}_t=\mathrm{div}_{\mathcal{M}}^x(\mu_t\,\bar{v}_t)$,
we have
$(\mu_t,\bar{v}_t)\in\mathrm{CE}((0,T);\mu_0\to\mu_T)$.
By the lower bound $m_\mu\geq c$ and Jensen's inequality
(since $\Xi_t$ is a probability measure and $\|\cdot\|^2$ is
convex), we get:
\[
\begin{aligned}
\iint_{\mathcal{M}\times\R^\alpha}
m_{\mu_t}\|\pi_1(v_t)\|^2\,d\Xi_t\,d\mu_t
&\geq c\int_{\mathcal{M}}
\int_{\R^\alpha}\|\pi_1(v_t(x,\theta))\|^2
\,d\Xi_t(\theta)\,\dd\mu_t(x)\\
&\geq c\int_{\mathcal{M}}
\bigg\|\int_{\R^\alpha}
\pi_1(v_t(x,\theta))\,\dd\Xi_t(\theta)
\bigg\|^2 \dd\mu_t(x)
=c\int_{\mathcal{M}}
\|\bar{v}_t(x)\|^2\,\dd\mu_t(x).
\end{aligned}
\]
Integrating over $[0,T]$, we obtain
\[
\int_0^T\!\iint_{\mathcal{M}\times\R^\alpha}
m_{\mu_t}\|\pi_1(v_t)\|^2\,\dd\Xi_t\,\dd\mu_t\, \dd t
\geq c\int_0^T\!\int_{\mathcal{M}}
\|\bar{v}_t\|^2\,\dd\mu_t\,\dd t
\geq c\,\mathrm{W}_2^2(\mu_0,\mu_T),
\]
where the last inequality holds because
$\mathrm{W}_2^2$ is the infimum over all pairs in
$\mathrm{CE}$ and $(\mu_t,\bar{v}_t)$ is one such pair.
Since this holds for every
$(\mu_t,v_t)\in\mathrm{CE}_\theta((0,T);\mu_0\to\mu_T)$,
taking the infimum yields
$\mathrm{W}_{m,2}^2(\mu_0,\mu_T;\Xi)
\geq c\,\mathrm{W}_2^2(\mu_0,\mu_T)$.

\medskip
\noindent\textbf{Step 2: upper bound.}
Let $(\mu_t,w_t)\in\mathrm{CE}((0,T);\mu_0\to\mu_T)$ be a pair
such that
\[
\int_0^T\!\int_{\mathcal{M}}\|w_t(x)\|^2\,\dd\mu_t(x)\,\dd t<\infty.
\]
Define $v_t(x,\theta):=\iota(w_t(x))$ for every $\theta$, so
that $\pi_1(v_t(x,\theta))=w_t(x)$ is independent of $\theta$.
Since $\Xi_t$ is a probability measure, we have
\[
\int_{\R^\alpha}\pi_1(v_t(x,\theta))\,\dd\Xi_t(\theta)
=w_t(x)\int_{\R^\alpha}\dd\Xi_t(\theta)=w_t(x),
\]
so the continuity equation
$\dot{\mu}_t=\mathrm{div}_{\mathcal{M}}^x
(\mu_t\int_{\R^{\alpha}}\pi_1(v_t)\,d\Xi_t)$ reduces to
$\dot{\mu}_t=\mathrm{div}_{\mathcal{M}}^x(\mu_t\,w_t)$, which
holds by assumption. The integrability condition in
$\mathrm{CE}_\theta$ is similarly inherited from $\mathrm{CE}$. Therefore,
$(\mu_t,v_t)\in\mathrm{CE}_\theta((0,T);\mu_0\to\mu_T)$.
For this $\theta$-constant test field, the upper bound
$m_\mu\leq\bar{c}$ gives
\begin{align}
\iint_{\mathcal{M}\times\R^\alpha}
m_{\mu_t}(x,\theta)\,\|\pi_1(v_t(x,\theta))\|^2
\,\dd\Xi_t(\theta)\,\dd\mu_t(x)
&=\int_{\mathcal{M}}\|w_t(x)\|^2
\int_{\R^{\alpha}}m_{\mu_t}(x,\theta)
\,\dd\Xi_t(\theta)\,\dd\mu_t(x)\\
&\leq\bar{c}\int_{\mathcal{M}}\|w_t(x)\|^2\,\dd\mu_t(x).
\end{align}
Integrating over $[0,T]$ and taking the infimum over
$(\mu_t,w_t)\in\mathrm{CE}((0,T);\mu_0\to\mu_T)$ yields
\[
\mathrm{W}_{m,2}^2(\mu_0,\mu_T;\Xi)
\leq\bar{c}\,\mathrm{W}_2^2(\mu_0,\mu_T).
\]

Combining Steps~1 and~2,
\[
c\,\mathrm{W}_2^2(\mu_0,\mu_T)
\leq\mathrm{W}_{m,2}^2(\mu_0,\mu_T;\Xi)
\leq\bar{c}\,\mathrm{W}_2^2(\mu_0,\mu_T),
\]
which gives \eqref{eq:equiv_wass_wei_wass} with
$C_1:=\sqrt{c}$ and $C_2:=\sqrt{\bar{c}}$. The completeness of
$(\mathcal{P}_2(\mathcal{M}),
\mathrm{W}_{m,2}(\cdot,\cdot\,;\Xi))$ follows
immediately from the completeness of
$(\mathcal{P}_2(\mathcal{M}),\mathrm{W}_2)$ and the equivalence
\eqref{eq:equiv_wass_wei_wass}.

\end{proof}


The following lemma extends \cite[Lemma~2.4]{burger2025analysis} to the
time-dependent multi-head setting. Its key idea is the
identification of the local length structure induced by
$\mathrm W_{m,2}(\cdot,\cdot;\Xi)$ with a weighted Wasserstein structure
on $\S^{d-1}$ in the sense of \cite{dolbeault2009new}, whose effective
mobility is the harmonic mean of $m_\mu(x,\cdot)$ against $\Xi_t$. The
same harmonic mean rescaling will reappear in
Lemma~\ref{lem:chain_rule}, where it arises from the same 
minimization principle (see Remark~\ref{rmk:harmonic_mean_metric} below).

\begin{lemma}[Metric derivative]\label{lem:metric_der}
Let $(\Xi_t)_{t\in(0,T)}\in AC^2((0,T);\mathcal P_2(\R^\alpha))$, and
assume that $m_\mu$ satisfies~\ref{hyp:ker2}. For
$\mu\in\mathcal P_2(\S^{d-1})$, define the \emph{effective mobility}
\begin{equation}\label{eq:effective_mobility}
    b_\mu^{\Xi_t}(x)
    :=
\biggl(\int_{\R^\alpha}\frac{1}{m_\mu(x,\theta)}\,\dd\Xi_t(\theta)\biggr)^{-1},
    \qquad x\in\S^{d-1}.
\end{equation}
Let $(\mu_t)_{t\in(0,T)}\subset\mathcal P_2(\S^{d-1})$ be absolutely
continuous with respect to $\mathrm W_{m,2}(\cdot,\cdot;\Xi)$.
Then there exists a Borel velocity
field $
v\in\Gamma([0,T]\times\S^{d-1}\times\R^\alpha,
\mathrm T(\S^{d-1}\times\R^\alpha))$
such that $(\mu,v)\in\CE_\theta(0,T)$ and, for a.e.\ $t\in(0,T)$,
\begin{equation}\label{eq:metric_derivative_charac_correct}
    |\dot\mu_t|^2
    =
    \int_{\S^{d-1}}
    b_{\mu_t}^{\Xi_t}(x)
    \biggl\|
    \int_{\R^\alpha}
    \pi_1(v_t(x,\theta))
    \,\dd\Xi_t(\theta)
    \biggr\|^2
    \,\dd\mu_t(x).
\end{equation}
Moreover, $v_t$ can be chosen in the horizontal form
\begin{equation}\label{eq:horizontal_form}
    \pi_1(v_t(x,\theta))
    =
    \frac{b_{\mu_t}^{\Xi_t}(x)\,\bar v_t(x)}
    {m_{\mu_t}(x,\theta)},
    \qquad
    \pi_2(v_t(x,\theta))=0,
\end{equation}
for some Borel vector field $\bar v_t(x)\in T_x\S^{d-1}$.

Conversely, if $(\mu,v)\in\CE_\theta(0,T)$ satisfies
\begin{equation}\label{eq:integrability_assumption}
    \int_0^T
    \biggl(
    \iint_{\S^{d-1}\times\R^\alpha}
    m_{\mu_t}(x,\theta)
    \|\pi_1(v_t(x,\theta))\|^2
    \,\dd\Xi_t(\theta)\,\dd\mu_t(x)
    \biggr)^{1/2}
    \dd t
    <+\infty,
\end{equation}
then $t\mapsto\mu_t$ is absolutely continuous with respect to
$\mathrm W_{m,2}(\cdot,\cdot;\Xi)$ 
and for a.e.\ $t\in(0,T)$,
\begin{equation}\label{eq:metric_derivative_upper_correct}
    |\dot\mu_t|^2
    \leq
    \int_{\S^{d-1}}
    \!\!b_{\mu_t}^{\Xi_t}(x)
    \biggl\|
    \int_{\R^\alpha}
    \pi_1(v_t(x,\theta))
    \dd\Xi_t(\theta)
    \biggr\|^2
    \!\!\dd\mu_t(x)
    \leq
    \iint_{\S^{d-1}\times\R^\alpha}
    m_{\mu_t}(x,\theta)
    \|\pi_1(v_t(x,\theta))\|^2
\!\dd\Xi_t(\theta)\!\dd\mu_t(x).
\end{equation}
\end{lemma}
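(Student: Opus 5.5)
The plan is to reduce everything to a pointwise (in $x$) constrained quadratic minimization, and then import the characterization of metric derivatives for weighted Wasserstein distances with scalar mobility from \cite{dolbeault2009new} and \cite[Lemma~2.4]{burger2025analysis}.

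\textbf{Key algebraic step.} Fix $t$ and $x$, and a target vector $\bar w\in T_x\S^{d-1}$. Consider the problem of minimizing
$$\int m_{\mu}(x,\theta)\|u(\theta)\|^2\,\dd\Xi_t(\theta)$$
over $u\in L^2(\Xi_t;T_x\S^{d-1})$ subject to $\int u\,\dd\Xi_t=\bar w$. By \ref{hyp:ker2} the weight is bounded above and below, so the problem is well posed. A Lagrange multiplier (equivalently, Cauchy--Schwarz) argument gives:
\begin{itemize}
\item the minimizer is $u^*(\theta)=b(x)\bar w/m_\mu(x,\theta)$;
\item the minimal value is exactly $b_\mu^{\Xi_t}(x)\|\bar w\|^2$.
\end{itemize}
Concretely, writing $\|\bar w\|=\|\int u\,m^{1/2}m^{-1/2}\dd\Xi_t\|$ yields
$$\|\bar w\|^2\le \Big(\int m\|u\|^2\dd\Xi_t\Big)\Big(\int m^{-1}\dd\Xi_t\Big),$$
with equality for $u^*$. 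Integrating against $\mu_t$ shows that, for any $(\mu,v)\in\CE_\theta$ with averaged field $\bar v_t=\int\pi_1 v_t\,\dd\Xi_t$,
$$\int b\|\bar v_t\|^2\dd\mu_t\le\iint m\|\pi_1 v_t\|^2\dd\Xi_t\dd\mu_t,$$
with equality for the horizontal field \eqref{eq:horizontal_form}.

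\textbf{Consequence.} $\mathrm W_{m,2}(\cdot,\cdot;\Xi)$ coincides with a Dolbeault--Nazaret--Savaré type distance. This holds provided the time dependence of $\Xi$ in the definition is handled by a reparametrization. This is the subtle point: the distance integrates over an internal time with $\Xi_t$ indexed by it, so I would argue locally, comparing on small intervals $[t,t+h]$ with $\Xi_s$ frozen near $\Xi_t$. The justification is that $b_\mu^{\Xi_s}$ is continuous in $s$ in $\mathrm W_2$, because $1/m_\mu(x,\cdot)$ is Lipschitz and bounded by \ref{hyp:ker1}--\ref{hyp:ker2}.

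\textbf{Converse part.} Given $(\mu,v)$ satisfying \eqref{eq:integrability_assumption}, restrict $v$ to a subinterval $[s,r]$ and reparametrize to use it as a competitor in the definition. This gives
$$\mathrm W_{m,2}(\mu_s,\mu_r;\Xi)\le\int_s^r\Big(\iint m\|\pi_1 v\|^2\Big)^{1/2}\dd t.$$
Hence $\mu$ is absolutely continuous, and $|\dot\mu_t|^2\le\iint m\|\pi_1 v_t\|^2$ almost everywhere by minimality of the metric derivative \eqref{eq:metric_slope_int}. Replacing $v_t$ by the horizontal field built from $\bar v_t$ leaves the continuity equation unchanged, since the $\Xi_t$-average is still $\bar v_t$, and lowers the cost to $\int b\|\bar v_t\|^2\dd\mu_t$. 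This gives the first inequality in \eqref{eq:metric_derivative_upper_correct}.

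\textbf{Direct part.} Let $\mu$ be absolutely continuous. By Proposition~\ref{prop:equiv_of_norms} it is also $\mathrm W_2$-absolutely continuous, so by \cite[Thm.~8.3.1]{ambrosio2008gradient} (on the sphere) there is a tangent field $w_t$ solving the continuity equation with $\|w_t\|_{L^2(\mu_t)}\le C|\dot\mu_t|$. To obtain the sharp identity I would work with the scalar-mobility geometry with mobility $b_{\mu}^{\Xi_t}$. Following \cite{dolbeault2009new,burger2025analysis}:
\begin{itemize}
\item Among all fields $\bar v$ solving the continuity equation, select the one minimizing $\int b\|\bar v\|^2\dd\mu_t$. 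This is the projection onto the closure of $\{b^{-1}\nabla\varphi\}$ in $L^2(b\,\mu_t)$, which exists by strict convexity and the bounds on $b$.
\item Show its cost equals $|\dot\mu_t|^2$. The inequality ``$\le$'' follows from the converse part.
\item For ``$\ge$'', take optimal curves for $\mathrm W_{m,2}(\mu_t,\mu_{t+h};\Xi)$, make them horizontal via the algebraic step (which only decreases cost), and pass to the limit $h\to0$ in the difference quotients against test functions $\varphi$ in the duality pairing.
\end{itemize}
Then define $v_t$ by \eqref{eq:horizontal_form}; it lies in $\CE_\theta$ and satisfies \eqref{eq:metric_derivative_charac_correct}.

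\textbf{Main obstacle.} The hardest step is the lower bound $|\dot\mu_t|^2\ge\int b\|\bar v_t\|^2$ in the limit. It requires lower semicontinuity of $\int b_{\mu}^{\Xi}\|\bar v\|^2$ under the joint narrow convergence of $\mu$ and $\Xi$ along the blow-up. I would handle it through the dual (Benamou--Brenier) representation
$$\sup_\varphi\Big(2\int\langle\nabla\varphi,\bar v\rangle\dd\mu-\int b^{-1}\|\nabla\varphi\|^2\dd\mu\Big),$$
which is a supremum of continuous functionals. This requires continuity of $b$ in $(\mu,\Xi)$, which follows from \ref{hyp:ker1} and the uniform bounds \eqref{eq:m_bounds}.
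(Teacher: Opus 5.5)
Your argument is correct in substance, but for the direct part it takes a different route from the paper.

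\textbf{What you share with the paper.} Three ingredients are common to both proofs:
\begin{itemize}
\item the fiberwise constrained minimization (your Cauchy--Schwarz form is the paper's Lagrange-multiplier computation, with optimal profile $b\bar w/m$ and value $b\|\bar w\|^2$);
\item the Lipschitz dependence of $b^{\Xi}_\mu$ on $\Xi$ via Kantorovich--Rubinstein;
\item freezing $\Xi$ on short windows.
\end{itemize}

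\textbf{How the paper proves the direct part.} The paper never proves $|\dot\mu_t|^2=\int b^{\Xi_t}_{\mu_t}\|\bar v_t\|^2\dd\mu_t$ from scratch. For each frozen time $\bar t$ it invokes the autonomous result \cite[Lemma~2.4]{burger2025analysis} with scalar mobility $b^{\Xi_{\bar t}}_\mu$. It transfers this to $\mathrm W_{m,2}$ by the squeeze $(1-\omega)\mathrm W^{2,\bar t}_{m,2}\le\mathrm W^2_{m,2}\le\mathrm W^{2,\bar t}_{m,2}/(1-\omega)$. This leaves a $\bar t$-indexed family of fields $\bar v^{\bar t}$ whose diagonal is neither obviously Borel nor a solution of the continuity equation. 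The paper repairs this with a piecewise-frozen approximation, gluing, weak $L^2$ compactness and lower semicontinuity.

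\textbf{How your route differs.} You redo the Dolbeault--Nazaret--Savar\'e / AGS duality argument directly in the time-dependent setting:
\begin{itemize}
\item select the minimal-cost field, i.e.\ the projection onto the closure of $\{b^{-1}\nabla\varphi\}$;
\item get $|\dot\mu_t|^2\le$ cost from the converse;
\item get the reverse inequality by bounding difference quotients of $\int\varphi\dd\mu_t$ along near-optimal connecting curves and taking the supremum in the dual representation.
\end{itemize}
Here $\Xi$ enters only through continuity of $(\mu,\Xi)\mapsto b^\Xi_\mu$.

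\textbf{What each approach buys.} The paper's route is shorter because the duality estimate is outsourced to an existing result. Yours is self-contained, and it can avoid the paper's gluing step entirely if you perform the selection globally. Concretely, minimize $\int_0^T\!\int b^{\Xi_t}_{\mu_t}\|\bar v_t\|^2\dd\mu_t\dd t$ over the closed affine set of $\bar v$ with $(\mu,\bar v)\in\CE(0,T)$. If $\mu$ is only absolutely continuous rather than $AC^2$, weight this in time by a positive factor such as $(1+|\dot\mu|(t))^{-1}$. Since the constraint decouples in $t$, this yields the pointwise minimal field as one jointly Borel field. You should say this explicitly: a $t$-by-$t$ selection reintroduces exactly the measurability issue the paper works around.

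\textbf{One caveat in your converse.} Reparametrizing the restricted curve to constant speed changes the time at which $\Xi$ is evaluated. So you cannot reuse $v$ as it stands; the $\Xi$-average defining the continuity equation changes. Instead, re-lift $\bar v$ horizontally with respect to the new times. This gives $\mathrm W_{m,2}(\mu_s,\mu_r;\Xi)\le\bigl(1+O(\sup_{|\tau-\tau'|\le r-s}\mathrm W_2(\Xi_\tau,\Xi_{\tau'}))\bigr)\int_s^r(\cdots)^{1/2}\dd t$, not the exact inequality you wrote. The error term vanishes as $r\to s$, so it is harmless for the a.e.\ bound on $|\dot\mu_t|$. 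Absolute continuity follows in any case, with a constant factor, from $c\le b\le\bar c$ and Proposition~\ref{prop:equiv_of_norms}.
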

\begin{proof}
    By Proposition~\ref{prop:equiv_of_norms} and \eqref{eq:m_bounds},
\begin{equation}\label{eq:b_bounds}
    c\leq b_\mu^{\Xi_t}(x)\leq\overline c
\end{equation}
uniformly in $\mu$, $x$, and $t$. 
Now, fix $t\in(0,T)$, $x\in\S^{d-1}$, $\mu\in\mathcal P_2(\S^{d-1})$, and
$\bar w\in T_x\S^{d-1}$. Consider the constrained quadratic
minimization
\begin{equation}\label{eq:fiberwise_min}
    \min\biggl\{
    \int_{\R^\alpha}
    m_\mu(x,\theta)
    \|U(\theta)\|^2
    \,\dd\Xi_t(\theta)
    :
    U\in L^2(\Xi_t;T_x\S^{d-1}),
    \quad
    \int_{\R^\alpha} U(\theta)\,\dd\Xi_t(\theta)=\bar w
    \biggr\}.
\end{equation}
Note that
$x$ is frozen, and the only variable is
$U(\theta)$. Since $m_\mu(x,\cdot)\geq c>0$, the cost is strictly convex
in $U$, and first-order optimality characterizes the unique minimizer. In particular, we can write the associated Lagrangian 
\[
    \mathcal L(U,\lambda)
    =
    \int_{\R^\alpha}
    m_\mu(x,\theta)
    \|U(\theta)\|^2
    \,\dd\Xi_t(\theta)
    -
    2\biggl\langle
    \lambda,
    \int_{\R^\alpha}
    U(\theta)\,\dd\Xi_t(\theta)-\bar w
    \biggr\rangle .
\]
Then, first-order optimality with respect to $U$ gives
$ m_\mu(x,\theta)V_x^{\bar w}(\theta)=\lambda$,  $\Xi_t\text{-a.e. in }\theta$, for some $\lambda\in T_x\S^{d-1}$. Substituting into the constraint, we get
\[
    \bar w
    =
    \int_{\R^\alpha}
    \frac{\lambda}{m_\mu(x,\theta)}
    \,\dd\Xi_t(\theta)
    =
    \frac{\lambda}{b_\mu^{\Xi_t}(x)},
    \qquad\text{hence}\qquad
    \lambda=b_\mu^{\Xi_t}(x)\bar w.
\]
The unique minimizer of~\eqref{eq:fiberwise_min} is therefore
\begin{equation}\label{eq:optimal_u_star}
    V_x^{\bar w}(\theta)
    =
    \frac{b_\mu^{\Xi_t}(x)\bar w}
    {m_\mu(x,\theta)},
\end{equation}
with optimal value
\[
    \int_{\R^\alpha}
    m_\mu(x,\theta)
    \|V_x^{\bar w}(\theta)\|^2
    \,\dd\Xi_t(\theta)
    =
    b_\mu^{\Xi_t}(x)\|\bar w\|^2.
\]
For any $(\mu,v)\in\CE_\theta(0,T)$, define
\[
    \bar v_t(x)
    :=
    \int_{\R^\alpha}
    \pi_1(v_t(x,\theta))
    \,\dd\Xi_t(\theta).
\]
Applying 
\eqref{eq:fiberwise_min} with
$U(\theta)=\pi_1(v_t(x,\theta))$ and $\bar w=\bar v_t(x)$, and then
integrating against $\mu_t(x)$, for a.e.\ $t\in(0,T)$, it holds
\begin{equation}\label{eq:fiberwise_minimization_metric_derivative}
    \iint_{\S^{d-1}\times\R^\alpha}
    m_{\mu_t}(x,\theta)
    \|\pi_1(v_t(x,\theta))\|^2
\,\dd\Xi_t(\theta)\,\dd\mu_t(x)
    \geq
    \int_{\S^{d-1}}
    b_{\mu_t}^{\Xi_t}(x)
    \|\bar v_t(x)\|^2
    \,\dd\mu_t(x).
\end{equation}
Equality holds if and only if
\[
\pi_1(v_t(x,\theta))=V_x^{\bar v_t(x)}(\theta)
    =
    \frac{b_{\mu_t}^{\Xi_t}(x)\bar v_t(x)}
    {m_{\mu_t}(x,\theta)}
\]
for $\mu_t\otimes\Xi_t$-a.e.\ $(x,\theta)$, and is realized by the
horizontal lift constructed below. 

Since the continuity equation in $\CE_\theta$ involves $v_t$ only
through $\bar v_t$, for every $(\mu, v) \in \CE_\theta(0, T)$, the pair $(\mu, \bar{v})$ satisfies the standard continuity equation on $\S^{d-1}$, that is, $(\mu, \bar{v}) \in \CE(0, T)$. Vice versa, given
$(\mu,\bar v)\in\CE(0,T)$, define $v_t^*(x,\theta)$ via the optimal
profile~\eqref{eq:optimal_u_star} with $\bar w=\bar v_t(x)$, by
\begin{equation}\label{eq:lift_construction}
    \pi_1(v_t^*(x,\theta))
    :=V_x^{\bar v_t(x)}(\theta)=
    \frac{b_{\mu_t}^{\Xi_t}(x)\bar v_t(x)}
    {m_{\mu_t}(x,\theta)},
    \qquad
    \pi_2(v_t^*(x,\theta)):=0.
\end{equation}
This immediately implies $ \int_{\R^\alpha}
    \pi_1(v_t^*(x,\theta))
    \,\dd\Xi_t(\theta)
    =
    \bar v_t(x),$ hence $(\mu,v^*)\in\CE_\theta(0,T)$. In particular, $v_t^*(x,\theta)$ realizes the equality in \eqref{eq:fiberwise_minimization_metric_derivative}. Combining the lower bound \eqref{eq:fiberwise_minimization_metric_derivative}, which holds for every $(\mu,v)\in\CE_\theta(0,T)$, with the equality realized by $v^*$ of any $(\mu,\bar v)\in\CE(0,T)$, the infimum of the action on $\CE_\theta((0,T);\mu_0\to\mu_T)$ equals the infimum of the reduced action on $\CE((0,T);\mu_0\to\mu_T)$.  By performing the same argument as before on intervals $[s,r]$ with $0 \leq s < r \leq T$, by the definition of $\mathrm W^2_{m,2}$ in~\eqref{eq:weigh_wass} applied on $[s,r]$, it holds that
\begin{equation}\label{eq:reduced_metric_derivative_action}
\mathrm W^2_{m,2}(\mu_s,\mu_r;\Xi)
\;=\;\inf\biggl\{\int_s^r\!\!\iint_{\S^{d-1}\times\R^\alpha}\!\! m_{\mu_t}\|\pi_1(v_t)\|^2\dd\Xi_t\dd\mu_t\dd t\biggr\}
\;=\;\inf\biggl\{\int_s^r\!\!\int_{\S^{d-1}}\!\!b_{\mu_t}^{\Xi_t}\|\bar v_t\|^2\dd\mu_t\dd t\biggr\},
\end{equation}
where the first infimum is over $(\mu,v)\in\CE_\theta((s,r);\mu_s\to\mu_r)$ and the second over $(\mu,\bar v)\in\CE((s,r);\mu_s\to\mu_r)$.

Note that by~\ref{hyp:ker1} together with~\eqref{eq:m_bounds}, the map $\theta\mapsto 1/m_\mu(x,\theta)$ is $L$-Lipschitz on $\R^\alpha$ uniformly in $(\mu,x)$, with $L:=\|\nabla_\theta\mathfrak M\|_\infty/c^2$. Let $\bar t, \tau \in (0,T)$. From the definition~\eqref{eq:effective_mobility}, we have
\begin{equation}\label{eq:b_diff}
b_\mu^{\Xi_\tau}(x)-b_\mu^{\Xi_{\bar t}}(x)
\;=\;b_\mu^{\Xi_\tau}(x)\,b_\mu^{\Xi_{\bar t}}(x)\int_{\R^\alpha}\frac{\dd(\Xi_{\bar t}-\Xi_\tau)(\theta)}{m_\mu(x,\theta)},
\end{equation}
so that, by Kantorovich--Rubinstein duality and $b^{\Xi_\tau}_\mu b^{\Xi_{\bar t}}_\mu\le\overline c^{\,2}$,
\begin{equation}\label{eq:b_lip}
|b_\mu^{\Xi_\tau}(x)-b_\mu^{\Xi_{\bar t}}(x)|\;\le\;\overline c^{\,2}L\,\mathrm W_2(\Xi_\tau,\Xi_{\bar t}),\qquad\forall\,\mu,x,\tau,\bar t.
\end{equation}
Now, let $h>0$ and $(\mu^h,\bar v^h)\in\CE((\bar t,\bar t+h);\mu_{\bar t}\to\mu_{\bar t+h})$ be optimal for $\mathrm W^2_{m,2}(\mu_{\bar t},\mu_{\bar t+h};\Xi)$ in~\eqref{eq:reduced_metric_derivative_action}. Adding and subtracting $b_{\mu^h_t}^{\Xi_{\bar t}}(x)$ inside the integral, we get
\begin{align}
\mathrm W^2_{m,2}(\mu_{\bar t},\mu_{\bar t+h};\Xi)
&=\int_{\bar t}^{\bar t+h}\!\!\int_{\S^{d-1}} b_{\mu^h_t}^{\Xi_t}\|\bar v^h_t\|^2\dd\mu^h_t\dd t\notag\\
&=\underbrace{\int_{\bar t}^{\bar t+h}\!\!\int_{\S^{d-1}} b_{\mu^h_t}^{\Xi_{\bar t}}\|\bar v^h_t\|^2\dd\mu^h_t\dd t}_{:= I_1}
+\underbrace{\int_{\bar t}^{\bar t+h}\!\!\int_{\S^{d-1}}\bigl(b_{\mu^h_t}^{\Xi_t}-b_{\mu^h_t}^{\Xi_{\bar t}}\bigr)\|\bar v^h_t\|^2\dd\mu^h_t\dd t}_{:=I_2}.
\label{eq:add_subtract}
\end{align}
Note that since we consider the optimal $(\mu^h,\bar v^h)$, by~\eqref{eq:b_lip} and the bound $b^{\Xi_{\bar t}}\ge c$, we have
\begin{align}
|I_2|
&\le\overline c^{\,2}L\sup_{\bar t\le t\le \bar t+h}\!\!\mathrm W_2(\Xi_t,\Xi_{\bar t})\int_{\bar t}^{\bar t+h}\!\!\int_{\S^{d-1}}\|\bar v^h_t\|^2\dd\mu^h_t\dd t\notag\\
&\le\frac{\overline c^{\,2}L}{c}\sup_{\bar t\le t\le \bar t+h}\!\!\mathrm W_2(\Xi_t,\Xi_{\bar t})\;I_1
\;:=\;\omega(\bar t,h)\,I_1,
\label{eq:cross_bound}
\end{align}
where $\omega(\bar t,h):=(\overline c^{\,2}L/c)\sup_{|\tau-\bar t|\le h}\mathrm W_2(\Xi_\tau,\Xi_{\bar t})\xrightarrow[h\to 0^+]{}0$ by absolute continuity of $\Xi$. $I_1$ is not directly identifiable as a metric derivative since $(\mu^h,\bar v^h)$ is optimal for the time-dependent problem, not for the autonomous frozen one. To fix this, denote by $\mathrm W_{m,2}^{\,\bar t}(\cdot,\cdot;\Xi)$ the right-hand side of~\eqref{eq:reduced_metric_derivative_action} with $\Xi_{\bar t}$ in place of $\Xi_t$. Since, the mobility 
$\mu\mapsto b^{\Xi_{\bar t}}_\mu\in[c,\overline c]$ is autonomous, it holds~\cite[Lemma~2.4]{burger2025analysis} which provides a Borel field $\bar v^{\bar t}$ with $(\mu,\bar v^{\bar t})\in\CE(0,T)$ and
\begin{equation}\label{eq:frozen_metric_derivative_identity}
|\dot\mu_t|_{\bar t}^{\,2}=\int_{\S^{d-1}}b_{\mu_t}^{\Xi_{\bar t}}\|\bar v_t^{\bar t}\|^2\dd\mu_t\quad\text{a.e.\ }t,
\end{equation}
where $|\dot\mu_t|_{\bar t}$ is the metric derivative in $t$ of $\mathrm W_{m,2}^{\,\bar t}(\cdot,\cdot;\Xi)$. 
Since $(\mu^h,\bar v^h)$ is admissible for realizing the infimum in $\mathrm W_{m,2}^{\,\bar t}$, we have $I_1\ge\mathrm W_{m,2}^{2,\bar t}(\mu_{\bar t},\mu_{\bar t+h};\Xi)$.  Combined with~\eqref{eq:add_subtract} and~\eqref{eq:cross_bound}, we get
\begin{equation}\label{eq:lower_squeeze}
\mathrm W_{m,2}^2(\mu_{\bar t},\mu_{\bar t+h};\Xi)\;=\;I_1+I_2\;\ge\;(1-\omega(\bar t,h))\,I_1\;\ge\;(1-\omega(\bar t,h))\,\mathrm W_{m,2}^{2,\bar t}(\mu_{\bar t},\mu_{\bar t+h};\Xi).
\end{equation}
For the upper bound, take instead the optimal pair $(\tilde\mu,\tilde v)$ of $\mathrm W_{m,2}^{2,\bar t}(\mu_{\bar t},\mu_{\bar t+h};\Xi)$, and run the same argument~\eqref{eq:add_subtract}--\eqref{eq:cross_bound} with the roles of $\Xi_t,\Xi_{\bar t}$ exchanged:
\begin{equation}\label{eq:upper_squeeze}
\mathrm W_{m,2}^{2,\bar t}(\mu_{\bar t},\mu_{\bar t+h};\Xi)\;\ge\;(1-\omega(\bar t,h))\int_{\bar t}^{\bar t+h}\!\!\int_{\S^{d-1}}\!\!b_{\tilde\mu_t}^{\Xi_t}\|\tilde v_t\|^2\dd\tilde\mu_t\dd t\;\ge\;(1-\omega(\bar t,h))\,\mathrm W_{m,2}^2(\mu_{\bar t},\mu_{\bar t+h};\Xi),
\end{equation}
where the last inequality holds by the admissibility of $(\tilde\mu,\tilde v)$ for $\mathrm W_{m,2}^2$. Hence, together ~\eqref{eq:lower_squeeze} and~\eqref{eq:upper_squeeze} give 
\begin{equation}\label{eq:two_sided_squeeze}
(1-\omega(\bar t,h))\,\mathrm W_{m,2}^{2,\bar t}(\mu_{\bar t},\mu_{\bar t+h};\Xi)\;\le\;\mathrm W_{m,2}^2(\mu_{\bar t},\mu_{\bar t+h};\Xi)\;\le\;\frac{\mathrm W_{m,2}^{2,\bar t}(\mu_{\bar t},\mu_{\bar t+h};\Xi)}{1-\omega(\bar t,h)}.
\end{equation}
Dividing~\eqref{eq:two_sided_squeeze} by $h^2$ and letting $h\to 0^+$,  by~\eqref{eq:frozen_metric_derivative_identity}, we obtain
\begin{equation}\label{eq:metric_derivative_freezing}
|\dot\mu_{\bar t}|^2\;=\;|\dot\mu_{\bar t}|_{\bar t}^{\,2}\;=\;\int_{\S^{d-1}}b_{\mu_{\bar t}}^{\Xi_{\bar t}}\|\bar v_{\bar t}^{\bar t}\|^2\dd\mu_{\bar t}\qquad\text{a.e.\ }\bar t\in(0,T).
\end{equation}
The previous identity \eqref{eq:metric_derivative_freezing} gives the pointwise integrand value at the diagonal $t=\bar t$ via a $\bar t$-indexed family $\{\bar v^{\bar t}\}_{\bar t\in(0,T)}$ of distinct Borel fields; instead we require one Borel field $\bar v$ on $(0,T)\times\S^{d-1}$ with $(\mu,\bar v)\in\CE(0,T)$ realising $|\dot\mu_t|^2=\int b^{\Xi_t}_{\mu_t}\|\bar v_t\|^2\dd\mu_t$ a.e.\ $t$, for which neither the joint Borel measurability of the diagonal selection $(t,x)\mapsto\bar v^t_t(x)$ nor its CE are automatic. We obtain such a $\bar v$ by a piecewise frozen approximation, the time-dependent analogue of the argument used in~\cite[Appendix~A.3]{burger2025analysis} to prove their Lemma~2.4 in the autonomous case. In particular, for $N\in\mathbb N$, set the step size $\tau:=T2^{-N}$, take the partition $\{t^N_k:=k\tau\}_{k=0}^{2^N}$ of $[0,T]$, and define
$
\bar v^N_t:=\bar v^{t^N_k}_t$ for $t\in[t^N_{k-1},t^N_k),$ Borel by construction. Gluing the CE solutions via~\cite[Proposition~A.1]{burger2025analysis} gives $(\mu,\bar v^N)\in\CE(0,T)$. Replacing $b^{\Xi_t}_{\mu_t}$ by $b^{\Xi_{t^N_k}}_{\mu_t}$ inside each piece and using~\eqref{eq:frozen_metric_derivative_identity}, gives
\[
\int_{t^N_{k-1}}^{t^N_k}\!\!\int_{\S^{d-1}}b^{\Xi_{t^N_k}}_{\mu_t}\|\bar v^{t^N_k}_t\|^2\dd\mu_t\dd t\;=\;\int_{t^N_{k-1}}^{t^N_k}|\dot\mu_t|^2_{t^N_k}\dd t.
\]
Since $|b_{\mu_t}^{\Xi_t}(x)-b_{\mu_t}^{\Xi_{t_k^N}}(x)|$ is controlled by~\eqref{eq:b_lip} and it holds $\int\|\bar v^{t^N_k}_t\|^2\dd\mu_t\le|\dot\mu_t|^2_{t^N_k}/c$, we obtain:
\[
\biggl|\int_0^T\!\!\int_{\S^{d-1}}b^{\Xi_t}_{\mu_t}\|\bar v^N_t\|^2\dd\mu_t\dd t\;-\;\int_0^T|\dot\mu_t|^2_{t^N_k(t)}\dd t\biggr|\;\le\;\frac{\overline c^{\,2}L}{c}\sup_{|t-t^N_k(t)|\le 2^{-N}T}\!\!\!\mathrm W_2(\Xi_t,\Xi_{t^N_k(t)})\int_0^T|\dot\mu_t|^2_{t^N_k(t)}\dd t,
\]
where $t^N_k(t)$ denotes the right endpoint of the piece containing $t$; the supremum vanishes as $N\to\infty$ since $\Xi\in AC^2$. As $\bar t\mapsto|\dot\mu_t|^2_{\bar t}$ is continuous at $\bar t=t$  for a.e.\ $t$ by~\eqref{eq:b_lip}, dominated convergence and~\eqref{eq:metric_derivative_freezing} give
\[
\int_0^T|\dot\mu_t|^2_{t^N_k(t)}\dd t\;\xrightarrow[N\to\infty]{}\;\int_0^T|\dot\mu_t|^2_{t}\dd t\;=\;\int_0^T|\dot\mu_t|^2\dd t,
\]
hence the action of $\bar v^N$ converges to $\int_0^T|\dot\mu_t|^2\dd t$. The compactness~\cite[Lemma~A.2]{burger2025analysis} extracts a weak $L^2$ limit $\bar v$ on $(0,T)\times\S^{d-1}$ with $(\mu,\bar v)\in\CE(0,T)$. Then,  lower semicontinuity of $\int_0^T \int_{\mathbb{S}^{d-1}} b_{\mu_t}^{\Xi_t}\left\|\bar{v}_t\right\|^2 \dd\mu_t \dd t$ together with $|\dot\mu_t|^2\le\int b^{\Xi_t}_{\mu_t}\|\bar v_t\|^2\dd\mu_t$ a.e. ($\bar v$ is an admissible competitor for $\mathrm{W}_{m, 2}^2\left(\mu_t, \mu_{t+h} ; \Xi\right)$), yields
\begin{equation}\label{eq:reduced_metric_derivative_identity}
|\dot\mu_t|^2\;=\;\int_{\S^{d-1}}b^{\Xi_t}_{\mu_t}\|\bar v_t\|^2\dd\mu_t\qquad\text{a.e.\ }t\in(0,T).
\end{equation}
 In particular,  the diagonal of~\eqref{eq:metric_derivative_freezing} identifies $\bar v_t=\bar v^t_t$ in $L^2$ for a.e.\ $t$. Finally, the lift~\eqref{eq:lift_construction} of $\bar v_t$ produces a Borel field $v_t$ with $(\mu,v)\in\CE_\theta(0,T)$ and $\int_{\R^\alpha}\pi_1(v_t)\,\dd\Xi_t = \bar v_t$, and~\eqref{eq:metric_derivative_charac_correct} follows from~\eqref{eq:reduced_metric_derivative_identity}; the  horizontal form~\eqref{eq:horizontal_form} is exactly~\eqref{eq:lift_construction}.

For the converse, given $(\mu,v)\in\CE_\theta(0,T)$ satisfying~\eqref{eq:integrability_assumption}, set $\bar v_t:=\int\pi_1(v_t)\dd\Xi_t$, so $(\mu,\bar v)\in\CE(0,T)$ and, by~\eqref{eq:fiberwise_minimization_metric_derivative}, it holds
\[\int_{\S^{d-1}}
    b_{\mu_t}^{\Xi_t}(x)
    \|\bar v_t(x)\|^2
    \,\dd\mu_t(x)<+\infty.
    \]
  The converse direction of~\cite[Lemma~2.4]{burger2025analysis} applied to $\mathrm W_{m,2}^{\,\bar t}$, combined with~\eqref{eq:two_sided_squeeze}, gives absolute continuity of $\mu$ in $\mathrm W_{m,2}(\cdot,\cdot;\Xi)$ and
  \[
    |\dot\mu_t|^2
    \leq
    \int_{\S^{d-1}}
    b_{\mu_t}^{\Xi_t}(x)
    \|\bar v_t(x)\|^2
    \,\dd\mu_t(x)
    \qquad\text{for a.e.\ }t\in(0,T).
\]
This is the first inequality in~\eqref{eq:metric_derivative_upper_correct}, while the second is~\eqref{eq:fiberwise_minimization_metric_derivative}.
\end{proof}

\begin{remark}\label{rmk:harmonic_mean_metric}
The effective mobility $b_\mu^{\Xi_t}(x)$ defined in
\eqref{eq:effective_mobility} is the $\Xi_t$-weighted harmonic mean of
$m_\mu(x,\cdot)$. It arises directly from the fiberwise minimization
\eqref{eq:fiberwise_min} via the Lagrange-multiplier computation leading
to \eqref{eq:optimal_u_star}. The horizontal form
\eqref{eq:horizontal_form} is precisely the optimal profile
$V_x^{\bar v_t(x)}$ realizing this minimum, where
horizontality refers to $v_t$ having no component along the parameter
direction $\R^\alpha$.

The same harmonic mean rescaling, resulting from a structurally identical
minimization, will reappear in Lemma~\ref{lem:chain_rule} when
we descend the gradient $\nabla^{\mathfrak L^m}f$ on the product
manifold to an effective gradient $\nabla^m f$ on $\S^{d-1}$. In the
single-head case $\Xi_t=\delta_{\bar\theta}$,
$b_\mu^{\Xi_t}(x)=m_\mu(x,\bar\theta)$, and the reduced formulation
recovers the weighted Wasserstein structure of
\cite{burger2025analysis}.
\end{remark}

\subsection{Existence of Minimizing Movements}
We are now ready to define minimizing movements for the time-dependent interaction energy defined in \eqref{eq:time_dep_interaction_energy}. Note that throughout the rest of the paper, we specialize the weighted
Wasserstein framework of Section~\ref{sec:w_wass_dist} to the
case $\Xi_t=\Theta_t$, where $\Theta_t$ is the distribution of
head parameters provided by the second equation in the
Transformer PDE \eqref{eq:continuity_eq_full}. For brevity, we will omit the dependence on $\Theta$ from the notation whenever the choice $\Xi=\Theta$ is clear from the context, writing $\mathrm{W}_{m, 2}$ in place of $\mathrm{W}_{m, 2}(\cdot, \cdot ; \Theta)$ and  $b_\mu(x)$ in place of $b_\mu^{\Theta}(x)$.

Thanks to the equivalence result provided by Proposition~\ref{prop:equiv_of_norms}, we can first consider our problem in $(\mathcal{P}_2(\S^{d-1}),\mathrm{W}_2)$ and then obtain similar results for $(\mathcal{P}_2(\S^{d-1}),\mathrm{W}_{m,2})$.

\begin{lemma}\label{lem:existence_of_minimizer}
Fix $t > 0$ and let \(\nu\in\mathcal{P}_2(\mathbb{S}^{d-1})\). Then, for every \(|\tau|>0\), there exists \(u\in\mathcal{P}_2(\mathbb{S}^{d-1})\) minimizing
\begin{equation}\label{eq:Euler_min_prob}
\frac{{\mathrm{W}}_2^2(u, \nu)}{2|\tau|} + \mathcal{E}(u, t).
\end{equation}
\end{lemma}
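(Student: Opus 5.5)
The plan is to apply the direct method of the calculus of variations on $(\mathcal{P}_2(\mathbb{S}^{d-1}),\mathrm{W}_2)$. Denote by $\Phi(u):=\frac{\mathrm{W}_2^2(u,\nu)}{2|\tau|}+\mathcal{E}(u,t)$ the functional in \eqref{eq:Euler_min_prob}.

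\textbf{Step 1: $\Phi$ is finite and bounded below.} Since $\mathbb{S}^{d-1}$ is compact, every Borel probability measure belongs to $\mathcal{P}_2(\mathbb{S}^{d-1})$, and $\mathrm{W}_2^2(u,\nu)\le \operatorname{diam}(\mathbb{S}^{d-1})^2$. By \ref{hyp:ker1}, $\mathscr{K}(\cdot,\cdot,\theta)$ is continuous on the compact set $\mathbb{S}^{d-1}\times\mathbb{S}^{d-1}$. By \ref{hyp:ker4} together with continuity in $\theta$, I would obtain a uniform bound $\sup_{x,y,\theta}|\mathscr{K}(x,y,\theta)|=:C_{\mathscr{K}}<\infty$. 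This requires checking that the compact supports can be taken uniformly in $(x,y)$, or else that $\mathscr{K}$ is jointly continuous and compactly supported in $\theta$. It then follows that $|\mathcal{E}(u,t)|\le C_{\mathscr{K}}/2$ for every $u$, since $\Theta_t$ is a probability measure. Hence $\inf\Phi\in\mathbb{R}$, and we can pick a minimizing sequence $(u_n)_n$.

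\textbf{Step 2: compactness.} The sequence $(u_n)$ is $\mathrm{W}_2$-bounded, trivially on the sphere, so by Lemma~\ref{lem:W2-prelim}(iii) it is tight. We therefore extract a subsequence converging narrowly to some $u\in\mathcal{P}_2(\mathbb{S}^{d-1})$. The second moments are uniformly bounded, so Lemma~\ref{lem:W2-prelim}(ii) gives $\mathrm{W}_2^2(u,\nu)\le\liminf_n\mathrm{W}_2^2(u_n,\nu)$.

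\textbf{Step 3: continuity of the energy, which is the main point.} I would show that $\mathcal{E}(u_n,t)\to\mathcal{E}(u,t)$. Define $k_t(x,y):=\int_{\mathbb{R}^\alpha}\mathscr{K}(x,y,\theta)\,\dd\Theta_t(\theta)$. By Fubini, which is justified by boundedness, $\mathcal{E}(u,t)=\frac12\iint k_t\,\dd u\,\dd u$. The kernel $k_t$ is continuous on $\mathbb{S}^{d-1}\times\mathbb{S}^{d-1}$ by dominated convergence, using the uniform bound $C_{\mathscr{K}}$ and the continuity of $\mathscr{K}$ in $(x,y)$. Narrow convergence $u_n\rightharpoonup u$ implies $u_n\otimes u_n\rightharpoonup u\otimes u$ narrowly on the product space; this can be checked on tensor test functions and extended by Stone--Weierstrass density in $C(\mathbb{S}^{d-1}\times\mathbb{S}^{d-1})$. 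Hence $\iint k_t\,\dd u_n\,\dd u_n\to\iint k_t\,\dd u\,\dd u$.

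The main obstacle I expect is justifying the uniform bound on $\mathscr{K}$ and the joint continuity of $k_t$ from the stated hypotheses. If \ref{hyp:ker4} only gives pointwise compact support, I would instead use the $C^{1,1}$ bound in $(x,y)$, interpreted as uniform in $\theta$ on compacts, together with a finite moment of $\Theta_t\in\mathcal{P}_2$ to dominate.

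\textbf{Conclusion.} Combining Steps 2 and 3 gives $\Phi(u)\le\liminf_n\Phi(u_n)=\inf\Phi$. Therefore $u$ is a minimizer of \eqref{eq:Euler_min_prob}.
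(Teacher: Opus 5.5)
Your proposal is correct and follows the same route as the paper. The paper simply invokes the direct method: $\mathcal{E}(\cdot,t)$ is bounded below and narrowly continuous, and the compactness and lower semicontinuity properties of Lemma~\ref{lem:W2-prelim} handle the $\mathrm{W}_2^2$ term. You fill in details the paper leaves implicit (the uniform bound on $\mathscr{K}$, continuity of the averaged kernel $k_t$, and $u_n\otimes u_n\rightharpoonup u\otimes u$), and the uniform-in-$(x,y)$ reading of \ref{hyp:ker4} that you flag is exactly the one the paper tacitly relies on throughout.
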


\begin{proof}
Let us first observe that $\mathcal{E}$ is bounded from below. Moreover, it is also continuous so that, in particular, it is lower semicontinuous with respect to the narrow convergence. Finally, recalling the properties of the Wasserstein distance stated in Lemma \ref{lem:W2-prelim}, the thesis follows by the direct method of calculus of variations.
\end{proof}

Fix $T>0$ and a partition $\tau=\{0=t_0<t_1<\cdots<t_N=T\}$, and set the mesh size
$
|\tau| := \max_{1\leq k\leq N} (t_k-t_{k-1}).
$
By Lemma \ref{lem:existence_of_minimizer}, for any $u_0 \in \mathcal{P}_2(\mathbb{S}^{d-1})$ we set $u_\tau^0:=u_0$ and construct $(u_\tau^k)_{k=1}^N$ recursively by minimizing~\eqref{eq:Euler_min_prob} with $\nu=u_\tau^{k-1}$ and $t=t_k$. Then, the piecewise constant interpolation $u_\tau: [0, T] \to \mathcal{P}_2(\mathbb{S}^{d - 1})$ is well defined by
\begin{equation}\label{eq:piece_wise_linear}
u_\tau(t) = \left\{
\begin{aligned}
& u_0, \qquad  t = 0,\\
& u_\tau^k, \qquad t \in (t_{k - 1}, t_k], \quad k=1,\ldots,N.
\end{aligned}
\right.
\end{equation}

As for standard minimizing movements, we can prove that for $|\tau| \rightarrow 0$ the constructed sequence converges (up to subsequence) to an absolutely continuous limit curve.

\begin{thm}\label{thm:existence_of_GMMs}
Let $T>0$ and let $(\tau_n)_{n\in\N}$ be a family of partitions of $[0,T]$ with $|\tau_n|\to 0$ as $n\to\infty$. Then, there exists a (non-relabeled) subsequence $(u_{\tau_n})_{n\in\N}$ and a limit curve $u\in AC^2([0,T];\mathcal{P}_2(\S^{d-1}))$ such that
\begin{equation}
u_{\tau_n}(t)\rightharpoonup u(t)
\end{equation}
narrowly in $\mathcal{P}_2(\S^{d-1})$ for every $t\in[0,T]$.
\end{thm}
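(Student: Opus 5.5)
The plan is the standard compactness argument for minimizing movements in the spirit of \cite[Chapter~3]{ambrosio2008gradient}, adapted to a time-dependent energy as in \cite{ferreira2018gradient, rossi2008metric}. Since $\S^{d-1}$ is compact, $\mathcal P_2(\S^{d-1})$ is narrowly compact and every narrow limit stays in $\mathcal P_2(\S^{d-1})$. Compactness is therefore not the issue; the real work is a uniform-in-$n$ estimate on the discrete kinetic energy, which then gives equicontinuity and an $AC^2$ limit. I will work in $\mathrm W_2$ and transfer to $\mathrm W_{m,2}$ through Proposition~\ref{prop:equiv_of_norms}.

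\textbf{Step 1 (discrete energy estimate).} Write $\tau_k:=t_k-t_{k-1}$. Comparing the minimizer $u_\tau^k$ with the competitor $u_\tau^{k-1}$ in \eqref{eq:Euler_min_prob} at time $t_k$ gives
\[
\frac{\mathrm W_2^2(u_\tau^k,u_\tau^{k-1})}{2\tau_k}+\mathcal E(u_\tau^k,t_k)\le \mathcal E(u_\tau^{k-1},t_k)
=\mathcal E(u_\tau^{k-1},t_{k-1})+\int_{t_{k-1}}^{t_k}\partial_t\mathcal E(u_\tau^{k-1},s)\dd s .
\]
The key auxiliary bound is $|\partial_t\mathcal E(\mu,s)|\le \beta(s)$ uniformly in $\mu$, for some $\beta\in L^1(0,T)$. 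To get it, I set $\mathscr K_\mu(\theta):=\frac12\iint\mathscr K(x,y,\theta)\dd\mu\dd\mu$. By \ref{hyp:ker4}--\ref{hyp:ker1}, $\mathscr K_\mu$ is $C^2$ with bounds uniform in $\mu$. Testing the weak Fokker--Planck equation \eqref{eq:ito_weak} with $\mathscr K_\mu$ (which is legitimate by compact support in $\theta$) gives
\[
\partial_t\mathcal E(\mu,s)=\int\bigl(\langle\nabla_\theta\mathscr K_\mu,f\rangle+\nabla_\theta^2\mathscr K_\mu:G\bigr)\dd\Theta_s .
\]
Because $f\in L^1_{\mathrm{loc}}L^\infty_{\mathrm{loc}}$ and $g\in L^2_{\mathrm{loc}}L^\infty_{\mathrm{loc}}$, and the integrand is supported in a fixed compact set, this is bounded by some $\beta\in L^1(0,T)$. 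Alternatively, $\mathscr K_\mu$ is Lipschitz in $\theta$ uniformly in $\mu$, so $|\mathcal E(\mu,r)-\mathcal E(\mu,s)|\le C\,\mathrm W_1(\Theta_r,\Theta_s)\le C\int_s^r|\dot\Theta|$, and this also works when $\Theta\in AC$.

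Summing over $k$ and using that $\mathcal E$ is bounded ($|\mathscr K|\le C$ on the compact support) yields
\[
\sum_{k=1}^N\frac{\mathrm W_2^2(u_\tau^k,u_\tau^{k-1})}{2\tau_k}\le 2\sup|\mathcal E|+\|\beta\|_{L^1(0,T)}=:C_0,
\]
independent of $n$.

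\textbf{Step 2 (Hölder equicontinuity and extraction).} By Cauchy--Schwarz, for $s<t$,
\[
\mathrm W_2(u_\tau(s),u_\tau(t))\le\sum\mathrm W_2(u_\tau^k,u_\tau^{k-1})\le\sqrt{2C_0}\,\bigl(t-s+|\tau|\bigr)^{1/2},
\]
where the sum runs over the steps between $s$ and $t$. I then apply the refined Arzel\`a--Ascoli theorem \cite[Proposition~3.3.1]{ambrosio2008gradient}. The family $u_{\tau_n}(t)$ is narrowly compact for each $t$, and $\mathrm W_2$ is narrowly lower semicontinuous by Lemma~\ref{lem:W2-prelim}(ii). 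With $\omega(s,t)=\sqrt{2C_0|t-s|}$, this gives a subsequence converging narrowly for every $t\in[0,T]$ to a continuous limit $u$. Here I should note that on the compact manifold, narrow convergence and $\mathrm W_2$-convergence coincide.

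\textbf{Step 3 ($AC^2$ regularity).} Let $\tilde u_\tau$ be the piecewise geodesic interpolant of the discrete values and $|\dot{\tilde u}_\tau|=\mathrm W_2(u^k_\tau,u^{k-1}_\tau)/\tau_k$ on each interval. Step 1 gives $\|\,|\dot{\tilde u}_\tau|\,\|_{L^2}^2\le 2C_0$, so along a further subsequence $|\dot{\tilde u}_{\tau_n}|\rightharpoonup g$ weakly in $L^2$. Since $\tilde u_{\tau_n}$ and $u_{\tau_n}$ share the same limit by Step 2, lower semicontinuity gives
\[
\mathrm W_2(u(s),u(t))\le\liminf\int_s^t|\dot{\tilde u}_{\tau_n}|=\int_s^t g ,
\]
so $u\in AC^2([0,T];\mathcal P_2(\S^{d-1}))$. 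By Proposition~\ref{prop:equiv_of_norms} the same holds with respect to $\mathrm W_{m,2}$.

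The main obstacle is the time-derivative bound in Step 1: $\partial_t\mathcal E$ must be controlled uniformly in $\mu$ by an integrable function of time, which requires justifying the use of $\mathscr K_\mu$ as a test function in \eqref{eq:ito_weak}. Everything else is the routine machinery of \cite{ambrosio2008gradient}.
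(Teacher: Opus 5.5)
Your proposal is correct and follows essentially the same route as the paper: the minimality inequality against the competitor $u_\tau^{k-1}$, a bound on $\partial_t\mathcal E$ uniform in $\mu$ obtained by testing the weak equation for $\Theta_t$ (the paper's constant $C_{fg}$), a uniform $L^2$ bound on the discrete speeds, $\tfrac12$-H\"older equicontinuity, and the refined Ascoli--Arzel\`a theorem \cite[Prop.~3.3.1]{ambrosio2008gradient}. Two points of your write-up are slightly more careful than the paper's but do not change the argument: you use local step sizes $\tau_k$, whereas the paper's scheme penalizes with $|\tau|$ and implicitly assumes uniform partitions, and you derive the $AC^2$ regularity of the limit explicitly through a weak $L^2$ limit of the discrete speeds.
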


\begin{proof}
By minimality of $u_\tau^k$ with respect to \eqref{eq:Euler_min_prob}, it follows that
\[
\frac{{\mathrm{W}}_2^2(u_\tau^k, u_\tau^{k - 1})}{2|\tau|} + \mathcal{E}(u_\tau^k, t_k) \leq \mathcal{E}(u_\tau^{k - 1}, t_k).
\]
Then, summing over $k=1, \ldots, N$, gives
\begin{equation}\label{eq:upper_bound_Euler}
\mathcal{E}(u_\tau^N, T) + \sum_{k = 1}^{N} \frac{{\mathrm{W}}_2^2(u_\tau^k, u_\tau^{k - 1})}{2 |\tau|} \leq \mathcal{E}(u_0, 0) + \sum_{k = 0}^{N - 1} \left[\mathcal{E}(u_\tau^k, t_{k + 1}) - \mathcal{E}(u_\tau^k, t_k)\right].
\end{equation}

Using the fundamental theorem of calculus on each $\left[t_k, t_{k+1}\right]$, the definition of $\mathcal{E}$ in \eqref{eq:time_dep_interaction_energy}, and Fubini, we get
\begin{align}
\sum_{k=0}^{N-1}\bigl[\mathcal{E}(u_\tau^k,t_{k+1})-\mathcal{E}(u_\tau^k,t_k)\bigr]
&= \sum_{k=0}^{N-1}\int_{t_k}^{t_{k+1}}\partial_t\mathcal{E}(u_\tau^k,t)\,\dd t \\
&= \frac{1}{2}\int_0^T\iint_{\S^{d-1}\times\S^{d-1}}\frac{\dd}{\dd t}\int_{\R^\alpha}\mathscr{K}(x,y,\theta)\,\dd\Theta_t(\theta)\,\dd u_\tau(t)(x)\,\dd u_\tau(t)(y)\,\dd t.
\end{align}

Therefore, by the triangle inequality, the weak formulation of the $\Theta_t$-dynamics \eqref{eq:continuity_eq_full}, and \ref{hyp:ker4}, we have
\begin{equation}
\begin{aligned}
\left|\sum_{k = 0}^{N - 1} \mathcal{E}(u_\tau^k, t_{k + 1}) - \mathcal{E}(u_\tau^k, t_k)\right| & \leq \frac{1}{2} \int_{0}^{T} \left|\iint_{\mathbb{S}^{d-1} \times \mathbb{S}^{d-1}} \frac{\dd}{\dd t}\left(\int_{\R^{\alpha}} \mathscr{K}(x, y, \theta) \dd \Theta_{t}(\theta)\right) \dd u_\tau(x) \dd u_\tau(y)\right| \dd t \\
& \leq \frac{1}{2} \int_{0}^{T} \iiint_{\mathbb{S}^{d-1} \times \mathbb{S}^{d-1} \times \R^{\alpha}} |\nabla_\theta \mathscr{K}(x, y, \theta)|\, |f(\theta, t)| \, +\\
& \qquad\qquad\qquad\,\,\, + \,\frac{1}{2} |\nabla_\theta^2 \mathscr{K}(x, y, \theta)|\,|g(\theta, t)|^2 \dd \Theta_{t}(\theta) \dd u_\tau(x) \dd u_\tau(y) \dd t\\
& \leq C \Big{(}\|f\|_{L^1([0, T];\, L^\infty(\R^{\alpha}))} \, + \,\|g\|_{L^2([0, T];\, L^\infty(\R^{\alpha}))}^2\Big{)}=: C_{f g},
\end{aligned}
\end{equation}
where $C$ depends only on $\sup _{(x, y, \theta)}\left|\nabla_\theta \mathscr{K}\right|$ and $\sup _{(x, y, \theta)}\left|\nabla_\theta^2 \mathscr{K}\right|$, which  are finite by assumptions \ref{hyp:ker4} and \ref{hyp:ker1}.

Setting then $N = \max\{n \in \mathbb{N}: n|\tau| \leq T\}$, the previous inequality implies
\begin{equation}\label{eq:est_der_u}
\frac{1}{2} \int_{0}^{T} \zeta_\tau^2(t) \, \dd t \leq \mathcal{E}(u_0, 0) - \mathcal{E}(u_\tau(T), T) + C_{fg},
\end{equation}
where
\begin{equation}\label{eq:zeta}
\zeta_\tau(t):=\frac{{\mathrm{W}}_2\left(u_\tau(t), u_\tau(t-|\tau|)\right)}{|\tau|},
\end{equation}
 with the convention $u_\tau(t-|\tau|)=u_0$ when $t-|\tau| \leq 0$.  Since $\mathbb{S}^{d-1}$ is compact and $\mathcal{E}(\cdot, T)$ is bounded from below, we have 
 \[
\frac{1}{2} \int_{0}^{T} \zeta_\tau^2(t) \, \dd t \leq \bar C,
\]
 that is $\left(\zeta_\tau\right)$ is uniformly bounded in $L^2(0, T)$. Hence, up to a (non-relabeled) subsequence,
\begin{align}
\zeta_\tau \rightharpoonup \zeta \quad \text { weakly in } L^2(0, T)
\end{align}
for some $\zeta \in L^2(0, T)$ with $\frac{1}{2} \int_0^T \zeta^2 \leq \bar C$. Because $\mathbb{S}^{d-1}$ is compact, $\mathcal{P}_2(\mathbb{S}^{d-1})$ is narrowly compact. In particular,
\begin{equation}
\mathcal{U}_T := \{u_\tau^k: \text{$|\tau| > 0$, $k \in \mathbb{N}$  s.t.  $ k|\tau| \leq T$}\}
\end{equation}
is narrowly relatively compact. Moreover, for any $0 \leq s \leq r \leq T$,
\[
\begin{aligned}
\limsup_{|\tau| \rightarrow 0} {\mathrm{W}}_2(u_\tau(r), u_\tau(s)) & \leq  \limsup_{|\tau| \rightarrow 0} \sum_{j = \left\lfloor \frac{s}{|\tau|}
 \right\rfloor}^{\left\lfloor \frac{r}{|\tau|} \right\rfloor} \frac{{\mathrm{W}}_2(u_\tau^j, u_\tau^{j - 1})}{|\tau|} |\tau| \leq \limsup_{|\tau| \rightarrow 0} \int_{\left\lfloor \frac{s}{|\tau|} \right\rfloor |\tau|}^{\left\lfloor \frac{r}{|\tau|} \right\rfloor |\tau|} \zeta_\tau(t) \dd t\\
& \leq \int_{s}^{r} \zeta(t) \dd t \leq  \left(\int_{0}^{T} \zeta^2(t) \dd t\right)^\frac{1}{2} \, |r - s|^\frac{1}{2} \leq \sqrt{2 \bar C} \, |r - s|^\frac{1}{2}.
\end{aligned}
\]
 Hence, we can apply the refined version of Ascoli--Arzelà Theorem \cite[Prop. 3.3.1]{ambrosio2008gradient}, obtaining the desired convergence.
\end{proof}

Similarly to the construction made in
the classical Wasserstein setting, we now define time-dependent minimizing
movements with respect to
$\mathrm{W}_{m,2}$ by
\begin{equation}\label{eq:min_mov_weighted}
u_\tau^0:=u_0,\qquad
u_\tau^k\in\operatorname*{argmin}_{u\in
\mathcal{P}_2(\mathbb{S}^{d-1})}
\left\{
\mathcal{E}(u,t_k)
+\frac{1}{2\tau}\,
\mathrm{W}_{m,2}^2(u,u_\tau^{k-1})
\right\},
\quad k\geq 1.
\end{equation}
We denote by
$u_\tau:[0,T]\to\mathcal{P}_2(\mathbb{S}^{d-1})$ the
piecewise constant interpolation of the discrete sequence
$(u_\tau^k)_k$. Relying on the equivalence of the Wasserstein distances in Proposition \ref{prop:equiv_of_norms}, the existence of minimizing movements follows similarly to
Theorem~\ref{thm:existence_of_GMMs}. In particular, there exists a subsequence $(u_{\tau_n})_{n\in\N}$ and a limit curve $u\in AC^2([0,T];(\mathcal{P}_2(\S^{d-1}),\mathrm{W}_{m,2}))$ such that
\begin{equation}
u_{\tau_n}(t) \rightharpoonup u(t)\qquad\text{narrowly in }\mathcal{P}_2(\S^{d-1})\quad\text{for every }t\in[0,T].
\end{equation}

\section{Gradient flow characterization of Transformers PDE}\label{sec:grad_flow_PDE}

In this section, we provide a full characterization of the flows satisfying \eqref{eq:full_transformer_ODE}. We begin by establishing an interchanging flow estimate, which allows us to prove that the gradient flows of $\mathcal{E}$ are tangent to $\nabla_{\mathbb{S}^{d - 1}}^x\mathscr{K}(x, y, \theta)$ and satisfy a suitable continuity equation. We then demonstrate that specific choices of $\mathscr{K}$ recover the input-output flow of a transformer architecture with unnormalized attention $\sigma(x, y, \theta) B(x, y,\theta)$ (Corollary \ref{cor:our_model}). Furthermore, leveraging the weighted Wasserstein defined in Section \ref{sec:w_wass_dist}, we derive \eqref{eq:continuity_eq_full}. Conversely, we establish that solutions to \eqref{eq:continuity_eq_full} satisfy the energy-dissipation identity \eqref{eq:EDI_td}, thereby completing the characterization. Finally, we analyze the long-time behavior of these gradient flows and validate the theory with numerical experiments.

\subsection{Interchanging Flows}
Let us consider a complete metric space $(X, d)$ and introduce the so-called \textit{interchanging flows estimate} \cite{MMCS}. Heuristically, this method is based on the idea that the dissipation of a functional along the gradient flow of another functional equals the dissipation of the second functional along the gradient flow of the first one.

We start by recalling the following definitions and properties. We denote $\operatorname{Dom}(\mathscr{M}):=\{\mu \in X: \mathscr{M}(\mu)<\infty\}$.
\begin{defi}\cite[Eq.~3.15]{MMCS}\label{def:beta_flow}
Let $\mathscr{M}: X \rightarrow (-\infty,+\infty]$ be a proper and lower semicontinuous functional and fix $\beta \in \mathbb{R}$. We say that $\mathscr{M}$ \emph{generates a $\beta$-flow} if there exists a continuous semigroup $S_s^{\mathscr{M}}: \operatorname{Dom}(\mathscr{M}) \rightarrow \operatorname{Dom}(\mathscr{M})$, $s\geq 0$, such that, for every $\mu, \nu \in \operatorname{Dom}(\mathscr{M})$ and every $s\geq 0$, the following evolution variational inequality (EVI) holds:
\begin{equation}\label{eq:flow_ineq}
\underset{h \searrow 0}{\limsup} \frac{d^2\left(S_{s+h}^{\mathscr{M}}(\mu), \nu\right)-d^2\left(S_s^{\mathscr{M}}(\mu), \nu\right)}{2 h}+\frac{\beta}{2} d^2\left(S_s^{\mathscr{M}}(\mu), \nu\right) \leq \mathscr{M}(\nu)-\mathscr{M}\left(S_s^{\mathscr{M}}(\mu)\right).
\end{equation}
\end{defi}
\begin{remark}
    We refer to $S_h^\mathscr{M}$ as the semigroup associated to the flow generated by $\mathscr{M}$, which satisfies 
\begin{align}
S_{s+h}^{\mathscr{M}}=S_h^{\mathscr{M}} \circ S_s^{\mathscr{M}}, \quad \lim _{h \searrow 0} S_h^{\mathscr{M}}(\mu)=\mu \quad \forall \mu \in \operatorname{Dom}(\mathscr{M}), s\geq 0.
\end{align}
    We note that the case $\beta>0$ corresponds to contractive flows with rate $\beta$, while $\beta=0$ is the standard EVI formulation for gradient flows of 0-convex functionals.
\end{remark}
\begin{defi}\label{def:dissipation_def}
Let $\mathscr M:X\to(-\infty,+\infty]$ generate a $\beta$–flow with semigroup $(S_h^{\mathscr M})_{h\geq 0}$ as in Definition \ref{def:beta_flow}. Let $\mathscr N:X\times[0,\infty)\to(-\infty,+\infty]$ be proper and l.s.c. in the first variable, and assume
$\operatorname{Dom}(\mathscr N(\cdot,t))\subset \operatorname{Dom}(\mathscr M)$ for every $t\geq 0$. 
For a fixed $t\geq 0$, the \emph{dissipation of $\mathscr N$} along the $\beta$–flow at $\mu$ is
\begin{equation}\label{eq:dissipation}
\mathfrak D_{\mathscr M}\mathscr N(\mu,t)
:= \limsup_{h\searrow 0}\frac{\mathscr N(\mu,t)-\mathscr N\!\big(S_h^{\mathscr M}(\mu),\,t\big)}{h},
\end{equation}
for every $\mu\in\Dom(\mathscr N(\cdot,t))$.
\end{defi}

We can then prove the following result, which is simply the adapted version of \cite[Theorem 3.2]{MMCS} in our time-dependent case for $(X,d)$ being $(\mathcal{P}_2(\mathbb{S}^{d-1}),\mathrm{W}_2)$. The proof follows exactly as the proof of \cite[Equation 3.21]{MMCS}.
\begin{lemma}\label{lem:flow_interchange_estimate}
Let $\mathcal{E}$ be as in  \eqref{eq:time_dep_interaction_energy}. Fix $u_0 \in \mathcal{P}_2(\mathbb{S}^{d-1})$ and let $\left\{u_\tau^k\right\}_{k \in \mathbb{N}}$ be the minimizing movement sequence generated by recursively minimizing \eqref{eq:Euler_min_prob} with $\nu=u_\tau^{k-1}$. For each $t \in[0, T]$, let $\mathscr{M}_t: X \rightarrow(-\infty,+\infty]$ generate a $\beta$-flow with semigroup $(S_h^{\mathscr{M}_t})_{h \geq 0}$. Then, for every $k \geq 1$ and $t \in[0, T]$, we have
\begin{equation}\label{eq:interch_flow_est_E}
\mathfrak D_{\mathscr M_t}\mathcal E(u_\tau^k,t)+ \beta \frac{\mathrm{W}_2^2(u_\tau^k, u_\tau^{k - 1})}{2 |\tau|} \leq \frac{\mathscr M_t(u_\tau^{k-1})-\mathscr M_t(u_\tau^k)}{|\tau|}.
\end{equation}
\end{lemma}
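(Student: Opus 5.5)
The plan is to reproduce the argument of \cite[Theorem~3.2]{MMCS} at frozen time. The index $t$ enters only as a parameter in $\mathcal E(\cdot,t)$ and in $\mathscr M_t$; the minimality of $u_\tau^k$ is a purely variational property. We use the minimality with respect to the functional at time $t_k$, so the estimate should be read with $t=t_k$, which is the only time at which $u_\tau^k$ is a minimizer.

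\textbf{Step 1 (competitor along the flow).} For $h>0$ set $w_h:=S_h^{\mathscr M_t}(u_\tau^k)$. Since $\operatorname{Dom}\mathcal E(\cdot,t)=\mathcal P_2(\S^{d-1})$, the point $w_h$ is admissible in \eqref{eq:Euler_min_prob} with $\nu=u_\tau^{k-1}$. Comparing with the minimizer $u_\tau^k$ gives
\[
\mathcal E(u_\tau^k,t)-\mathcal E(w_h,t)\le \frac{\mathrm W_2^2(w_h,u_\tau^{k-1})-\mathrm W_2^2(u_\tau^k,u_\tau^{k-1})}{2|\tau|}.
\]
Dividing by $h$ and taking $\limsup_{h\searrow0}$, the left-hand side becomes exactly $\mathfrak D_{\mathscr M_t}\mathcal E(u_\tau^k,t)$ by Definition~\ref{def:dissipation_def}.

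\textbf{Step 2 (EVI at $s=0$).} Apply \eqref{eq:flow_ineq} with $\mu=u_\tau^k$, $\nu=u_\tau^{k-1}$ and $s=0$. This bounds
\[
\limsup_{h\searrow0}\frac{\mathrm W_2^2(w_h,u_\tau^{k-1})-\mathrm W_2^2(u_\tau^k,u_\tau^{k-1})}{2h}
\]
from above by
\[
\mathscr M_t(u_\tau^{k-1})-\mathscr M_t(u_\tau^k)-\frac\beta2\mathrm W_2^2(u_\tau^k,u_\tau^{k-1}).
\]
Dividing by $|\tau|$ and combining with Step~1 yields \eqref{eq:interch_flow_est_E}. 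We use $\limsup$ of the left-hand side bounded by $\limsup$ of the right-hand side, since the inequality holds for each $h$.

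\textbf{Main obstacle: domains.} The EVI requires $u_\tau^k,u_\tau^{k-1}\in\operatorname{Dom}(\mathscr M_t)$, and the semigroup must be defined at $u_\tau^k$. If $\mathscr M_t(u_\tau^{k-1})=+\infty$, the inequality is trivial. If $u_\tau^k\notin\operatorname{Dom}\mathscr M_t$, we would extend by the convention that $S^{\mathscr M_t}$ is defined on the closure of the domain and argue by approximation. In our setting the simplest route is to assume, as in \cite{MMCS}, that the minimizers lie in $\operatorname{Dom}\mathscr M_t$. Lower semicontinuity of $\mathscr M_t$ is what is needed if one approximates. There is also a subtlety in the lemma statement itself: when $t\neq t_k$, $u_\tau^k$ is not a minimizer of $\mathcal E(\cdot,t)+\mathrm W_2^2/(2|\tau|)$. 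I would therefore either restrict to $t=t_k$ or note that the time-dependence enters only through the frozen functional used in the minimization.
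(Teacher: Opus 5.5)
Your argument is the same as the paper's. You compare $u_\tau^k$ with the competitor $S_h^{\mathscr M_t}(u_\tau^k)$ through the minimality of the scheme, divide by $h$, pass to the $\limsup$, and close with the EVI \eqref{eq:flow_ineq} at $s=0$ with $\mu=u_\tau^k$, $\nu=u_\tau^{k-1}$.

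Your caveat that minimality is only available at $t=t_k$ is well taken, since the paper's proof invokes it for arbitrary $t$ without comment. Your domain concern, on the other hand, is already settled by the standing inclusion $\operatorname{Dom}(\mathcal E(\cdot,t))=\mathcal P_2(\S^{d-1})\subset\operatorname{Dom}(\mathscr M_t)$ built into Definition~\ref{def:dissipation_def}, so no approximation argument is needed.
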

\begin{proof}
Fix $k$ and $t$. By the minimality of the sequence $\{u_\tau^k\}_{k \in \mathbb{N}}$, for every $h> 0$ it holds
\begin{equation}
\mathcal{E}(u_\tau^k, t) - \mathcal{E}(S_h^{\mathscr{M}_{t}}(u_\tau^k), t) \leq \frac{\mathrm{W}_2^2(S_h^{\mathscr{M}_{t}}(u_\tau^k), u_\tau^{k - 1}) - \mathrm{W}_2^2(u_\tau^k, u_\tau^{k - 1})}{2|\tau|}.
\end{equation}
Dividing by $h$, adding $\frac{\beta}{2|\tau|} d^2\left(u_\tau^k, u_\tau^{k-1}\right)$ to both sides and taking the $\limsup _{h \searrow 0}$ we get
\begin{align}
\mathfrak D_{\mathscr M_t}\mathcal E(u_\tau^k,t) + \beta \frac{\mathrm{W}_2^2(u_\tau^k, u_\tau^{k - 1})}{2 |\tau|} \leq \frac{1}{|\tau|}\limsup_{h\searrow 0}  \Bigg{(}\frac{\mathrm{W}_2^2(S_h^{\mathscr{M}_{t}}(u_\tau^k), u_\tau^{k - 1}) - \mathrm{W}_2^2(u_\tau^k, u_\tau^{k - 1})}{2h}+ \frac{\beta}{2} \mathrm{W}_2^2(u_\tau^k, u_\tau^{k - 1})\Bigg{)}.
\end{align}
By the EVI \eqref{eq:flow_ineq} for $\mathscr M_t$ with $\mu=u_\tau^k$ and $\nu=u_\tau^{k-1}$, we also have
\begin{equation}
    \limsup _{h \searrow 0}\left[\frac{\mathrm{W}_2^2\left(S_h^{\mathscr{M}_t}(u_\tau^k), u_\tau^{k-1}\right)-\mathrm{W}_2^2(u_\tau^k, u_\tau^{k-1})}{2 h}+\frac{\beta}{2} \mathrm{W}_2^2(u_\tau^k, u_\tau^{k-1})\right] \leq \mathscr{M}_t(u_\tau^{k-1})-\mathscr{M}_t(u_\tau^k)
\end{equation}
implying the thesis.
\end{proof}

\subsection{Gradient flows as solutions of the continuity equation}
We aim now to exploit the general theory presented in the previous paragraph in order to characterize the gradient flows obtained above as weak solutions of a suitable continuity equation. To this end we introduce a class of linear functionals along which we will estimate the dissipation of $\mathcal E$.
\begin{prop}[{\cite[Theorems 11.2.1, 11.2.3]{ambrosio2008gradient}}]\label{prop:general_-beta_flows}
Let $(\mathcal M,\mathfrak g)$ be a smooth Riemannian manifold and let $\nabla_{\mathcal M}$ denote the Riemannian gradient on $\mathcal M$ induced by $\mathfrak g$. Fix $t\in(0,T)$ and let $\varphi_t\in C_c^\infty(\mathcal M)$. Assume that
$
\beta \;\geq\; \big\|\nabla^2_{\mathcal M}\varphi_t\big\|_{\infty},
$
where $\nabla^2_{\mathcal M}\varphi_t$ is the Riemannian Hessian and the norm is the operator norm on tangent spaces. 
Then, for any $t \in (0, T)$, the functional $\mathscr{V}_t: \mathcal{P}_2(\mathcal{M}) \to \R$ defined by 
\begin{equation}\label{eq:V}
\mathscr{V}_t(\nu) := \int_\mathcal{M} \varphi_t(x)\,d \nu(x)
\end{equation}
generates a $(-\beta)$-flow in $\mathcal{P}_2(\mathcal{M})$  and, for every $\mu \in \mathcal{P}_2(\mathcal{M})$, the associated semigroup is given by $S_s^{\mathscr{V}_t}(\mu) = (F_s)_\# \mu$, $s \geq 0$, where $F_s:\mathcal M\to\mathcal M$ solves
\begin{equation}\label{eq:sist_F}
\left\{
\begin{aligned}
& \partial_s F_s(x) = - \nabla_\mathcal{M} \varphi_t(F_s(x)),\\
& F_0(x) = x.
\end{aligned}
\right.
\end{equation}
\end{prop}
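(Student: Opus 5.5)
The flow $(F_s)$ is globally defined and the pushforward is a semigroup; what needs work is the EVI \eqref{eq:flow_ineq} with $\beta$ replaced by $-\beta$. The plan is the standard one from \cite[Ch.~11]{ambrosio2008gradient}: show that $\mathscr V_t$ is $(-\beta)$-geodesically convex (in fact along every coupling), identify its Wasserstein subdifferential with $\nabla_{\mathcal M}\varphi_t$, and then obtain the EVI from a derivative formula for $\mathrm W_2^2$ along the flow.

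\textbf{Step 1 (flow and semigroup).} Since $\varphi_t\in C_c^\infty(\mathcal M)$, the field $-\nabla_{\mathcal M}\varphi_t$ is smooth, compactly supported and globally Lipschitz. Hence \eqref{eq:sist_F} has a unique global flow of diffeomorphisms $F_s$ with $F_{s+h}=F_h\circ F_s$. It follows that $S_s(\mu):=(F_s)_\#\mu$ is a semigroup on $\mathcal P_2(\mathcal M)$. It is also continuous: taking the coupling $(\mathrm{Id},F_s)_\#\mu$ gives
\[
\mathrm W_2(S_s\mu,\mu)\le s\|\nabla_{\mathcal M}\varphi_t\|_\infty .
\]
Moreover $s\mapsto S_s\mu$ solves the continuity equation $\partial_s\mu_s+\mathrm{div}_{\mathcal M}(\mu_s(-\nabla_{\mathcal M}\varphi_t))=0$. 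This follows by differentiating $\int\psi\circ F_s\,\dd\mu$.

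\textbf{Step 2 ($(-\beta)$-convexity).} Let $\gamma$ be an optimal plan between $\mu_0,\mu_1$. Take a measurable selection of minimizing geodesics $\sigma_{xy}$ and set $\mu_r=(\sigma_{\cdot\cdot}(r))_\#\gamma$. Since $\|\nabla^2_{\mathcal M}\varphi_t\|\le\beta$, along each unit-speed-rescaled geodesic
\[
\varphi_t(\sigma_{xy}(r))\le(1-r)\varphi_t(x)+r\varphi_t(y)+\tfrac{\beta}{2}r(1-r)d(x,y)^2 .
\]
Integrating against $\gamma$ gives $\mathscr V_t(\mu_r)\le(1-r)\mathscr V_t(\mu_0)+r\mathscr V_t(\mu_1)+\frac{\beta}{2}r(1-r)\mathrm W_2^2(\mu_0,\mu_1)$. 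This is $\lambda$-convexity with $\lambda=-\beta$. The same inequality also gives the subdifferential inequality
\[
\mathscr V_t(\nu)\ge\mathscr V_t(\mu)+\int\langle\nabla_{\mathcal M}\varphi_t(x),\exp_x^{-1}y\rangle\,\dd\gamma-\tfrac{\beta}{2}\mathrm W_2^2(\mu,\nu).
\]

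\textbf{Step 3 (EVI).} This is the main point. Fix $\nu$, let $\mu_s=S_s\mu$, and let $\gamma_s$ be an optimal plan between $\mu_s$ and $\nu$. I will use the first-variation formula for $\mathrm W_2^2$ along a curve solving the continuity equation (\cite[Thm.~8.4.7]{ambrosio2008gradient}, adapted to manifolds via the semiconcavity of $d^2$). Using it with velocity $-\nabla_{\mathcal M}\varphi_t$:
\[
\frac{\dd}{\dd s}\tfrac12\mathrm W_2^2(\mu_s,\nu)\le\int\langle\nabla_{\mathcal M}\varphi_t(x),\exp_x^{-1}y\rangle\,\dd\gamma_s(x,y)
\]
for a.e. $s$, and the corresponding $\limsup$ bound holds for the upper right derivative at every $s$. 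Combining this with the subdifferential inequality of Step 2 at $\mu=\mu_s$ gives
\[
\limsup_{h\searrow0}\frac{\mathrm W_2^2(\mu_{s+h},\nu)-\mathrm W_2^2(\mu_s,\nu)}{2h}-\frac{\beta}{2}\mathrm W_2^2(\mu_s,\nu)\le\mathscr V_t(\nu)-\mathscr V_t(\mu_s).
\]
This is \eqref{eq:flow_ineq} with $\beta$ replaced by $-\beta$.

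The delicate points in Step 3 are twofold: the cut locus, where $\exp^{-1}$ is not unique (handled by choosing the velocities given by the optimal plan), and the curvature of $\mathcal M$ when differentiating $d^2$. On a compact manifold such as $\mathbb S^{d-1}$, which is the case used later, I would instead argue directly. Write $d^2(F_h(x),y)\le d^2(x,y)-2h\langle\nabla_{\mathcal M}\varphi_t(x),\exp_x^{-1}y\rangle+o(h)$ uniformly; this uses semiconcavity of $d^2(\cdot,y)$. Then compare with the coupling $(F_h\times\mathrm{Id})_\#\gamma_s$ and pass $h\searrow0$ by dominated convergence.
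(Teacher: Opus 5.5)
Your proof is correct up to a sign slip noted below, and it takes a genuinely different route from the paper. The paper gives no argument of its own and relies on \cite[Theorems 11.2.1, 11.2.3]{ambrosio2008gradient}. That route uses $(-\beta)$-convexity of the potential energy along generalized geodesics, existence of an EVI flow via the minimizing movement scheme, and identification of that flow with $(F_s)_\#\mu$. This machinery is set in $\mathcal P_2(\mathbb R^d)$ (or a Hilbert space) and depends on the $1$-convexity of $\frac12\mathrm W_2^2(\cdot,\nu)$ along generalized geodesics. That convexity fails on positively curved manifolds such as $\mathbb S^{d-1}$, which is the case used later, so the citation does not literally cover the Riemannian statement.

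Your route avoids both the existence theory and any convexity of the distance. Since the semigroup is explicit, the EVI reduces to a one-sided first-order estimate. That estimate needs only two ingredients. The first is that $-2\dot\sigma(0)$ is a supergradient of $d^2(\cdot,y)$ at $x$ for a minimizing geodesic $\sigma$ from $x$ to $y$; this is an upper bound, which semiconcavity supplies whatever the sign of the curvature. The second is the Hessian bound on $\varphi_t$ along the same $\sigma$. The abstract route buys generality, covering functionals with no explicit candidate semigroup. For the linear functional here nothing is lost, since uniqueness and contraction follow from the EVI itself.

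\textbf{The sign slip.} The inequality in your last paragraph has the wrong sign. It must read $d^2(F_h(x),y)\le d^2(x,y)+2h\langle\nabla_{\mathcal M}\varphi_t(x),\exp_x^{-1}y\rangle+o(h)$. Test it on $\mathcal M=\mathbb R$ with $\varphi_t(x)=x$ near $0$ and $y=1$: there $d^2(F_h(0),1)=(1+h)^2$. Your Step 3 display already has the correct sign.

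\textbf{Making the coupling argument the main proof.} I would promote that last-paragraph argument from fallback to main proof. It gives \eqref{eq:flow_ineq} at every $s$, as Definition~\ref{def:beta_flow} requires. It needs no compactness, no first-variation formula on manifolds, and no measurable selection of $\exp^{-1}$. The steps are:
\begin{itemize}
\item The plan $(F_h\times\mathrm{Id})_\#\gamma_s$ couples $\mu_{s+h}$ with $\nu$.
\item The quotients $\bigl(d^2(F_h(x),y)-d^2(x,y)\bigr)/(2h)$ are bounded above by $L\,d(x,y)+hL^2/2$, with $L=\|\nabla_{\mathcal M}\varphi_t\|_\infty$, so reverse Fatou applies.
\item Pointwise, compare $d^2(\cdot,y)$ with $\bigl(d(\cdot,\sigma(\epsilon))+(1-\epsilon)d(x,y)\bigr)^2$, which is smooth near $x$ for small $\epsilon>0$ when $x\neq y$. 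Combined with your Taylor bound, this gives, for every minimizing $\sigma$,
\[
\limsup_{h\searrow0}\frac{d^2(F_h(x),y)-d^2(x,y)}{2h}\le\langle\nabla_{\mathcal M}\varphi_t(x),\dot\sigma(0)\rangle\le\varphi_t(y)-\varphi_t(x)+\tfrac{\beta}{2}d^2(x,y).
\]
\item Integrating against $\gamma_s$ gives the EVI with $-\beta$ directly. The convexity along interpolations in Step 2 is not needed.
\end{itemize}
If you keep your main route instead, you still need to pass from a.e.\ $s$ to every $s$. This follows by integrating on $[s,s+h]$ and using continuity of $r\mapsto\mathscr V_t(\mu_r)$ and $r\mapsto\mathrm W_2(\mu_r,\nu)$.

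\textbf{Completeness.} You use completeness of $\mathcal M$ (Hopf--Rinow) to have minimizing geodesics. This is harmless for $\mathbb S^{d-1}$ and $\mathbb S^{d-1}\times\mathbb R^\alpha$.
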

Relying on the previous proposition and on the minimizing–movement construction from the previous section, we prove that the continuity equation \eqref{eq:cont_eq_grad_form} admits at least one weak solution for arbitrary initial data. In particular, we first establish the result for the classical Wasserstein distance in the following proposition and then extend it in Theorem \ref{thm:existence_of_sol_cont_eq_mobility} to the weighted case with a similar argument.
\begin{prop}\label{prop:existence_for_time_dep_cont_eq}
Let $\mathcal{E}$ be as in \eqref{eq:time_dep_interaction_energy}. Then, for every initial datum $u_0 \in \mathcal{P}_2(\mathbb{S}^{d - 1})$, there exists at least one weak solution $u \in AC^2([0, T]; (\mathcal{P}_2(\mathbb{S}^{d - 1}), {\mathrm{W}}_2))$ to 
\begin{equation}\label{eq:cont_eq_grad_form}
\left\{
\begin{aligned}
& \partial_t u_t(x) = \operatorname{div}_{\mathbb{S}^{d - 1}}^x \left(u_t(x) \int_{\R^{\alpha}} \mathcal{D}\mathscr{K}[u](x, \theta) \dd \Theta_t(\theta)\right),\\
& \lim_{t \searrow 0} u_t(x) = u_0(x),
\end{aligned}
\right.
\end{equation}
where $\mathcal{D}\mathscr{K}[u](x, \theta) := \int_{\mathbb{S}^{d - 1}} \nabla_{\mathbb{S}^{d - 1}}^x \mathscr{K}(x, y, \theta) \dd u_t(y)$.
\end{prop}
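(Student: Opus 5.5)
We take the limit curve $u$ from Theorem~\ref{thm:existence_of_GMMs}, built from the minimizing movements of \eqref{eq:Euler_min_prob} in $(\mathcal P_2(\S^{d-1}),\mathrm W_2)$, and show that it is a weak solution of \eqref{eq:cont_eq_grad_form}. To identify the velocity we use the flow interchange of Lemma~\ref{lem:flow_interchange_estimate} with linear potential functionals.

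\textbf{Step 1: choice of auxiliary flows.} Fix a test function $\varphi\in C_c^\infty(\S^{d-1}\times(0,T))$. For each $t$, apply Proposition~\ref{prop:general_-beta_flows} to $\mathscr V_t(\nu)=\int\varphi(x,t)\,\dd\nu$, with $\beta=\sup_t\|\nabla^2_{\S^{d-1}}\varphi_t\|_\infty$. This gives a $(-\beta)$-flow whose semigroup is $(F_s)_\#$, where $F_s$ is the flow of $-\nabla_{\S^{d-1}}\varphi_t$.

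\textbf{Step 2: computing the dissipation.} We compute $\mathfrak D_{\mathscr V_t}\mathcal E(\mu,t)$ by differentiating the quadratic form. The symmetry assumption \ref{hyp:ker3} and the $C^{1,1}$ regularity in \ref{hyp:ker1} allow differentiation under the integral, and we expect
\[
\mathfrak D_{\mathscr V_t}\mathcal E(\mu,t)=\iint_{\S^{d-1}\times\R^\alpha}\big\langle \mathcal D\mathscr K[\mu](x,\theta),\nabla_{\S^{d-1}}\varphi_t(x)\big\rangle\,\dd\Theta_t(\theta)\,\dd\mu(x).
\]
Here the limsup is an actual limit, since $s\mapsto\mathcal E((F_s)_\#\mu,t)$ is $C^1$. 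Inserting this into \eqref{eq:interch_flow_est_E} with $t=t_k$ gives
\[
\int\!\!\int\langle \mathcal D\mathscr K[u_\tau^k],\nabla\varphi_{t_k}\rangle\,\dd\Theta_{t_k}\,\dd u_\tau^k-\beta\frac{\mathrm W_2^2(u_\tau^k,u_\tau^{k-1})}{2|\tau|}\le \frac{1}{|\tau|}\Big(\int\varphi_{t_k}\,\dd u_\tau^{k-1}-\int\varphi_{t_k}\,\dd u_\tau^k\Big).
\]
Replacing $\varphi$ by $-\varphi$ produces the reverse inequality. The two together form a two-sided discrete weak formulation, with an error of size $\beta\,\mathrm W_2^2(u_\tau^k,u_\tau^{k-1})/|\tau|$.

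\textbf{Step 3: summation and passage to the limit.} Multiply by $|\tau|$ and sum over $k$. A discrete summation by parts moves the difference quotient onto $\varphi$; the resulting terms $\varphi_{t_k}-\varphi_{t_{k-1}}$ produce $\partial_t\varphi$ up to $O(|\tau|)$. The error term is bounded by $\beta\sum_k\mathrm W_2^2(u^k_\tau,u^{k-1}_\tau)\le 2\beta\bar C|\tau|\to0$, using the $L^2$ bound \eqref{eq:est_der_u} from the proof of Theorem~\ref{thm:existence_of_GMMs}. For the nonlinear term, narrow convergence $u_{\tau_n}(t)\rightharpoonup u(t)$ for every $t$ implies narrow convergence of the products $u_{\tau_n}(t)\otimes u_{\tau_n}(t)$. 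The integrand $\langle\nabla^x\mathscr K(x,y,\theta),\nabla\varphi_t(x)\rangle$ is continuous and bounded on $\S^{d-1}\times\S^{d-1}$; it is integrated against $\Theta_t$, and by \ref{hyp:ker4} the $\theta$-support is compact. Dominated convergence in $t$ then yields
\[
\int_0^T\!\!\int\partial_t\varphi\,\dd u_t\,\dd t-\int_0^T\!\!\iint\langle\mathcal D\mathscr K[u](x,\theta),\nabla\varphi_t\rangle\,\dd\Theta_t\,\dd u_t\,\dd t=0,
\]
which is the weak form of \eqref{eq:cont_eq_grad_form}. The initial condition follows from $u\in AC^2$ with $u(0)=u_0$.

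\textbf{Main obstacle.} The delicate step is the time mismatch in Step 3. Lemma~\ref{lem:flow_interchange_estimate} is applied with a single frozen time $t=t_k$, which appears both in $\Theta_{t_k}$ and in $\varphi_{t_k}$, while the limit integral needs $\Theta_t$ for $t\in(t_{k-1},t_k]$. We control this by the continuity of $t\mapsto\Theta_t$: the weak formulation \eqref{eq:ito_weak}, together with the bounds on $f$ and $g$, gives absolute continuity of $t\mapsto\int\Psi\,\dd\Theta_t$ for $C^2$ functions $\Psi$ with compact support. The replacement error is therefore bounded by $\int_{t_{k-1}}^{t_k}(|f|+|g|^2)$, and these errors sum to a quantity that vanishes as $|\tau|\to0$.
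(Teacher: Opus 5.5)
Your proposal is correct and follows the paper's route: flow interchange against the linear potentials $\mathscr V_t$ of Proposition~\ref{prop:general_-beta_flows}, a two-sided discrete weak formulation whose error $\beta\,\mathrm W_2^2(u_\tau^k,u_\tau^{k-1})/|\tau|$ is controlled by the $L^2$ bound on $\zeta_\tau$, and passage to the limit via narrow convergence of $u_{\tau_n}(t)\otimes u_{\tau_n}(t)$. Your treatment of time is more careful than the paper's. The paper invokes Lemma~\ref{lem:flow_interchange_estimate} at every $t\in(t_{k-1},t_k]$, although minimality of $u_\tau^k$ only holds at $t=t_k$; freezing $t=t_k$ and paying for the mismatch through time continuity of $\Theta_t$ is the right fix. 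One caveat: your quantitative bound via $\int_{t_{k-1}}^{t_k}(|f|+|g|^2)$ needs two $\theta$-derivatives of $\nabla^x\mathscr K$, which \ref{hyp:ker1} does not literally provide. Narrow continuity of $t\mapsto\Theta_t$, together with joint continuity and uniformly compact $\theta$-support of $\nabla^x\mathscr K$, already suffices there.
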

\begin{proof}
Given a test function $\varphi \in C_c^\infty((0,T) \times \mathbb{S}^{d - 1})$, for every $t \in (0, T)$, let $\mathscr{V}_t(\mu):=\int_{\mathbb{S}^{d-1}}\varphi_t(x)\,\mathrm{d}\mu(x)$ be as in Proposition \ref{prop:general_-beta_flows} and $\mu \in \mathcal{P}_2(\mathbb{S}^{d - 1})$. Exploiting the definition of push-forward, \ref{hyp:ker4}, \ref{hyp:ker3}, and dominated convergence, for every $s \geq 0$, we have
\begin{equation}\label{eq:derivative_energy}
\begin{aligned}
\frac{\mathcal{E}(S_{s + h}^{\mathscr{V}_{t}}(\mu), t) - \mathcal{E}(S_s^{\mathscr{V}_{t}}(\mu), t)}{h} & = \frac{1}{2} \iiint_{\R^{\alpha} \times \mathbb{S}^{d - 1} \times \mathbb{S}^{d - 1}} \Big{(} \mathscr{K}(F_{s + h}(x), F_{s + h}(y), \theta) \\
& \qquad \quad \qquad \qquad \qquad - \mathscr{K}(F_s(x), F_s(y), \theta)\Big{)} \dd \mu(x) \dd \mu(y) \dd \Theta_{t}(\theta)\\
& \underset{h \to 0}{\to} \iint_{\R^{\alpha} \times\mathbb{S}^{d - 1}} \Big{\langle}\mathcal{D}\mathscr{K}[\mu](F_s(x),\theta), \partial_s F_s(x)\Big{\rangle} \dd \mu(x) \dd \Theta_{t}(\theta)\\
& = - \iint_{\mathbb{S}^{d - 1} \times \R^{\alpha}} \Big{\langle}\mathcal{D} \mathscr{K}[\mu](F_s(x), \theta), \nabla_{\mathbb{S}^{d - 1}} \varphi_{t}(F_s(x))\Big{\rangle} \dd \Theta_{t}(\theta) \dd\mu(x).
\end{aligned}
\end{equation}

In order to complete this proof, we aim to show that the limit function $u$ obtained in Theorem \ref{thm:existence_of_GMMs} is also a weak solution of \eqref{eq:cont_eq_grad_form}. To this end, let us consider $u_\tau(t)$ as in \eqref{eq:piece_wise_linear} and $t \in (t_{k - 1}, t_k]$. Then, recalling Definition \ref{def:dissipation_def} and setting $s = 0$ in \eqref{eq:derivative_energy}, it holds
\[
\begin{aligned}
\mathfrak{D}_{\mathscr{V}_t} \mathcal{E}(u_\tau(t), t) & = \limsup_{h \searrow 0} \frac{\mathcal{E}(u_\tau^k, t) - \mathcal{E}(S_h^{\mathscr{V}_{t}} (u_\tau^k), t)}{h} = \iint_{\mathbb{S}^{d - 1} \times \R^{\alpha}} \Big{\langle}\mathcal{D} \mathscr{K}[u_{\tau}^k] , \nabla_{\mathbb{S}^{d - 1}} \varphi_{t}\Big{\rangle} \dd \Theta_{t} \dd u_{\tau}^k.
\end{aligned}
\]
Applying the flow–interchange estimate (Lemma \ref{lem:flow_interchange_estimate}) to the pair
\(\big(\mathcal{E}(\cdot,t),\mathscr{V}_t\big)\) and using \eqref{eq:derivative_energy} at $s=0$, we obtain,
for every $t\in (t_{k-1},t_k]$,
\[
-\frac{\beta}{2} {\mathrm{W}}_2^2(u_\tau^k, u_\tau^{k - 1}) + |\tau|\iint_{\mathbb{S}^{d - 1} \times \R^{\alpha}} \Big{\langle}\mathcal{D} \mathscr{K}[u_{\tau}^k], \nabla_{\mathbb{S}^{d - 1}} \varphi_{t}\Big{\rangle} \dd \Theta_{t} \dd u_{\tau}^k  \leq \mathscr{V}_{t}(u_\tau^{k - 1}) - \mathscr{V}_{t}(u_\tau^k).
\]

Repeating the same passages for $-\mathscr{V}_t$, we get
\[
-\frac{\beta}{2} {\mathrm{W}}_2^2(u_\tau^k, u_\tau^{k - 1}) - |\tau|\iint_{\mathbb{S}^{d - 1} \times \R^{\alpha}} \Big{\langle}\mathcal{D} \mathscr{K}[u_{\tau}^k], \nabla_{\mathbb{S}^{d - 1}} \varphi_{t}\Big{\rangle} \dd \Theta_{t} \dd u_{\tau}^k \leq \mathscr{V}_{t}(u_\tau^k) - \mathscr{V}_{t}(u_\tau^{k - 1}),
\]
so that, the following estimate holds
\[
- \frac{\beta}{2} \frac{{\mathrm{W}}_2^2(u_\tau^k, u_\tau^{k - 1})}{|\tau|} \leq \frac{1}{|\tau|} \int_{\mathbb{S}^{d - 1}} \varphi \dd (u_\tau^k - u_\tau^{k - 1}) + \iint_{\mathbb{S}^{d - 1} \times \R^{\alpha}} \Big{\langle}\mathcal{D} \mathscr{K}[u_{\tau}^k], \nabla_{\mathbb{S}^{d - 1}} \varphi_{t}\Big{\rangle} \dd \Theta_{t} \dd u_{\tau}^k \leq \frac{\beta}{2} \frac{{\mathrm{W}}_2^2(u_\tau^k, u_\tau^{k - 1})}{|\tau|}.
\]

Then, integrating in time and by a change of variables, since $\varphi\in C_c^\infty((0,T)\times\mathbb S^{d-1})$, we obtain that 
\begin{align*}
\left| \int_{0}^{T} \int_{\mathbb{S}^{d - 1}} \left(\frac{\varphi_t - \varphi_{t + |\tau|}}{|\tau|} + \int_{\R^{\alpha}} \Big{\langle}\mathcal{D} \mathscr{K}[u_{\tau}(t)], \nabla_{\mathbb{S}^{d - 1}} \varphi_{t}\Big{\rangle} \dd \Theta_{t} \right) \dd u_{\tau}(t) \dd t \right| \leq \frac{\beta |\tau|}{2} \int_{0}^{T} \zeta_\tau^2 \dd t,
\end{align*}
where $\zeta_\tau(t):= \mathrm{W}_2\!\big(u_\tau(t),u_\tau(t-|\tau|)\big)/|\tau|$ as in \eqref{eq:zeta}. 
Finally, letting $|\tau| \to 0$, by Theorem \ref{thm:existence_of_GMMs} we get that along a suitably chosen subsequence it holds 
\[
\begin{aligned}
& \int_{\mathbb{S}^{d - 1}} \frac{\varphi_t - \varphi_{t + |\tau|}}{|\tau|} \dd u_\tau(t) \to - \int_{\mathbb{S}^{d - 1}} \partial_t \varphi \dd u(t), \qquad \frac{\beta |\tau|}{2} \int_{0}^{T} \zeta_\tau^2 \dd t \to 0,
\end{aligned}
\]
and
\[
\iint_{\mathbb{S}^{d - 1} \times \R^{\alpha}} \Big{\langle}\mathcal{D}\mathscr{K}[u_{\tau}], \nabla_{\mathbb{S}^{d - 1}} \varphi_{t}\Big{\rangle} \dd \Theta_{t} \dd u_{\tau}(t) \to \iint_{\mathbb{S}^{d - 1} \times \R^{\alpha}}  \Big{\langle}\mathcal{D}\mathscr{K}[u], \nabla_{\mathbb{S}^{d - 1}} \varphi_t\Big{\rangle}\dd \Theta_t \dd u(t),
\]
for almost every $t$, which imply the thesis.
\end{proof}


\begin{thm}\label{thm:existence_of_sol_cont_eq_mobility}
Let  $m_\mu$ be as in \eqref{eq:m}. Then, for every initial datum $u_0 \in \mathcal{P}_2(\mathbb{S}^{d - 1})$, any limit curve $u$ obtained from the
minimizing movement scheme \eqref{eq:min_mov_weighted} is a weak solution of
\begin{equation}\label{eq:cont_eq_with_mobility}
\left\{
\begin{aligned}
& \partial_t u_t(x) = \operatorname{div}_x \left(u_t(x) \int_{\R^\alpha} \frac{\mathcal{D}\mathscr{K}[u_t](x, \theta)}{m_{u_t}(x, \theta)} \dd \Theta_t(\theta) \right),\\
& \lim_{t \searrow 0} u_t(x) = u_0(x).
\end{aligned}
\right.
\end{equation}
\end{thm}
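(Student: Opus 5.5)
The argument follows the proof of Proposition~\ref{prop:existence_for_time_dep_cont_eq}, with the flat Wasserstein first-variation computation replaced by one adapted to the weighted distance $\mathrm{W}_{m,2}$. Fix a test function $\varphi\in C_c^\infty((0,T)\times\S^{d-1})$ and a partition step $\tau$. Rather than working with the EVI for $\mathrm{W}_{m,2}$, which we do not have, I would derive a discrete Euler--Lagrange inequality directly from the minimality of $u_\tau^k$ in \eqref{eq:min_mov_weighted}.

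\textbf{Step 1: perturbation and energy variation.} Perturb $u_\tau^k$ by $\nu_h:=(F_h)_\#u_\tau^k$, where $F_h$ is the flow of $-\nabla_{\S^{d-1}}\varphi_t$ as in Proposition~\ref{prop:general_-beta_flows}, with $t\in(t_{k-1},t_k]$. Minimality gives
\[
\mathcal E(u_\tau^k,t_k)-\mathcal E(\nu_h,t_k)\le \frac{\mathrm W_{m,2}^2(\nu_h,u_\tau^{k-1})-\mathrm W_{m,2}^2(u_\tau^k,u_\tau^{k-1})}{2\tau}.
\]
Divided by $h$, the left-hand side converges, by \eqref{eq:derivative_energy}, to $\iint\langle\mathcal D\mathscr K[u_\tau^k],\nabla\varphi_t\rangle\,\dd\Theta_{t_k}\,\dd u_\tau^k$.

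\textbf{Step 2: first variation of the weighted distance (main obstacle).} For the right-hand side I need a first-variation bound for $\mathrm W_{m,2}^2$. Let $(\mu^*,\bar v^*)$ be an optimal reduced path from $u_\tau^{k-1}$ to $u_\tau^k$ on $[0,1]$; it exists by \eqref{eq:reduced_metric_derivative_action} and the attainment discussed before Proposition~\ref{prop:equiv_of_norms}. Concatenating it with the push-forward of the last portion by $F_h$ produces a competitor from $u_\tau^{k-1}$ to $\nu_h$. Alternatively, I would use a smooth reparametrization $\mu_s\mapsto (F_{hs})_\#\mu_s$ with velocity $dF_{hs}\bar v^*_s - h\nabla\varphi_t$. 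Expanding the action to first order in $h$ should give
\[
\limsup_{h\to0}\frac{\mathrm W_{m,2}^2(\nu_h,u_\tau^{k-1})-\mathrm W_{m,2}^2(u_\tau^k,u_\tau^{k-1})}{h}\le -2\int b_{u_\tau^k}\langle \bar v^*_1,\nabla\varphi_t\rangle\,\dd u_\tau^k+ C\,\mathrm W_{m,2}^2(u_\tau^k,u_\tau^{k-1}),
\]
where $C$ depends on $\|\nabla^2\varphi\|_\infty$ and on the Lipschitz constants of $\mu\mapsto m_\mu$ and $\theta\mapsto 1/m_\mu$ (bounded via \ref{hyp:ker1}, \ref{hyp:ker2}, \eqref{eq:b_lip}). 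This is the delicate point, because the mobility depends on $\mu$ and so the perturbation also changes $m$ along the path. To control this I would use $|m_{\nu}(x,\theta)-m_{\mu}(x,\theta)|\le \mathrm{Lip}(\mathfrak M)\mathrm W_1(\nu,\mu)$ together with the equivalence in Proposition~\ref{prop:equiv_of_norms}. I would then identify $\tau^{-1}b\,\bar v^*_1$ with the discrete momentum; replacing $\varphi$ by $-\varphi$ turns the resulting inequality into a two-sided bound.

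\textbf{Step 3: discrete weak equation and passage to the limit.} The previous steps show that, up to errors of order $\mathrm W_{m,2}^2(u_\tau^k,u_\tau^{k-1})/\tau$, the discrete equation holds with velocity given by $-b_{u_\tau^k}^{-1}$ applied to the first variation. More precisely, with the horizontal lift \eqref{eq:horizontal_form}, the averaged velocity is $\int \mathcal D\mathscr K/m\,\dd\Theta$. This is exactly the harmonic-mean rescaling from Lemma~\ref{lem:metric_der}, since $\int\frac{1}{m}\,\dd\Theta=b^{-1}$ applied fibrewise to $\mathcal D\mathscr K(\cdot,\theta)$. Summing over $k$ gives a discrete weak form with error at most $C|\tau|\int_0^T\zeta_\tau^2$, which tends to $0$ by the uniform $L^2$ bound from Theorem~\ref{thm:existence_of_GMMs}, transferred to $\mathrm W_{m,2}$ by Proposition~\ref{prop:equiv_of_norms}.

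To pass to the limit, narrow convergence $u_\tau(t)\rightharpoonup u(t)$ together with the continuity of $(x,\mu)\mapsto \mathcal D\mathscr K[\mu](x,\theta)/m_\mu(x,\theta)$ (both are uniformly continuous and $m\ge c$) gives convergence of the nonlinear term for every $t$. Continuity of $t\mapsto\Theta_t$ handles the replacement of $\Theta_{t_k}$ by $\Theta_t$, and dominated convergence concludes.
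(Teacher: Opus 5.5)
There is a genuine gap in Step~3. Write $g(x):=\int_{\R^\alpha}\mathcal{D}\mathscr{K}[\mu](x,\theta)\,\dd\Theta_t(\theta)$. Your competitor $(F_h)_\#u_\tau^k$ moves $x$ independently of $\theta$, so the energy variation in Step~1 only sees $g$. The reduced metric of Lemma~\ref{lem:metric_der} only sees the scalar $b$. Granting Step~2, the discrete Euler--Lagrange relation therefore gives velocity $-b^{-1}g$, which you state yourself. In the limit this is
\[
-\,b_{u_t}^{-1}(x)\,g(x)=-\Bigl(\int_{\R^\alpha}\frac{\dd\Theta_t(\theta)}{m_{u_t}(x,\theta)}\Bigr)\Bigl(\int_{\R^\alpha}\mathcal{D}\mathscr{K}[u_t](x,\theta)\,\dd\Theta_t(\theta)\Bigr),
\]
whereas \eqref{eq:cont_eq_with_mobility} needs $-\int_{\R^\alpha}\mathcal{D}\mathscr{K}[u_t](x,\theta)/m_{u_t}(x,\theta)\,\dd\Theta_t(\theta)$. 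The two differ by the $\Theta_t$-covariance of $1/m_{u_t}(x,\cdot)$ and $\mathcal{D}\mathscr{K}[u_t](x,\cdot)$. This vanishes for a single head or a $\theta$-independent mobility, but not in general. For example, take $\Theta=\tfrac12(\delta_{\theta_1}+\delta_{\theta_2})$, $m(x,\theta_1)=1$, $m(x,\theta_2)=2$, $\mathcal{D}\mathscr{K}(x,\theta_1)=w$, $\mathcal{D}\mathscr{K}(x,\theta_2)=0$: you get $-\tfrac38 w$ versus $-\tfrac12 w$. Applying $b^{-1}$ ``fibrewise'' is exactly the invalid step, since a product of averages is not the average of the product. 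The horizontal lift \eqref{eq:horizontal_form} does not rescue it: the lift of $\bar v=-b^{-1}g$ is $-g(x)/m(x,\theta)$, whose $\Theta$-average is again $-b^{-1}g$.

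The paper obtains the same-$\theta$ pairing by a different mechanism, not by a rescaling identity. It lifts to $\mathbb{S}^{d-1}\times\R^\alpha$ with the degenerate metric $\mathfrak{L}^m$ and rewrites $\mathcal{E}$ as the lifted energy \eqref{eq:augmented_interaction_energy}. It then runs the flow interchange of Lemma~\ref{lem:flow_interchange_estimate} with the test flow \eqref{eq:sist_F_w}, which transports each $\theta$-fibre with its own velocity $-\nabla_{\mathbb{S}^{d-1}}\varphi_t/m(\cdot,\theta)$. The dissipation \eqref{eq:derivativeenergy_with_m} therefore couples $1/m(x,\theta)$ with $\mathcal{D}\mathscr{K}(x,\theta)$ at the same $\theta$. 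A $\theta$-blind perturbation of $\mu$ on $\mathbb{S}^{d-1}$, which is all your scheme uses, cannot produce this, so Step~3 cannot be patched within your framework.

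Your computation is consistent with the reduced picture rather than an artefact of it. To first order, the first marginal of $(\mathscr{F}_s)_\#(\mu\times\Theta_t)$ moves with velocity $-b^{-1}\nabla_{\mathbb{S}^{d-1}}\varphi_t$. Along that perturbation of $\mu$ alone, which is what a competitor in \eqref{eq:min_mov_weighted} is, $\mathcal{E}$ dissipates at the product-form rate $\int\langle b^{-1}g,\nabla_{\mathbb{S}^{d-1}}\varphi_t\rangle\,\dd\mu$. The $\theta$-resolved rate only appears when the lifted energy is evaluated on the non-product measure $(\mathscr{F}_s)_\#(\mu\times\Theta_t)$.

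Separately, Step~2 remains unproved. You still have to relate $\tau^{-1}\int b\,\langle\bar v_1^*,\nabla\varphi_t\rangle\,\dd u_\tau^k$ to $\tau^{-1}\int\varphi_t\,\dd(u_\tau^k-u_\tau^{k-1})$ up to $O(\mathrm{W}_{m,2}^2/\tau)$. That requires geodesic (Hamilton--Jacobi type) information for a nonlocal, $\mu$-dependent mobility, which neither Lemma~\ref{lem:metric_der} nor Proposition~\ref{prop:equiv_of_norms} provides.
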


\begin{proof}
By Proposition~\ref{prop:equiv_of_norms},
$\mathrm{W}_{m,2}$ is equivalent to $\mathrm{W}_2$.
Hence, up to different constants, all the estimates from
Lemma~\ref{lem:existence_of_minimizer} and
Theorem~\ref{thm:existence_of_GMMs} hold on
$(\mathcal{P}_2(\mathbb{S}^{d-1}),\mathrm{W}_{m,2})$. In particular, there exists a limit curve
$u\in AC^2([0,T];
(\mathcal{P}_2(\mathbb{S}^{d-1}),\mathrm{W}_{m,2}))$
such that $u_\tau(t)\rightharpoonup u(t)$ narrowly for
every $t\in[0,T]$. We prove that $u$ satisfies the
distributional formulation of
\eqref{eq:cont_eq_with_mobility}.

We observe that the weighted distance $\mathrm{W}_{m,2}$
can be obtained by considering
$\mathbb{S}^{d-1}\times\mathbb{R}^\alpha$ endowed with the
degenerate metric
$\mathfrak{L}^m
:=\mathfrak{g}^m\circ(\iota \circ \pi_1, \iota \circ\pi_1)$ as follows.

First, we introduce the non-degenerate Riemannian metric
$\mathfrak{g}^m$ on $\mathbb{S}^{d-1}\times\mathbb{R}^\alpha$ induced by
the inner product
\begin{equation}\label{eq:w_inner_prod}
\mathfrak{g}^m(v(x,\theta),w(x,\theta))
=
 m(x,\theta)\,
\langle v(x,\theta),w(x,\theta)\rangle,
\qquad
\forall\,v,w\in
\mathrm{T}_{(x,\theta)}(\mathbb{S}^{d-1}\times\mathbb{R}^\alpha),
\end{equation}
where $\langle\cdot,\cdot\rangle$ denotes the standard product
Riemannian metric on
$\mathbb{S}^{d-1}\times\mathbb{R}^\alpha$.
Denoting by $\nabla^m$ the Riemannian gradient associated with
$\mathfrak{g}^m$, by definition, for every
$\phi\in C^1(\mathbb{S}^{d-1}\times\mathbb{R}^\alpha)$ and every
$w\in\mathrm{T}_{(x,\theta)}(\mathbb{S}^{d-1}\times\mathbb{R}^\alpha)$,
it holds
\begin{equation}\label{eq:rimeann_grad}
\mathfrak{g}^m(\nabla^m\phi(x,\theta),w(x,\theta))
=
\langle
\nabla_{\mathbb{S}^{d-1}\times\mathbb{R}^\alpha}\phi(x,\theta),
w(x,\theta)
\rangle.
\end{equation}
Thus, by \eqref{eq:w_inner_prod},
\begin{equation}\label{eq:w_grad}
\nabla^m\phi(x,\theta)
=
\frac{1}{m(x,\theta)}\,
\nabla_{\mathbb{S}^{d-1}\times\mathbb{R}^\alpha}\phi(x,\theta).
\end{equation}

We now pass to the degenerate metric
$\mathfrak{L}^m$,
which maintains only the $\mathbb{S}^{d-1}$-component:
\begin{equation}\label{eq:Lm_def}
\mathfrak{L}^m_{(x,\theta)}(v,w)
=
m(x,\theta)\,
\langle \pi_1(v),\pi_1(w)\rangle,
\qquad
\forall\,v,w\in
\mathrm{T}_{(x,\theta)}(\mathbb{S}^{d-1}\times\mathbb{R}^\alpha).
\end{equation}
We define $\nabla^{\mathfrak{L}^m}\phi(x,\theta)$ as the unique
vector in $\mathrm{T}_{(x,\theta)}
(\mathbb{S}^{d-1}\times\mathbb{R}^\alpha)$ satisfying
\begin{equation}\label{eq:F2_horizontal}
\pi_2\bigl(\nabla^{\mathfrak{L}^m}\phi(x,\theta)\bigr)=0
\end{equation}
and
\[
\mathfrak{L}^m\bigl(\nabla^{\mathfrak{L}^m}\phi(x,\theta),w\bigr)
=
\bigl\langle
\nabla_{\mathbb{S}^{d-1}\times\mathbb{R}^\alpha}\phi(x,\theta),
w
\bigr\rangle
\]
for every
$w\in\mathrm{T}_{(x,\theta)}(\mathbb{S}^{d-1}\times\mathbb{R}^\alpha)$
such that $\pi_2(w)=0$.  The condition~\eqref{eq:F2_horizontal} is the same horizontality condition appearing in Lemma~\ref{lem:metric_der}, that is $\nabla^{\mathfrak{L}^m}\phi$ has no component along the parameter direction $\R^\alpha$.
We claim that
\begin{equation}\label{eq:w_grad_degenerate}
\nabla^{\mathfrak{L}^m}\phi(x,\theta)
=
(\iota\circ\pi_1)(\nabla^m\phi(x,\theta))
=
\iota\biggl(
\frac{\nabla_{\mathbb{S}^{d-1}}\phi(x,\theta)}
{m(x,\theta)}
\biggr).
\end{equation}
Indeed, let $w$ be such that $\pi_2(w)=0$. By
\eqref{eq:Lm_def} and \eqref{eq:w_grad}, we have
\begin{align*}
\mathfrak{L}^m\bigl((\iota\circ\pi_1)(\nabla^m\phi),w\bigr)
&=
m(x,\theta)\,
\bigl\langle
\pi_1(\nabla^m\phi),\pi_1(w)
\bigr\rangle \\
&=
\bigl\langle
\nabla_{\mathbb{S}^{d-1}}\phi,\pi_1(w)
\bigr\rangle \\
&=
\bigl\langle
\nabla_{\mathbb{S}^{d-1}\times\mathbb{R}^\alpha}\phi,w
\bigr\rangle,
\end{align*}
where in the last step we used $\pi_2(w)=0$.
Moreover,
\[
\pi_2\bigl((\iota\circ\pi_1)(\nabla^m\phi)\bigr)=0.
\]
Therefore $(\iota\circ\pi_1)(\nabla^m\phi)$ satisfies the defining
properties of $\nabla^{\mathfrak{L}^m}\phi$, proving
\eqref{eq:w_grad_degenerate}.
Now, define
$\mathscr{V}_t:
\mathcal{P}_2(\mathbb{S}^{d-1}\times\mathbb{R}^\alpha)\to\mathbb{R}$
by
\[
\mathscr{V}_t(\mu\times\Xi)
:=
\int_{\mathbb{S}^{d-1}\times\mathbb{R}^\alpha}
\phi_t(x,\theta)\,d(\mu\times\Xi)(x,\theta).
\]
By Proposition~\ref{prop:general_-beta_flows},
$\mathscr{V}_t$ generates a flow on
$(\mathbb{S}^{d-1}\times\mathbb{R}^\alpha,\mathfrak{L}^m)$ whose
associated semigroup is given by
\[
\mathscr{S}_s^{\mathscr{V}_t}(\mu\times\Xi)
:=
(\mathscr{F}_s)_\#(\mu\times\Xi),
\qquad
\forall\,s\geq0,
\]
where
$\mathscr{F}_s:\mathbb{S}^{d-1}\times\mathbb{R}^\alpha\to
\mathbb{S}^{d-1}\times\mathbb{R}^\alpha$
solves
\begin{equation}\label{eq:sist_F_w}
\left\{
\begin{aligned}
&\partial_s\mathscr{F}_s(x,\theta)
=
-\nabla^{\mathfrak{L}^m}\phi_t(\mathscr{F}_s(x,\theta))
=
-\iota\biggl(
\frac{
\nabla_{\mathbb{S}^{d-1}}\varphi_t
(\mathscr{F}_s^{(1)}(x,\theta))
}{
 m_\mu(\mathscr{F}_s^{(1)}(x,\theta),\theta)
}
\biggr),\\
&\mathscr{F}_0(x,\theta)=(x,\theta).
\end{aligned}
\right.
\end{equation}
Letting
$\mathscr{F}_s^{(1)}:=\pi_1\circ\mathscr{F}_s$
and
$\mathscr{F}_s^{(2)}:=\pi_2\circ\mathscr{F}_s$,
\eqref{eq:F2_horizontal} implies that the $\theta$-component is
constant along the flow, hence
\begin{equation}\label{eq:F2_is_Id}
\mathscr{F}_s^{(2)}=\mathrm{Id}, \qquad \forall\,s\geq0.
\end{equation}

Let us consider now the time-independent interaction energy $\mathscr{E}$ of the form
\begin{equation}\label{eq:augmented_interaction_energy}
\mathscr{E}(\mu \times \Xi) = \iint_{(\mathbb{S}^{d - 1} \times \R^\alpha) \times (\mathbb{S}^{d - 1} \times \R^\alpha)} \mathfrak{K}(x, y, \theta, \theta') \dd (\mu(x) \times \Xi(\theta)) \dd (\mu(y) \times \Xi(\theta')).
\end{equation}
Moreover, for every $\theta \in \R^\alpha$, let $\ell_\theta$ be the reproducing kernel of $H^\alpha(\R^\alpha; \Xi)$. Then, calling $\langle \cdot, \cdot \rangle_{H^\alpha}$ the standard dot product in $H^\alpha(\R^\alpha; \Xi)$ and choosing $\mathfrak{K}(x, y, \theta, \theta')$ satisfying
\[
\int_{\R^\alpha} \mathfrak{K}(x, y, \theta, \theta') \dd \Xi(\theta'):= \left\langle \mathscr{K}(x, y, \frac{\theta + \theta'}{2}), \ell_\theta(\theta')\right\rangle_{H^\alpha} = \mathscr{K}(x, y, \theta), \qquad \forall (x,y,\theta) \in \S^{d-1} \times \S^{d-1} \times \R^\alpha,
\]
we have that $\mathscr{E}$ coincides with the time-dependent interaction energy $\mathcal{E}$ introduced in \eqref{eq:time_dep_interaction_energy}. In addition, for every fixed $t \in [0, T]$, let us consider $\varphi_t \in C_c^\infty(\mathbb{S}^{d - 1})$ and set $\phi_t(x, \theta) = \varphi_t(\pi_1(x, \theta)) = \varphi_t(x)$ and $\Xi = \Theta_t$.

Then, arguing as in \eqref{eq:derivative_energy} and recalling \eqref{eq:F2_is_Id}, we obtain
\begin{align}\label{eq:derivativeenergy_with_m}
&\frac{\mathscr{E}(\mathscr{S}_{s + h}^{\mathscr{V}_{t}}(\mu\times \Theta)) - \mathscr{E}(\mathscr{S}_s^{\mathscr{V}_{t}}(\mu\times \Theta))}{h} = \frac{1}{2}\iiint_{\mathbb{S}^{d - 1} \times \mathbb{S}^{d - 1} \times \R^\alpha} \left\langle\mathscr{K}\left(\mathscr{F}_{s + h}^{(1)}(x, \theta), \mathscr{F}_{s + h}^{(1)}(y, \theta'), \frac{\theta + \theta'}{2}\right) \,\right. + \\
&\qquad\qquad\qquad\qquad\qquad\qquad \left.- \, \mathscr{K}\left(\mathscr{F}_s^{(1)}(x, \theta), \mathscr{F}_s^{(1)}(y, \theta'), \frac{\theta + \theta'}{2}\right), \ell_\theta(\theta')\right\rangle_{H^\alpha} \dd \mu(x) \dd \mu(y) \dd\Theta_t(\theta)\\
& \quad\qquad\qquad\qquad \underset{h \to 0}{\to} \iint_{\mathbb{S}^{d - 1} \times \R^{\alpha}} \Big\langle \iota \Big{(}\mathcal{D}\mathscr{K}[\mu](\mathscr{F}_s^{(1)}(x, \theta), \theta)\Big{)}, \partial_s \mathscr{F}_s(x, \theta)\Big\rangle_{d + \alpha} \dd \Theta_{t}(\theta) \dd \mu(x)\\
& \quad\qquad\qquad\qquad\,\, = - \iint_{\mathbb{S}^{d - 1} \times \R^{\alpha}} \left\langle \iota \Big{(}\mathcal{D} \mathscr{K}[\mu](\mathscr{F}_s^{(1)}(x, \theta), \theta)\Big{)}, \nabla_{\mathbb{S}^{d - 1} \times \R^{\alpha}}^{\mathfrak{L}^m} \varphi_{t}(\mathscr{F}_s^{(1)}(x, \theta))\right\rangle_{d + \alpha} \dd \Theta_{t}(\theta) \dd \mu(x)\\
& \quad\qquad\qquad\qquad\,\, = - \iint_{\mathbb{S}^{d - 1} \times \R^{\alpha}} \left\langle \frac{\mathcal{D} \mathscr{K}[\mu](\mathscr{F}_s^{(1)}(x, \theta), \theta)}{m_\mu(\mathscr{F}_s^{(1)}(x, \theta), \theta)}, \nabla_{\mathbb{S}^{d - 1}} \varphi_{t}(\mathscr{F}_s^{(1)}(x, \theta))\right\rangle \dd \Theta_{t}(\theta) \dd \mu(x),
\end{align}
where $\langle \cdot, \cdot \rangle_{d + \alpha}$ is the standard dot product in $\R^{d + \alpha}$ and in the last equality we exploited \eqref{eq:w_grad_degenerate}.
Thus, since
$\nabla_{\mathbb{S}^{d-1}}\varphi_t$ depends only on $x$, we can exchange the order of integration and
write the dissipation of $\mathcal{E}$ along $\mathscr{V}_t$ as
\begin{equation}\label{eq:dissipation_on_curves}
\mathfrak{D}_{\mathscr{V}_t}\mathcal{E}(\mu,t)
:=
-\frac{\dd}{\dd s}\bigg|_{s=0}
\mathscr{E}(\mathscr{S}_s^{\mathscr{V}_t}
(\mu\times\Theta_t))
=
\int_{\mathbb{S}^{d-1}}
\biggl\langle
\mathcal{V}[\mu,\Theta_t](x),\,
\nabla_{\mathbb{S}^{d-1}}\varphi_t(x)
\biggr\rangle \dd\mu(x),
\end{equation}
where
\begin{equation}\label{eq:def_V}
\mathcal{V}[\mu,\Theta_t](x)
:=
 \int_{\mathbb{R}^\alpha}
\frac{\mathcal{D}\mathscr{K}[\mu](x,\theta)}
{m_\mu(x,\theta)}\,d\Theta_t(\theta).
\end{equation}
Finally, applying the same flow–interchange argument as in Proposition \ref{prop:existence_for_time_dep_cont_eq}, we obtain the desired result.
\end{proof}

To conclude, from the above result, we can recover the transformer model considered in \eqref{eq:full_transformer_ODE} as follows.
\begin{cor}\label{cor:our_model}
Let us consider the same setting as in Theorem \ref{thm:existence_of_sol_cont_eq_mobility}. Then, any gradient flow $u$ of $\mathcal{E}$ is a solution of \eqref{eq:continuity_eq_full} with a general velocity field (attention mechanism) given by
\begin{equation}\label{eq:general_attention}
\mathcal{V}[\mu](x, \theta) = \int_{{\mathbb{S}^{d-1}}}\frac{\nabla_{\mathbb{S}^{d - 1}}^x \mathscr{K}(x, y, \theta)}{m_u(x, \theta)} \dd u_t(y).
\end{equation}
Moreover, choosing
\begin{equation}\label{eq:choice_of_K}
\mathscr{K}(x, y, \theta) = \mathit{\Sigma}(\mathfrak{B}(x, y, \Theta)),
\end{equation}
with $\mathfrak{B}(x, y, \Theta): \mathbb{S}^{d - 1} \times \mathbb{S}^{d - 1} \times \R^{\alpha} \to \mathbb{R}$ of class \(C^1\) in \(x\) and \(\Sigma\in C^1(\mathbb{R})\), allows us to recover \eqref{eq:vect_field} with 
\begin{equation}\label{eq:A_B_recovered}
A(x, y, \theta) = \frac{\mathit{\Sigma}'(\mathfrak{B}(x, y, \theta))}{m_u(x, \theta)}, \qquad B(x, y, \theta) =  \nabla_{\mathbb{S}^{d - 1}}^x \mathfrak{B}(x, y, \theta),
\end{equation}
where $\mathit{\Sigma}'$ denotes the derivative of $\mathit{\Sigma}$.
\end{cor}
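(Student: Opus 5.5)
The first claim is a direct rewriting of Theorem~\ref{thm:existence_of_sol_cont_eq_mobility}. The second claim follows from a chain-rule computation on $\mathscr{K}=\Sigma\circ\mathfrak{B}$.

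\textbf{Step 1: the general velocity field.} Theorem~\ref{thm:existence_of_sol_cont_eq_mobility} shows that any limit curve $u$ of the scheme \eqref{eq:min_mov_weighted} weakly solves \eqref{eq:cont_eq_with_mobility}. Since $m_{u_t}(x,\theta)$ does not depend on $y$, we may bring it inside the definition of $\mathcal{D}\mathscr{K}[u_t]$:
\[
\frac{\mathcal{D}\mathscr{K}[u_t](x,\theta)}{m_{u_t}(x,\theta)}=\int_{\mathbb{S}^{d-1}}\frac{\nabla^x_{\mathbb{S}^{d-1}}\mathscr{K}(x,y,\theta)}{m_{u_t}(x,\theta)}\dd u_t(y).
\]
This is exactly \eqref{eq:general_attention}. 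Two further points need checking.

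The first is the sign convention. In \eqref{eq:cont_eq_with_mobility} the velocity appears with a minus sign relative to \eqref{eq:continuity_eq_full}, because of the form $\partial_t u=\operatorname{div}(u\,\cdot)$. I would absorb this into the choice of kernel, replacing $\mathscr{K}$ by $-\mathscr{K}$, which preserves \ref{hyp:ker4}--\ref{hyp:ker3}. Equivalently, I would read the weak formulation \eqref{eq:cont_eq_weak} with the resulting field.

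The second is that $\Theta_t$ solves the second equation of \eqref{eq:continuity_eq_full} by construction. Hence the pair $(u,\Theta)$ solves the whole system, and the first component of \eqref{eq:continuity_eq_full} holds with $\mathcal{V}$ as in \eqref{eq:general_attention}.

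\textbf{Step 2: recovering $A$ and $B$.} Take $\mathscr{K}(x,y,\theta)=\Sigma(\mathfrak{B}(x,y,\theta))$. Since $\Sigma\in C^1(\mathbb{R})$ and $\mathfrak{B}$ is $C^1$ in $x$, the Euclidean chain rule gives $\nabla^x\mathscr{K}=\Sigma'(\mathfrak{B})\,\nabla^x\mathfrak{B}$. The map $P_x^\perp$ is linear and $\Sigma'(\mathfrak{B})$ is a scalar, so the same identity holds for the Riemannian gradient:
\[
\nabla^x_{\mathbb{S}^{d-1}}\mathscr{K}(x,y,\theta)=\Sigma'(\mathfrak{B}(x,y,\theta))\,\nabla^x_{\mathbb{S}^{d-1}}\mathfrak{B}(x,y,\theta).
\]
Substituting into \eqref{eq:general_attention} and dividing by the $y$-independent factor $m_u(x,\theta)$ gives the integrand $A(x,y,\theta)B(x,y,\theta)$, with $A$ and $B$ as in \eqref{eq:A_B_recovered}.

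The outer projection $P_x^\perp$ in \eqref{eq:vect_field} acts trivially here. Indeed, $B$ is already tangent at $x$, so $P_x^\perp\int AB\dd u=\int AB\dd u$. This recovers \eqref{eq:vect_field}.

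\textbf{Main obstacle.} The only real subtlety is regularity. Theorem~\ref{thm:existence_of_sol_cont_eq_mobility} requires \ref{hyp:ker4}--\ref{hyp:ker3}, while the corollary only asks for $C^1$ regularity of $\Sigma\circ\mathfrak{B}$. I would therefore state that $\mathfrak{B}$ must additionally be symmetric in $(x,y)$, be $C^{1,1}$ in $(x,y)$, and be compactly supported in $\theta$ (for instance via a cutoff in $\theta$), so that $\mathscr{K}$ is admissible.

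The softmax case is the model example. There $\Sigma=\exp$ and $\mathfrak{B}=\langle x,Dy\rangle$ with $D=Q^\top K$. Symmetry then forces $D$ to be symmetric, which is consistent with the restriction $Q^\top K=V$ noted in the introduction. The normalization is provided by the mobility, taking $\mathfrak{M}=e^{\langle x,Dy\rangle}$, which satisfies \ref{hyp:ker2} on the sphere for $\theta$ in a compact set.
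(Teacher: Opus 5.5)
Your proposal is correct and follows the argument the paper has in mind; the paper states the corollary without a separate proof. The velocity is read off from Theorem~\ref{thm:existence_of_sol_cont_eq_mobility} by pulling the $y$-independent factor $m_{u_t}(x,\theta)$ inside $\mathcal{D}\mathscr{K}[u_t]$, and $A$, $B$ follow from $\nabla^x_{\mathbb{S}^{d-1}}(\mathit{\Sigma}\circ\mathfrak{B})=\mathit{\Sigma}'(\mathfrak{B})\,\nabla^x_{\mathbb{S}^{d-1}}\mathfrak{B}$ together with the tangency of $B$.

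Your two caveats are genuine corrections to the statement as written.

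\textbf{Sign.} The limit curve of \eqref{eq:cont_eq_with_mobility} transports mass with velocity $-\int_{\R^\alpha}\mathcal{D}\mathscr{K}[u_t]/m_{u_t}\,\dd\Theta_t$. So the sign must indeed be absorbed into the kernel: softmax attention comes from the kernel $-e^{\langle x,D(\theta)y\rangle}$, not $e^{\langle x,D(\theta)y\rangle}$.

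\textbf{Admissibility.} Your list of extra hypotheses needs two additions.
\begin{itemize}
\item For \ref{hyp:ker1} you also need $\mathit{\Sigma}\in C^2$, with $\mathfrak{B}$ of class $C^2$ in $\theta$.
\item For \ref{hyp:ker4}, compact $\theta$-support of $\mathfrak{B}$ is not enough on its own, since $\mathit{\Sigma}(0)$ need not vanish. Use $\mathit{\Sigma}-\mathit{\Sigma}(0)$ in place of $\mathit{\Sigma}$; this leaves $A$ unchanged.
\end{itemize}
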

\begin{remark}\label{rmk:softmax_self_attention}
It is worth noting that choosing
\begin{equation}\label{eq:recover_of_softmax_self_attention}
\mathit{\Sigma}(s) = e^s, \qquad \mathfrak{B}(x, y, \theta) = \langle x, D(\theta)y\rangle, \qquad m_u(x, \theta) = \int_{\mathbb{S}^{d - 1}} e^{\langle x, D(\theta)y\rangle} \dd u_t(y),
\end{equation}
where $D(\theta) \in \mathbb{R}^{d \times d}$, thanks to Corollary \ref{cor:our_model}, we recover
the continuous version of the softmax self-attention \eqref{eq:softmax_self_attention} with $V(\theta) = D(\theta)$ (in accordance with
the time-independent analysis carried out in \cite{burger2025analysis}).
\end{remark}

\subsection{Solutions to the continuity equation as curves of maximal slopes}
In the previous section we proved that the generalized minimizing movements obtained minimizing $\mathcal{E}$ are solutions to \eqref{eq:cont_eq_with_mobility}. We now establish the converse, showing that the solutions to \eqref{eq:cont_eq_with_mobility} are curves of maximal slope for $\mathcal{E}$.

Theorem~\ref{thm:existence_of_sol_cont_eq_mobility} produced a gradient
$\nabla^{\mathfrak{L}^m}\phi$ on the product manifold
$\S^{d-1}\times\R^\alpha$, which is a vector field on the full product space and
depends pointwise on both $x$ and $\theta$. To analyse the dynamics on
$\S^{d-1}$ alone, however, we need a notion of gradient that has been averaged
out in the $\theta$-direction against $\Theta_t$. This is the content of
Lemma~\ref{lem:chain_rule} below: we lift the pointwise metric $\mathfrak{L}^m$
to an $L^2$-inner product on $\theta$-parametrized tangent vectors at fixed $x$,
and then descend to $T_x\S^{d-1}$ via Riesz duality.

\begin{lemma}[Degenerate gradient on $\S^{d-1}$]\label{lem:chain_rule}
Let $\mathfrak{L}^m$ be the degenerate metric on $\S^{d-1}\times\R^\alpha$
defined in~\eqref{eq:Lm_def}, let $\Xi\in\mathcal{P}_2(\R^\alpha)$, and let
$f\colon\S^{d-1}\times\R^\alpha\to\R$ be differentiable. Then, $\mathfrak{L}^m$
together with $\Xi$ induces a vector field
$\nabla^m f\colon\S^{d-1}\to\mathrm{T}\S^{d-1}$ given by
\begin{equation}\label{eq:duality_product_induced}
(\nabla^m f)(x)
:=  b^{\Xi}(x)
   \int_{\R^\alpha}\frac{\nabla_x f(x,\theta)}{m(x,\theta)}\dd\Xi(\theta),
\end{equation}
where $b^{\Xi}(x):= \left(\int_{\R^\alpha}\frac{1}{m(x,\theta)}\dd\Xi(\theta)\right)^{-1}$ is the nonlinear effective mobility in~\eqref{eq:effective_mobility} when $m = m_\mu$.
\end{lemma}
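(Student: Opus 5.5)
The plan is to make precise the description given just before the statement: lift the degenerate metric $\mathfrak L^m$ to an $L^2(\Xi)$ inner product on $\theta$-parametrized tangent vectors at a frozen $x$, then descend to $T_x\S^{d-1}$ by Riesz duality. Fix $x\in\S^{d-1}$ and consider the Hilbert space $\mathcal H_x:=L^2(\Xi;T_x\S^{d-1})$ with inner product
\[
\langle U,W\rangle_{x,m}:=\int_{\R^\alpha} m(x,\theta)\,\langle U(\theta),W(\theta)\rangle\,\dd\Xi(\theta),
\]
which is precisely $\mathfrak L^m$ from \eqref{eq:Lm_def} integrated against $\Xi$ on horizontal vectors. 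By \ref{hyp:ker2} it is equivalent to the flat $L^2(\Xi)$ product. The averaging map $A\colon\mathcal H_x\to T_x\S^{d-1}$, $AU:=\int U\,\dd\Xi$, is bounded and surjective, since constant fields are mapped onto themselves. The tangent vectors at $x$ ``seen by $\S^{d-1}$'' are the classes of $\mathcal H_x/\ker A$. I will equip $T_x\S^{d-1}$ with the quotient norm
\[
\|\bar w\|_{b}^2:=\min\{\|U\|_{x,m}^2 : AU=\bar w\}.
\]
By the Lagrange-multiplier computation of Lemma~\ref{lem:metric_der}, see \eqref{eq:fiberwise_min} and \eqref{eq:optimal_u_star}, this equals $b^\Xi(x)\|\bar w\|^2$, with minimizer $V^{\bar w}_x(\theta)=b^\Xi(x)\bar w/m(x,\theta)$. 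The only change is that $m_\mu$ is replaced by a general $m$ satisfying the same bounds. By polarization the reduced inner product on $T_x\S^{d-1}$ is $b^\Xi(x)\langle\cdot,\cdot\rangle$.

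Next I will define the gradient by duality. The natural differential of $f$ at $x$ on horizontal lifts is the linear functional $\ell_x(U):=\int\langle\nabla_x f(x,\theta),U(\theta)\rangle\,\dd\Xi(\theta)$. The pointwise $\mathfrak L^m$-gradient \eqref{eq:w_grad_degenerate} gives its Riesz representative in $\mathcal H_x$, namely $G(\theta)=\nabla_x f(x,\theta)/m(x,\theta)$. Indeed $\langle G,U\rangle_{x,m}=\ell_x(U)$, and $G\in\mathcal H_x$ because $m\ge c$ and $\nabla_x f(x,\cdot)$ is bounded, which I will assume where needed, for instance via \ref{hyp:ker4}. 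To descend to $\S^{d-1}$ I will take the orthogonal projection of $G$ onto $(\ker A)^{\perp_{x,m}}$. I will show that this complement is exactly the set of horizontal-lift profiles $\{\theta\mapsto \lambda/m(x,\theta):\lambda\in T_x\S^{d-1}\}$. Indeed $\langle \lambda/m,U\rangle_{x,m}=\langle\lambda,AU\rangle=0$ for $U\in\ker A$, and the dimensions match because $A$ restricted to these profiles is $\lambda\mapsto \lambda/b^\Xi(x)$, which is invertible. Write the projection as $\lambda^*/m$. Then $\lambda^*$ is determined by $\langle G-\lambda^*/m,\ \eta/m\rangle_{x,m}=0$ for all $\eta$, that is $\int G\,\dd\Xi=\lambda^*\int m^{-1}\dd\Xi$. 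Hence $\lambda^*=b^\Xi(x)\int \nabla_xf/m\,\dd\Xi$. I will define $(\nabla^mf)(x)$ as the image of this projection under the identification of $(\ker A)^\perp$ with $T_x\S^{d-1}$ that appears in the lift of Lemma~\ref{lem:metric_der}, namely $\lambda/m\leftrightarrow\lambda$, so that $(\nabla^m f)(x)=\lambda^*$ gives \eqref{eq:duality_product_induced}. I will also check consistency: $A(\lambda^*/m)=\int\nabla_xf/m\,\dd\Xi$ is the velocity entering \eqref{eq:def_V}, and in the single-head case $\Xi=\delta_{\bar\theta}$ the formula reduces to $\nabla_x f(x,\bar\theta)$.

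Finally I will verify measurability in $x$. The map $x\mapsto b^\Xi(x)$ is continuous by \ref{hyp:ker1} and the bounds $c\le b^\Xi\le\bar c$, as in \eqref{eq:b_bounds}, and the integral $\int\nabla_xf/m\,\dd\Xi$ is continuous by dominated convergence. So $\nabla^mf$ is a well-defined field in $\mathrm T\S^{d-1}$. The main obstacle is not computational. It is fixing the right convention for which vector represents the descended gradient, since $\lambda^*$ and its average $\lambda^*/b^\Xi$ differ by the factor $b^\Xi$. I will resolve this by stating explicitly that the identification is through the multiplier $\lambda$ of the horizontal lift, which is consistent with Remark~\ref{rmk:harmonic_mean_metric}.
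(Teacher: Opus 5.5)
Your proposal is correct and follows essentially the paper's route. You use the same weighted Bochner space $\mathbf{H}_x$, the same averaging map, the same quotient norm $b^{\Xi}(x)\|\cdot\|^2$ with optimal lift $V_x^{v}=b^{\Xi}(x)v/m(x,\cdot)$, and the same identity $\pi_1(\nabla^{\mathfrak{L}^m}f)=\nabla_x f/m$. Your final step (projecting onto $(\ker A)^{\perp}$ and reading off the multiplier $\lambda^*$) repackages the paper's Euclidean Riesz definition $\langle(\nabla^m f)(x),v\rangle=\langle\pi_1(\nabla^{\mathfrak{L}^m}f),V_x^{v}\rangle_{\mathbf{H}_x}$: since $V_x^{v}\in(\ker A)^{\perp}$, one has $\langle G,V_x^{v}\rangle_{\mathbf{H}_x}=\langle\lambda^*/m,V_x^{v}\rangle_{\mathbf{H}_x}=\langle\lambda^*,v\rangle$, so both fix the convention you flagged in the same way.
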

\begin{proof}Consider the weighted Bochner space
\begin{equation}\label{eq:Hx_def}
\mathbf{H}_x := L^2\big(\R^\alpha,\Xi;\,\mathrm{T}_x\S^{d-1}\big),
\qquad
\langle U, W\rangle_{\mathbf{H}_x}
:= \int_{\R^\alpha} m(x,\theta)\,\langle U(\theta), W(\theta)\rangle \dd\Xi(\theta),
\end{equation}
whose elements are square-integrable maps $U\colon\R^\alpha\to\mathrm{T}_x\S^{d-1}$;
we refer to \cite[Ch.~II]{diestel1977vector} for an introduction to Bochner spaces.  By assumption~\ref{hyp:ker2}, the weight $m(x, \theta)$ is bounded above and away from zero, hence $\langle\cdot, \cdot\rangle_{\mathbf{H}_x}$ is an inner product and $\mathbf{H}_x$ a Hilbert space. The inner product on $\mathbf{H}_x$  is the $\Xi$-average of the
pointwise metric $\mathfrak{L}^m$ from Theorem~\ref{thm:existence_of_sol_cont_eq_mobility}
fiberwise in $x$: for $U,W\in\mathbf{H}_x$, the inclusion
$\iota\colon\mathrm{T}_x\S^{d-1}\hookrightarrow\mathrm{T}_{(x,\theta)}(\S^{d-1}\times\R^\alpha)$,
$\iota(v) = (v,0)$, satisfies $\pi_1\circ\iota=\mathrm{Id}$, so the
definition~\eqref{eq:Lm_def} of $\mathfrak{L}^m$ yields
\begin{equation}\label{eq:bochner_via_Lm}
\mathfrak{L}^m_{(x,\theta)}\bigl(\iota(U(\theta)),\iota(W(\theta))\bigr)
= m(x,\theta)\,\langle U(\theta),W(\theta)\rangle,
\end{equation}
hence
\begin{equation}\label{eq:bochner_is_integrated_Lm}
\langle U,W\rangle_{\mathbf{H}_x}
= \int_{\R^\alpha}
\mathfrak{L}^m_{(x,\theta)}\bigl(\iota(U(\theta)),\iota(W(\theta))\bigr)\dd\Xi(\theta).
\end{equation}
Now, consider the bounded linear map
\[
\mathfrak{P}\colon \mathbf{H}_x \to \mathrm{T}_x\S^{d-1},
\qquad
\mathfrak{P}(U) := \int_{\R^\alpha} U(\theta) \dd\Xi(\theta),
\]
which assigns to each $\theta$-parametrized vector field its $\Xi$-average.
$\mathfrak{P}$ is surjective, so the quotient
$\mathbf{H}_x/\ker\mathfrak{P}$ is isomorphic to $\mathrm{T}_x\S^{d-1}$, with the
quotient norm realized by orthogonal projection onto $(\ker\mathfrak{P})^\perp$.
Concretely, this gives the induced norm on $\mathrm{T}_x\S^{d-1}$
\begin{equation}\label{eq:induced_norm}
\|v_x\|_m^2 = \inf_{\substack{U\in\mathbf{H}_x \\ \mathfrak{P}(U) = v_x}} \|U\|_{\mathbf{H}_x}^2,
\qquad \forall v_x \in \mathrm{T}_x\S^{d-1}.
\end{equation} 
The minimization problem~\eqref{eq:induced_norm} can be solved explicitly. By
the equality case of Cauchy--Schwarz (or, equivalently, by Lagrange multipliers),
the unique minimizer is
\begin{equation}\label{eq:optimal_V}
V_x^{v}(\theta) = b^{\Xi}(x)\frac{v_x}{m(x, \theta)},
\end{equation}
and substituting back gives
\begin{equation}\label{eq:induced_norm_explicit}
\|v_x\|_m^2 = b^{\Xi}(x)\|v_x\|^2.
\end{equation}
We use the gradient $\nabla^{\mathfrak{L}^m}f$ from
Theorem~\ref{thm:existence_of_sol_cont_eq_mobility}, recalling that the
computation leading to~\eqref{eq:w_grad_degenerate} applies verbatim to any
differentiable $f\colon\S^{d-1}\times\R^\alpha\to\R$ and gives
\begin{equation}\label{eq:pi1_grad_Lm}
\pi_1\bigl(\nabla^{\mathfrak{L}^m} f\bigr)(x,\theta)
= \frac{\nabla_{\S^{d-1}} f(x,\theta)}{m(x,\theta)}.
\end{equation}
The map $v_x \mapsto\left\langle\pi_1\left(\nabla^{\mathfrak{L}^m} f\right), V_x^v\right\rangle_{\mathbf{H}_x}$ is linear on $\mathrm{T}_x \S^{d-1}$, since $v_x \mapsto V_x^v$ is linear by~\eqref{eq:optimal_V}. Therefore, by the
Riesz representation theorem there exists a unique
$(\nabla^m f)(x)\in \mathrm{T}_x\S^{d-1}$ such that \begin{equation}\label{eq:riesz_definition}
\big\langle (\nabla^m f)(x),\, v_x\big\rangle
= \big\langle \pi_1(\nabla^{\mathfrak{L}^m}f),\, V_x^v\big\rangle_{\mathbf{H}_x}=\int_{\R^\alpha} m(x,\theta)\,\big\langle \pi_1(\nabla^{\mathfrak{L}^m}f)(x,\theta),\, V_x^{v}(\theta)\big\rangle \dd\Xi(\theta)
\quad\forall\,v_x\in\mathrm{T}_x\S^{d-1}. 
\end{equation}
Plugging~\eqref{eq:pi1_grad_Lm} and~\eqref{eq:optimal_V} into the right-hand
side and simplifying, we obtain
\begin{align*}
\big\langle (\nabla^m f)(x), v_x\big\rangle
&= \int_{\R^\alpha} m(x,\theta)\,
   \Big\langle\frac{\nabla_x f(x,\theta)}{m(x,\theta)},\,
   b^{\Xi}(x) \frac{v_x}{m(x,\theta)}\Big\rangle\dd\Xi(\theta)\\
&= b^{\Xi}(x)
\Big\langle\int_{\R^\alpha}\frac{\nabla_x f(x,\theta)}{m(x,\theta)}\dd\Xi(\theta),\;v_x\Big\rangle,
\end{align*}
and~\eqref{eq:duality_product_induced} follows.
\end{proof}
\begin{remark}
The construction of Lemma~\ref{lem:chain_rule} also induces, fiberwise in $x$, an inner product on $\mathrm{T}_x\S^{d-1}$,
\begin{equation}\label{eq:induced_inner_product}
\langle v_x,w_x\rangle_{x,m}
:= b^{\Xi}(x)\langle v_x,w_x\rangle,
\qquad v_x,w_x\in\mathrm{T}_x\S^{d-1},
\end{equation}
namely the Euclidean inner product rescaled by the harmonic mean of $m(x,\cdot)$ with respect to $\Xi$. It naturally appears here because we are minimizing a quadratic cost subject to a linear averaging constraint, and it agrees with the harmonic mean rescaling of Lemma~\ref{lem:metric_der}; in the single-head case $\Xi = \delta_{\bar\theta}$, $b^\Xi(x) = m(x,\bar\theta)$ and the construction recovers the metric structure of \cite{burger2025analysis}.
\end{remark}
Throughout the rest of the paper, $\nabla^m f$ denotes the descended gradient on $\S^{d-1}$ defined in~\eqref{eq:duality_product_induced}; this should not be confused with the Riemannian gradient on $\S^{d-1}\times\R^\alpha$ from~\eqref{eq:w_grad}.

\begin{lemma}\label{lem:strong_upper_gradient}
Let us consider $T > 0$ and $\mathcal{E}$ as in \eqref{eq:time_dep_interaction_energy}. Then, $\mathscr{G}_t: \mathcal{P}_2(\S^{d - 1}) \to \R$ given by
\begin{equation}\label{eq:upper_grad}
\mu \mapsto \mathscr{G}_t(\mu) := \left(\int_{\mathbb{S}^{d - 1}}  b_\mu(x)\left\|\int_{\R^\alpha}\frac{\mathcal{D}\mathscr{K}[\mu](x,\theta)}{m_{\mu}(x,\theta)} \dd \Theta_t (\theta)\right\|^2 \dd \mu(x)\right)^\frac{1}{2}, \qquad \forall t \in [0, T]
\end{equation}
is a strong upper gradient for $\mathcal{E}$.
\end{lemma}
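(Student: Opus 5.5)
We need to show that for every absolutely continuous curve $v:[0,T]\to(\mathcal P_2(\S^{d-1}),\mathrm W_{m,2})$ the map $t\mapsto\mathscr G_t(v(t))$ is Borel and that
\[
\Bigl|\mathcal E(v(r),r)-\mathcal E(v(s),s)-\int_s^r\partial_t\mathcal E(v(t),t)\dd t\Bigr|\le\int_s^r\mathscr G_t(v(t))\,|\dot v|(t)\dd t .
\]
Borel measurability is the easy part. By \ref{hyp:ker1}, \ref{hyp:ker4} and \ref{hyp:ker2}, the integrand $\mathcal D\mathscr K[\mu](x,\theta)/m_\mu(x,\theta)$ is continuous in $\mu$ for the narrow topology, uniformly in $(x,\theta)$. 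The effective mobility $b_\mu$ is continuous in $\mu$ and lies in $[c,\bar c]$. Since $v$ is narrowly continuous and $t\mapsto\Theta_t$ is continuous in $\mathrm W_2$ (by \eqref{eq:b_lip}-type Kantorovich--Rubinstein estimates), $t\mapsto\mathscr G_t(v(t))$ is continuous, hence Borel.

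\textbf{Reduction to a chain rule.} By Lemma~\ref{lem:metric_der}, there is a Borel field $\bar v_t$ with $(v,\bar v)\in\CE(0,T)$ and $|\dot v|^2(t)=\int b_{v_t}^{\Theta_t}\|\bar v_t\|^2\dd v_t$ for a.e.\ $t$. I would then prove the chain rule
\[
\frac{\dd}{\dd t}\mathcal E(v_t,t)=\partial_t\mathcal E(v_t,t)+\int_{\S^{d-1}}\Bigl\langle\int_{\R^\alpha}\mathcal D\mathscr K[v_t](x,\theta)\dd\Theta_t(\theta),\,\bar v_t(x)\Bigr\rangle\dd v_t(x)\quad\text{a.e. } t .
\]
To get it, mollify in time and apply the weak continuity equation \eqref{eq:riem_ce} to the tensorized test function $(x,y)\mapsto\int\mathscr K(x,y,\theta)\dd\Theta_t(\theta)$ for the product curve $v_t\otimes v_t$. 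This uses symmetry \ref{hyp:ker3} to produce the factor $2$ cancelling the $\tfrac12$, and the $C^{1,1}$ regularity \ref{hyp:ker1} to justify the test function. The explicit time dependence through $\Theta_t$ is handled as in the proof of Theorem~\ref{thm:existence_of_GMMs}, via the weak form \eqref{eq:ito_weak} and the bounds on $\nabla_\theta\mathscr K$ and $\nabla^2_\theta\mathscr K$. This step, making the joint chain rule rigorous when both the measure and $\Theta_t$ vary, is the main technical obstacle. My plan is to split the increment $\mathcal E(v_{t+h},t+h)-\mathcal E(v_t,t)$ into a measure part at frozen $\Theta_{t+h}$ plus $\mathcal E(v_t,t+h)-\mathcal E(v_t,t)$, and to use the uniform Lipschitz bounds to pass to the limit.

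\textbf{Key pointwise estimate.} Set $w(x):=\int\mathcal D\mathscr K[v_t](x,\theta)\dd\Theta_t(\theta)$. I would rewrite the pairing with $\bar v_t$ through the Hilbert structure of Lemma~\ref{lem:chain_rule}. Choose the optimal lift $V^{\bar v}_x(\theta)=b(x)\bar v_t(x)/m(x,\theta)$ from \eqref{eq:optimal_V}, whose $\mathbf H_x$-norm squared is $b(x)\|\bar v_t(x)\|^2$ by \eqref{eq:induced_norm_explicit}. Then
\[
\langle w(x),\bar v_t(x)\rangle=\Bigl\langle \tfrac{\mathcal D\mathscr K}{m},\,V^{\bar v}_x\Bigr\rangle_{\mathbf H_x}\cdot\frac{1}{b(x)}\cdot b(x)
\]
after the appropriate regrouping. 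Applying Cauchy--Schwarz in $\mathbf H_x$, and then in $L^2(v_t)$, should bound the pairing by $\mathscr G_t(v_t)\,|\dot v|(t)$.

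The delicate point is to match exactly the weights $b_\mu$ and $\int\frac{\mathcal D\mathscr K}{m}\dd\Theta_t$ appearing in \eqref{eq:upper_grad}. I would express the pairing as $\langle\nabla^m$-type quantity$,\bar v\rangle$ using \eqref{eq:duality_product_induced}, namely $\int\langle \nabla^m f,\bar v\rangle\dd v_t$ with $f=\mathscr K[v_t]$. This gives $|\cdot|\le\bigl(\int b^{-1}\|\nabla^m f\|^2\bigr)^{1/2}\bigl(\int b\|\bar v\|^2\bigr)^{1/2}$, and $b^{-1}\|\nabla^m f\|^2=b\,\|\int\frac{\mathcal D\mathscr K}{m}\dd\Theta_t\|^2$, which is precisely the integrand of $\mathscr G_t^2$. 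If the raw chain rule instead yields the unweighted average $w$, I will check whether the velocity should be read in the geometry of $\mathfrak L^m$. In that reading the pairing is $\int\!\!\int m\langle \pi_1\nabla^{\mathfrak L^m}f,\pi_1 v\rangle\dd\Theta_t\dd v_t$, and the horizontal form \eqref{eq:horizontal_form} converts it into the weighted expression. Integrating the resulting bound from $s$ to $r$ concludes.
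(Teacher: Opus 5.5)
Your chain rule is correct, and the argument breaks at exactly the point you flagged as delicate. Since $\mathcal{E}(\cdot,t)$ depends only on the token measure, and the continuity equation only sees the averaged field $\bar v_\tau$, the diagonal derivative is $\int_{\S^{d-1}}\langle w_\tau,\bar v_\tau\rangle\dd\mu_\tau$ with $w_\tau(x):=\int_{\R^\alpha}\mathcal{D}\mathscr{K}[\mu_\tau](x,\theta)\dd\Theta_\tau(\theta)$.

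To reach \eqref{eq:upper_grad} you need this pairing to equal $\int\langle\nabla^m\mathscr{K}[\mu_\tau],\bar v_\tau\rangle\dd\mu_\tau$. Neither of your two routes gives that.
\begin{itemize}
\item The regrouping with the optimal lift computes $\langle \mathcal{D}\mathscr{K}/m, V^{\bar v}_x\rangle_{\mathbf{H}_x}=b(x)\langle\int\frac{\mathcal{D}\mathscr{K}}{m}\dd\Theta_\tau,\bar v\rangle=\langle\nabla^m\mathscr{K}[\mu_\tau](x),\bar v(x)\rangle$. This is not $\langle w_\tau,\bar v\rangle$.
\item ``Reading the velocity in the $\mathfrak{L}^m$ geometry'' replaces the $\theta$-independent $\bar v$ by the horizontal lift $b\bar v/m$ inside $\int\langle\nabla_x\mathscr{K},\cdot\,\rangle\dd\Theta_\tau$. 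That changes the value of the pairing without changing the curve $\mu$, hence without changing $\mathcal{E}(\mu_\tau,t)$.
\end{itemize}
The two pairings agree for all $\bar v$ only when $b\int\frac{\mathcal{D}\mathscr{K}}{m}\dd\Theta_\tau=\int\mathcal{D}\mathscr{K}\dd\Theta_\tau$. This holds, for example, when $m_\mu(x,\theta)$ does not depend on $\theta$, in particular for a single head.

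The paper's proof makes the same identification in \eqref{eq:sug_diag} by invoking Lemma~\ref{lem:chain_rule}. That lemma only identifies the Riesz representative of $v\mapsto\langle\pi_1\nabla^{\mathfrak{L}^m}f,V^v_x\rangle_{\mathbf{H}_x}$. It never shows that this functional is the derivative of $\mathcal{E}$ along a curve in $\mathcal{P}_2(\S^{d-1})$, so following the paper does not close the gap.

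In fact the gap cannot be closed as stated. Your chain rule together with Cauchy--Schwarz using the weights $b^{\mp1/2}$ actually yields that $\widetilde{\mathscr{G}}_t(\mu):=\bigl(\int_{\S^{d-1}}b_\mu^{-1}\|\int_{\R^\alpha}\mathcal{D}\mathscr{K}[\mu]\dd\Theta_t\|^2\dd\mu\bigr)^{1/2}$ is a strong upper gradient. This quantity can be strictly larger than $\mathscr{G}_t$, and then the stated $\mathscr{G}_t$ fails. Here is an explicit counterexample.
\begin{itemize}
\item Take $\Theta_t\equiv\frac12(\delta_{\theta_1}+\delta_{\theta_2})$, i.e.\ both SDE coefficients vanish.
\item Take $\mathfrak{M}=1+\chi(\theta)$ and $\mathscr{K}=\psi(\theta)\langle x,a\rangle\langle y,a\rangle$. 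Here $\chi,\psi$ are smooth, $\psi$ is compactly supported, $0\le\chi\le1$, $\chi(\theta_1)=\psi(\theta_1)=0$ and $\chi(\theta_2)=\psi(\theta_2)=1$.
\item Then $b\equiv 4/3$. At $\mu=\delta_x$ with $z:=\langle x,a\rangle P_x^\perp a\neq0$, one has $w=z/2$ but $\int\frac{\mathcal{D}\mathscr{K}}{m}\dd\Theta=z/4$.
\item Along $\mu_t=\delta_{\gamma(t)}$ with $\dot\gamma$ aligned with $z$, $|\tfrac{\dd}{\dd t}\mathcal{E}(\mu_t,t)|=\frac12\|z\|\|\dot\gamma\|$.
\item Using only the easy bound $|\dot\mu|^2\le b\|\dot\gamma\|^2$, one gets $\mathscr{G}_t(\mu_t)\,|\dot\mu|(t)\le b\,\frac{\|z\|}{4}\|\dot\gamma\|=\frac13\|z\|\|\dot\gamma\|$.
\end{itemize}
So the inequality of Definition~\ref{def:Strong_upper_gradient} fails on short time intervals. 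What your approach can establish is the statement with $\widetilde{\mathscr{G}}_t$ in place of $\mathscr{G}_t$, or the stated one under an extra hypothesis such as a $\theta$-independent mobility.
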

\begin{proof}
    Let $(\mu_\tau)_{\tau\in[0,T]}\in \mathrm{AC}([0,T];\mathcal{P}_2(\S^{d-1}))$. By Lemma~\ref{lem:metric_der} with $\Xi_\tau\equiv\Theta_\tau$, there exists a Borel field $(v_\tau)_{\tau\in(0,T)}$ such that $(\mu,v)\in\CE_\theta(0,T)$ and, for a.e.\ $\tau\in(0,T)$,
\begin{equation}\label{eq:metric_derivative_strong_upper_gradient}
|\dot{\mu}|(\tau)=\left(\int_{\S^{d-1}} b_{\mu_{\tau}}(x)\left\|\int_{\R^\alpha}\pi_1(v_\tau(x,\theta))\dd\Theta_\tau(\theta)\right\|^2\dd\mu_\tau(x)\right)^{\frac12}.
\end{equation}
Set $\bar{v}_\tau(x):=\int_{\R^\alpha}\pi_1(v_\tau(x,\theta))\dd\Theta_\tau(\theta)$, so $(\mu_\tau,\bar v_\tau)\in\CE(0,T)$ on $\S^{d-1}$.
For $0\leq s\leq r\leq T$, decompose the increment of $\mathcal{E}$ along the diagonal $(\mu_\tau,\tau)$,
\begin{equation}\label{eq:sug_chain_decomp}
\mathcal{E}(\mu_r,r)-\mathcal{E}(\mu_s,s)=\int_s^r\partial_t\mathcal{E}(\mu_\tau,\tau)\dd\tau+\int_s^r\frac{\dd}{\dd\tau}\mathcal{E}(\mu_\tau,t)\bigg|_{t=\tau}\dd\tau.
\end{equation}
By Lemma~\ref{lem:chain_rule}, applied with $\Xi=\Theta_\tau$ and $f=\mathscr{K}[\mu_\tau]$, the descended gradient $\nabla^m\mathscr{K}[\mu_\tau]$ satisfies
\begin{equation}\label{eq:sug_diag}
\frac{\dd}{\dd\tau}\mathcal{E}(\mu_\tau,t)\bigg|_{t=\tau}=\int_{\S^{d-1}}\langle(\nabla^m\mathscr{K}[\mu_\tau])(x),\bar{v}_\tau(x)\rangle\dd\mu_\tau(x).
\end{equation}
If we rewrite the integrand of \eqref{eq:sug_diag} as
\[
\left\langle b_{\mu_{\tau}}^{-\frac{1}{2}}(x)(\nabla^m\mathscr{K}[\mu_\tau])(x),\;b_{\mu_{\tau}}^{\frac{1}{2}}(x)\bar{v}_\tau(x)\right\rangle,
\]
then Cauchy--Schwarz pointwise in $x$, followed by Cauchy--Schwarz in $L^2(\mu_\tau)$, gives
\begin{align*}
\left|\frac{\dd}{\dd\tau}\mathcal{E}(\mu_\tau,t)\bigg|_{t=\tau}\right|
&\leq\left(\int_{\S^{d-1}} b_{\mu_{\tau}}^{-1}(x)\|(\nabla^m\mathscr{K}[\mu_\tau])(x)\|^2\dd\mu_\tau(x)\right)^{\frac12}\\
&\qquad\cdot\left(\int_{\S^{d-1}} b_{\mu_{\tau}}(x)\|\bar{v}_\tau(x)\|^2\dd\mu_\tau(x)\right)^{\frac12}\\
&=\mathscr{G}_\tau(\mu_\tau)\,|\dot{\mu}|(\tau),
\end{align*}
where the last equality uses, on one hand, the explicit formula \eqref{eq:duality_product_induced} of Lemma~\ref{lem:chain_rule} for $\nabla^m\mathscr{K}[\mu_\tau]$, which gives
\[
 b_{\mu_{\tau}}^{-1}(x)\|(\nabla^m\mathscr{K}[\mu_\tau])(x)\|^2= b_{\mu_{\tau}}(x)\left\|\int_{\R^\alpha}\frac{\mathcal{D}\mathscr{K}[\mu_\tau]}{m_{\mu_\tau}}\dd\Theta_\tau\right\|^2,\]
together with \eqref{eq:upper_grad}, and on the other hand \eqref{eq:metric_derivative_strong_upper_gradient}. Substituting into \eqref{eq:sug_chain_decomp}, we obtain
\[
\left|\mathcal{E}(\mu_r,r)-\mathcal{E}(\mu_s,s)-\int_s^r\partial_t\mathcal{E}(\mu_\tau,\tau)\dd\tau\right|\leq\int_s^r\mathscr{G}_\tau(\mu_\tau)\,|\dot{\mu}|(\tau)\dd\tau,\qquad\forall\,0\leq s\leq r\leq T.
\]
By Definition~\ref{def:Strong_upper_gradient}, we conclude.
\end{proof}

Finally, we are ready to prove the following characterization result.

\begin{thm}\label{thm:GMMs_are_curve_of_max_slope}
Let us consider $\mu\in \mathrm{AC}(0,T;\mathcal{P}_2(\S^{d-1}))$ and $v_t\in\mathrm{T}(\S^{d-1}\times\R^{\alpha})$ such that $(\mu_t,v_t)\in\CE_\theta(0,T)$. Then, it holds
\begin{equation}\label{eq:var_ineq}
\begin{aligned}
\mathcal{E}(\mu_T,T)&-\mathcal{E}(\mu_0,0)-\int_0^T\partial_t\mathcal{E}(\mu_t,t)\,\dd t\\
&\quad+\frac{1}{2}\int_0^T\int_{\S^{d-1}}b_{\mu_t}(x)\left\|\int_{\R^\alpha}\pi_1(v_t(x,\theta))\,\dd\Theta_t(\theta)\right\|^2\dd\mu_t(x)\,\dd t\\
&\quad+\frac{1}{2}\int_0^T\int_{\S^{d-1}}b_{\mu_t}(x)\left\|\int_{\R^\alpha}\frac{\mathcal{D}\mathscr{K}[\mu_t](x,\theta)}{m_{\mu_t}(x,\theta)}\,\dd\Theta_t(\theta)\right\|^2\dd\mu_t(x)\,\dd t\;\geq\;0.
\end{aligned}
\end{equation}
Moreover, equality holds if and only if $\mu$ solves~\eqref{eq:cont_eq_with_mobility}. Equivalently, $\mu$ is a curve of maximal slope for $\mathcal{E}$ if and only if it solves~\eqref{eq:cont_eq_with_mobility}.
\end{thm}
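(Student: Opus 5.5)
The argument follows the standard chain rule plus Young inequality scheme. The new ingredient is the weighted geometry induced by the effective mobility $b_{\mu_t}$. Set $\bar v_t(x):=\int_{\R^\alpha}\pi_1(v_t(x,\theta))\dd\Theta_t(\theta)$, so that $(\mu,\bar v)\in\CE(0,T)$, and write
\[
w_t(x):=\int_{\R^\alpha}\frac{\mathcal{D}\mathscr{K}[\mu_t](x,\theta)}{m_{\mu_t}(x,\theta)}\dd\Theta_t(\theta).
\]
If either of the two quadratic terms in \eqref{eq:var_ineq} is infinite there is nothing to prove. We may therefore assume both are finite. By the bounds \eqref{eq:b_bounds} this gives $\bar v\in L^2(\dd\mu_t\dd t)$.

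First, we establish the chain rule along the diagonal. We decompose the energy increment exactly as in \eqref{eq:sug_chain_decomp}. For the spatial part we test the continuity equation for $(\mu,\bar v)$ against $\mathscr{K}[\mu_\tau](\cdot,\theta)$. This is justified by \ref{hyp:ker1}, by the symmetry \ref{hyp:ker3}, which produces the factor $2$ cancelling the $\frac12$, and by a standard mollification in time. We obtain
\[
\mathcal{E}(\mu_T,T)-\mathcal{E}(\mu_0,0)-\int_0^T\partial_t\mathcal{E}(\mu_t,t)\dd t=\int_0^T\!\!\int_{\S^{d-1}}\Big\langle\int_{\R^\alpha}\mathcal{D}\mathscr{K}[\mu_t](x,\theta)\dd\Theta_t(\theta),\bar v_t(x)\Big\rangle\dd\mu_t\dd t .
\]
The sign convention fixed in \eqref{eq:cont_eq_with_mobility} (with $+\mathrm{div}$) is the relevant one here. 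Following the proof of Lemma~\ref{lem:strong_upper_gradient}, we rewrite this integrand via the descended gradient of Lemma~\ref{lem:chain_rule}, $\nabla^m\mathscr{K}[\mu_t]=b_{\mu_t}w_t$, as
\[
\langle \nabla^m\mathscr{K}[\mu_t](x),\bar v_t(x)\rangle=\langle b_{\mu_t}^{1/2}w_t,\;b_{\mu_t}^{1/2}\bar v_t\rangle .
\]

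Second, we apply Young's inequality pointwise, in the form $\langle a,b\rangle\ge-\frac12\|a\|^2-\frac12\|b\|^2$, with $a=b_{\mu_t}^{1/2}w_t$ and $b=b_{\mu_t}^{1/2}\bar v_t$. Integrating, the chain-rule term is bounded below by minus the two quadratic terms. This is precisely \eqref{eq:var_ineq}.

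Third, we characterize equality. Equality in \eqref{eq:var_ineq} holds iff equality holds in Young's inequality $\mu_t\otimes\dd t$-a.e. Since $b_{\mu_t}\ge c>0$, this means $\bar v_t=-w_t$ a.e. The sign must be matched with the weak formulation of \eqref{eq:cont_eq_with_mobility}, whose velocity is $-w_t$ in the convention $\partial_t\mu+\mathrm{div}(\mu\bar v)=0$. Since the continuity equation only sees $\bar v$, this holds iff $\mu$ is a weak solution of \eqref{eq:cont_eq_with_mobility}. Conversely, if $\mu$ solves \eqref{eq:cont_eq_with_mobility}, we take the horizontal lift \eqref{eq:lift_construction} of $\bar v=-w$ and obtain equality.

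Fourth, we establish the equivalence with curves of maximal slope. The key point is that, by Lemma~\ref{lem:metric_der}, $|\dot\mu_t|^2\le\int b_{\mu_t}\|\bar v_t\|^2\dd\mu_t$, with equality for the optimal horizontal field. Moreover, $\mathscr{G}_t(\mu_t)^2$ is exactly the last term of \eqref{eq:var_ineq}, by Lemma~\ref{lem:strong_upper_gradient}.

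For the first direction, suppose $\mu$ solves \eqref{eq:cont_eq_with_mobility}. Then equality holds in \eqref{eq:var_ineq}, and replacing the middle term by the smaller $\frac12\int|\dot\mu_t|^2$ yields \eqref{eq:EDI_td} on $[0,T]$. Applying the same argument on any $[s,r]$ gives it for all $s\le r$.

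For the converse, suppose the EDI holds. The strong upper gradient inequality combined with Young's inequality forces equality throughout. In particular $|\dot\mu_t|=\mathscr{G}_t(\mu_t)$ a.e., and the chain-rule term equals $-\int\mathscr{G}_t|\dot\mu_t|$. Taking $v$ to be the optimal field from Lemma~\ref{lem:metric_der}, equality then holds in \eqref{eq:var_ineq}, and the third step gives the PDE.

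The main obstacle is the rigorous chain rule in the first step. It requires $t\mapsto\mathcal{E}(\mu_t,t)$ to be absolutely continuous and the time derivative of $\Theta_t$ to be handled through the weak formulation \eqref{eq:ito_weak}. To control $\partial_t\mathcal{E}$ I would reuse the $C_{fg}$ bound from Theorem~\ref{thm:existence_of_GMMs}. To justify using the non-smooth-in-time test functions $\mathscr{K}[\mu_t]$, I would regularize by time-mollification, exploiting the $C^{1,1}$ regularity in $x$ and the $L^2$ integrability of $\bar v$.
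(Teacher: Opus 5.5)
\textbf{Gap in the key identification.} Your outline follows the paper's proof step for step: the diagonal decomposition \eqref{eq:sug_chain_decomp}, Cauchy--Schwarz/Young with the $b_{\mu_t}^{\pm1/2}$ splitting, the equality case, and then Lemmas~\ref{lem:metric_der} and~\ref{lem:strong_upper_gradient}. The problem is that the step carrying all the weight is false in the genuinely multi-head case.

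Your first step correctly produces the integrand $\langle G_t,\bar v_t\rangle$, where $G_t(x):=\int_{\R^\alpha}\mathcal{D}\mathscr{K}[\mu_t](x,\theta)\,\dd\Theta_t(\theta)$. This holds because $\mathcal{E}(\cdot,t)$ depends on $\mu_t$ alone, and every head sees the same tokens transported by $\bar v_t$. You then rewrite this integrand as $\langle\nabla^m\mathscr{K}[\mu_t],\bar v_t\rangle=\langle b_{\mu_t}w_t,\bar v_t\rangle$. But $b_{\mu_t}w_t$ is the $m_{\mu_t}^{-1}$-weighted average of the head gradients $\mathcal{D}\mathscr{K}[\mu_t](x,\cdot)$, while $G_t$ is their plain average. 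For general kernels the two agree only when $m_{\mu_t}(x,\cdot)$ is $\Theta_t$-a.e.\ constant, e.g.\ for a single head.

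The reason is that the pairing $\langle\nabla^m f,v\rangle$ of Lemma~\ref{lem:chain_rule} is the $\Xi$-average of the derivative of $f(\cdot,\theta)$ along the $\theta$-dependent lift $V^v_x(\theta)=b^{\Xi}(x)\,v/m(x,\theta)$. It is not the derivative along $v$ itself, which is what moves $\mu_t$. The paper's proof makes exactly this identification in \eqref{eq:sug_diag} and \eqref{eq:total_diff}, so you inherited the gap rather than created it.

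\textbf{The gap cannot be repaired, because \eqref{eq:var_ineq} itself fails.} Take the following data:
\begin{itemize}
\item $\Theta_t\equiv\frac12(\delta_{\theta_1}+\delta_{\theta_2})$, so the weight dynamics is trivial and $\partial_t\mathcal{E}=0$;
\item $\mathscr{K}(x,y,\theta)=\chi(\theta)k(x,y)$ and $\mathfrak{M}(x,y,\theta)=1+2\chi(\theta)$;
\item $k$ smooth and symmetric, and $\chi\in C^\infty_c$ with $0\le\chi\le1$, $\chi(\theta_1)=0$, $\chi(\theta_2)=1$.
\end{itemize}
Then \ref{hyp:ker4}--\ref{hyp:ker3} and \ref{hyp:ker2} hold. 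Set $\kappa_t(x):=\int_{\S^{d-1}}\nabla^x_{\S^{d-1}}k(x,y)\,\dd\mu_t(y)$. One gets $b_{\mu_t}\equiv\frac32$, $G_t=\frac12\kappa_t$, $w_t=\frac16\kappa_t$, and $b_{\mu_t}w_t=\frac14\kappa_t$.

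Along the solution of \eqref{eq:cont_eq_with_mobility}, whose velocity is $-w_t$, the left-hand side of \eqref{eq:var_ineq} equals $\bigl(-\frac1{12}+\frac1{48}+\frac1{48}\bigr)\int_0^T\!\int\|\kappa_t\|^2\dd\mu_t\dd t=-\frac1{24}\int_0^T\!\int\|\kappa_t\|^2\dd\mu_t\dd t$. This is negative whenever $\kappa_t\not\equiv0$.

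The same example breaks Lemma~\ref{lem:strong_upper_gradient}. Along $\bar v_t=-\kappa_t$ one has $|\tfrac{\dd}{\dd t}\mathcal{E}(\mu_t)|=\frac12\int\|\kappa_t\|^2\dd\mu_t$, whereas $\mathscr{G}_t(\mu_t)|\dot\mu_t|\le\frac14\int\|\kappa_t\|^2\dd\mu_t$. So your fourth step fails too.

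\textbf{What your argument does prove.} Applied to the correct integrand $\langle G_t,\bar v_t\rangle$, your Young argument gives \eqref{eq:var_ineq} with the last term replaced by $\frac12\int_0^T\!\int b_{\mu_t}^{-1}\|G_t\|^2\dd\mu_t\dd t$. Equality holds if and only if $\bar v_t=-b_{\mu_t}^{-1}G_t$. This is the gradient flow for the scalar mobility $b_{\mu_t}$ of \eqref{eq:reduced_metric_derivative_action}. Its velocity $-\bigl(\int m_{\mu_t}^{-1}\dd\Theta_t\bigr)\int\mathcal{D}\mathscr{K}[\mu_t]\,\dd\Theta_t$ is a product of averages. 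The multi-head transformer velocity $-\int m_{\mu_t}^{-1}\mathcal{D}\mathscr{K}[\mu_t]\,\dd\Theta_t$ is an average of products, and the two differ in general.

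The statement and your proof are correct when $m_{\mu_t}(x,\cdot)$ is constant on $\operatorname{supp}\Theta_t$, in particular for a single head. Separately, your remark that the chain rule needs its own justification (absolute continuity of $t\mapsto\mathcal{E}(\mu_t,t)$, time mollification) is a fair point that the paper leaves implicit.
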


\begin{proof}
    Set $\bar v_t(x):=\int_{\R^\alpha}\pi_1(v_t(x,\theta))\,\dd\Theta_t(\theta)$, so $(\mu_t,\bar v_t)\in\CE(0,T)$ on $\S^{d-1}$. Decomposing the increment of $\mathcal{E}$ along the diagonal $(\mu_t,t)$ as in~\eqref{eq:sug_chain_decomp} and applying Lemma~\ref{lem:chain_rule} with $\Xi=\Theta_t$ and $f=\mathscr{K}[\mu_t]$ to the second term, we obtain
\begin{equation}\label{eq:total_diff}
\mathcal{E}(\mu_T,T)-\mathcal{E}(\mu_0,0)
=\int_0^T\partial_t\mathcal{E}(\mu_t,t)\,\dd t
+\int_0^T\int_{\S^{d-1}}\bigl\langle(\nabla^m\mathscr{K}[\mu_t])(x),\bar v_t(x)\bigr\rangle\dd\mu_t(x)\,\dd t.
\end{equation} 
By Cauchy--Schwarz applied with the splitting $\langle b_{\mu_t}^{-1/2}\nabla^m\mathscr{K}[\mu_t],\,b_{\mu_t}^{1/2}\bar v_t\rangle$, followed by Young's inequality,
\begin{equation}\label{eq:holder_young}
\begin{aligned}
\int_{\S^{d-1}}\!\!\bigl\langle\nabla^m\mathscr{K}[\mu_t],\bar v_t\bigr\rangle\dd\mu_t
&\geq -\Bigl(\!\int_{\S^{d-1}}\!\! b_{\mu_t}^{-1}\|\nabla^m\mathscr{K}[\mu_t]\|^2\dd\mu_t\Bigr)^{1/2}\Bigl(\!\int_{\S^{d-1}}\!\!b_{\mu_t}\|\bar v_t\|^2\dd\mu_t\Bigr)^{1/2}\\
&\geq -\frac{1}{2}\int_{\S^{d-1}}\!\! b_{\mu_t}^{-1}\|\nabla^m\mathscr{K}[\mu_t]\|^2\dd\mu_t-\frac{1}{2}\int_{\S^{d-1}}\!\! b_{\mu_t}\|\bar v_t\|^2\dd\mu_t.
\end{aligned}
\end{equation}
By the explicit formula~\eqref{eq:duality_product_induced} of Lemma~\ref{lem:chain_rule},
\[
b_{\mu_t}^{-1}(x)\|(\nabla^m\mathscr{K}[\mu_t])(x)\|^2
=b_{\mu_t}(x)\left\|\int_{\R^\alpha}\frac{\mathcal{D}\mathscr{K}[\mu_t](x,\theta)}{m_{\mu_t}(x,\theta)}\,\dd\Theta_t(\theta)\right\|^2.
\]
Substituting into~\eqref{eq:holder_young} and then into~\eqref{eq:total_diff}, and rearranging, gives~\eqref{eq:var_ineq}. Equality in~\eqref{eq:holder_young} holds if and only if $\bar v_t=-b_{\mu_t}^{-1}\nabla^m\mathscr{K}[\mu_t]$ for a.e.\ $t$. By the explicit formula~\eqref{eq:duality_product_induced} of Lemma~\ref{lem:chain_rule}, this is equivalent to $\bar v_t=-\int_{\R^\alpha}\mathcal D\mathscr K[\mu_t]/m_{\mu_t}\,\dd\Theta_t$, that is, to $(\mu_t)$ solving~\eqref{eq:cont_eq_with_mobility}. The final equivalence with the curve of maximal slope property follows from Definition~\ref{def:curves_max_slope}, Lemma~\ref{lem:metric_der}, Lemma~\ref{lem:strong_upper_gradient}, and Theorem~\ref{thm:existence_of_sol_cont_eq_mobility}.
\end{proof}

\subsection{Long-time behavior}\label{sec:long_time_behavior}
We aim to study the limiting behavior of the gradient flows constructed in this section as $t$ diverges.

Let us start by recalling the following definitions.
\begin{defi}[\cite{KLS}, Def. 2.32]\label{def:omega_limit}
Let $\mathcal{M}$ be any compact Riemannian manifold. Then, for any $\nu \in \mathrm{AC}([0, \infty); \mathcal{P}_2(\mathcal{M}))$, we define its $\omega$-limit $\omega_\nu \subseteq \mathcal{P}_2(\mathcal{M})$ as
\begin{equation}\label{eq:omega_limit}
\omega_\nu := \{\mu \in \mathcal{P}_2(\mathcal{M}): \text{$\exists t_n \nearrow \infty$ such that $\lim_{n \to \infty} \nu_{t_n} = \mu$ in $\mathcal{P}_2(\mathcal{M})$}\}.
\end{equation}
\end{defi}

Since the strong upper gradient $\mathscr{G}_t$ defined in Lemma~\ref{lem:strong_upper_gradient} depends on time through the distribution $\Theta_t$, the notion of stationarity for $\omega$-limit points is tied to the asymptotic behavior of $\Theta_t$. To make this dependence explicit, we introduce the following.

\begin{defi}[$\bar\Theta$-stationary point]\label{def:stationary_point_fixed}
Let $\mathcal{E}:\mathcal{P}_2(\mathcal{M})\times[0,\infty)\to\R$ and let $\mathscr{G}_t:\mathcal{P}_2(\mathcal{M})\to[0,\infty]$ be a strong upper gradient for $\mathcal{E}(\cdot,t)$ for each $t \geqslant 0$. We say that $\mu \in \mathcal{P}_2(\mathcal{M})$ is a \emph{$\bar\Theta$-stationary point} of $\mathcal{E}$, or equivalently that $\mu \in \mathbb{E}_\mathscr{G}^{\bar\Theta}$, if there exist $\nu \in \mathrm{AC}([0,\infty);\mathcal{P}_2(\mathcal{M}))$ and a subsequence $t_n \nearrow \infty$ such that
\begin{equation}\label{eq:stationary_vanishing_slope}
\nu_{t_n} \rightharpoonup \mu \quad \text{in } \mathcal{P}_2(\mathcal{M}),
\qquad
\Theta_{t_n} \rightharpoonup \bar\Theta \quad \text{in } \mathcal{P}_2(\R^\alpha),
\qquad \text{and} \qquad
\mathscr{G}_{t_n}(\nu_{t_n}) \to 0.
\end{equation}
\end{defi}

We also denote by
\[
\Omega(\Theta) := \bigl\{\bar\Theta \in \mathcal{P}_2(\R^\alpha) : \exists\, s_n \nearrow \infty,\; \Theta_{s_n} \rightharpoonup \bar\Theta\bigr\}
\]
the set of narrow cluster points of the weight trajectory $(\Theta_t)_{t \geqslant 0}$. By \ref{hyp:ker4}, $\Theta_t$ is supported in a common compact set for all $t$, so $\Omega(\Theta) \neq \emptyset$ by Prokhorov's theorem.

We have then all the elements to prove the following result.

\begin{thm}\label{thm:stationary_points}
Let $\mathcal{E}$ be as in~\eqref{eq:time_dep_interaction_energy}, and let $u \in AC^2([0,\infty);\mathcal{P}_2(\S^d))$ be a solution of~\eqref{eq:cont_eq_with_mobility}. Assume moreover that
\begin{equation}\label{eq:integrable_time_derivative_energy}
\sup_{t \geqslant 0}\int_0^t \partial_\tau\mathcal{E}(u_\tau,\tau)\dd\tau < +\infty.
\end{equation}
Then, $\omega_u \neq \emptyset$ and for every $\varphi \in \omega_u$ there exists $\bar\Theta \in \Omega(\Theta)$ such that $\varphi \in \mathbb{E}_\mathscr{G}^{\bar\Theta}$. In particular, it holds
\begin{equation}\label{eq:inclusion}
\omega_u \;\subseteq\; \bigcup_{\bar\Theta \in \Omega(\Theta)} \mathbb{E}_\mathscr{G}^{\bar\Theta}.
\end{equation}
\end{thm}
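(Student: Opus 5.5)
The plan is to use the energy--dissipation identity of Theorem~\ref{thm:GMMs_are_curve_of_max_slope} and derive from it that the strong upper gradient is square-integrable on $[0,\infty)$. Since $u$ solves \eqref{eq:cont_eq_with_mobility}, equality holds in \eqref{eq:var_ineq} on every interval $[0,t]$. By Lemma~\ref{lem:metric_der}, the first dissipation term equals $\frac12\int_0^t|\dot u|^2$, and the second one is exactly $\frac12\int_0^t\mathscr{G}_\tau^2(u_\tau)\,\dd\tau$ by \eqref{eq:upper_grad}. Hence
\[
\frac12\int_0^t|\dot u|^2(\tau)\,\dd\tau+\frac12\int_0^t\mathscr{G}_\tau^2(u_\tau)\,\dd\tau=\mathcal{E}(u_0,0)-\mathcal{E}(u_t,t)+\int_0^t\partial_\tau\mathcal{E}(u_\tau,\tau)\,\dd\tau .
\]
Next I bound $\mathcal{E}$ from below uniformly in $(\mu,t)$. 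This follows because $\mathscr{K}$ is continuous on $\S^{d-1}\times\S^{d-1}\times\mathrm{supp}$ and compactly supported in $\theta$ by \ref{hyp:ker4} and \ref{hyp:ker1}, so $|\mathcal{E}|\le\frac12\|\mathscr{K}\|_\infty$. Combining this with \eqref{eq:integrable_time_derivative_energy} and letting $t\to\infty$ gives $\int_0^\infty\mathscr{G}_\tau^2(u_\tau)\,\dd\tau<\infty$, and likewise $\int_0^\infty|\dot u|^2<\infty$.

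Nonemptiness of $\omega_u$ is immediate. $\S^{d-1}$ is compact, so $\mathcal{P}_2(\S^{d-1})$ is compact in the narrow topology, which is equivalent to $\mathrm{W}_2$ on a compact space. Therefore any sequence $u_{t_n}$ with $t_n\nearrow\infty$ has a convergent subsequence.

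The main step is to show that an arbitrary $\varphi\in\omega_u$ is a $\bar\Theta$-stationary point. Fix $t_n\nearrow\infty$ with $u_{t_n}\to\varphi$. Since $\mathscr{G}^2_\cdot(u_\cdot)\in L^1(0,\infty)$, we only know that $\mathscr{G}$ is small on most of each window $[t_n,t_n+1]$, not at $t_n$ itself. I will therefore pick $s_n\in[t_n,t_n+1]$ with
\[
\mathscr{G}_{s_n}^2(u_{s_n})\le\int_{t_n}^{t_n+1}\mathscr{G}_\tau^2(u_\tau)\,\dd\tau\to0,
\]
which is possible by the mean value inequality on a full-measure set. Then I use the finite kinetic energy:
\[
\mathrm{W}_2(u_{s_n},u_{t_n})\le C\,\mathrm{W}_{m,2}(u_{s_n},u_{t_n})\le C\Bigl(\int_{t_n}^{t_n+1}|\dot u|^2\Bigr)^{1/2}\to0 ,
\]
using Proposition~\ref{prop:equiv_of_norms}. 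Hence $u_{s_n}\to\varphi$ as well. By \ref{hyp:ker4} the measures $\Theta_{s_n}$ are tight (as in the remark defining $\Omega(\Theta)$), so after passing to a further subsequence $\Theta_{s_n}\rightharpoonup\bar\Theta\in\Omega(\Theta)$. With $\nu=u$ and the times $s_n$, Definition~\ref{def:stationary_point_fixed} is then satisfied, and \eqref{eq:inclusion} follows.

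The obstacle I expect is the ``almost every time'' selection together with the passage from $t_n$ to $s_n$. This requires the $L^2$ bound on $|\dot u|$ over the half-line, so I must verify that the lower bound on $\mathcal{E}$ and \eqref{eq:integrable_time_derivative_energy} (a one-sided bound on the integral, which is enough since the left-hand side is nonnegative) indeed control both dissipation terms uniformly in $t$. A minor point is that the statement writes $\S^d$ while the setting is $\S^{d-1}$; compactness is all that is used.
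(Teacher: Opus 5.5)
Your proof is correct and, up to the decisive step, it matches the paper's. Both arguments use the equality case of Theorem~\ref{thm:GMMs_are_curve_of_max_slope}, the uniform lower bound on $\mathcal{E}$ and \eqref{eq:integrable_time_derivative_energy} to get $\int_t^{t+1}\bigl(|\dot u|^2(\tau)+\mathscr{G}_\tau^2(u_\tau)\bigr)\dd\tau\to0$; the paper does this via the non-increasing quantity $F(t)=\mathcal{E}(u_t,t)-\int_0^t\partial_\tau\mathcal{E}(u_\tau,\tau)\dd\tau$. Both then conclude by compactness and tightness of $(\Theta_t)$.

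The difference is how the vanishing window integrals are turned into stationarity. The paper upgrades them to $\mathscr{G}_t(u_t)\to0$ along the whole trajectory by asserting that $t\mapsto\mathscr{G}_t^2(u_t)$ is Lipschitz. It derives this from a uniform-in-time bound on $|\dot\Theta_t|$, which is not among the standing assumptions on $f,g$. You instead use the fact that Definition~\ref{def:stationary_point_fixed} only asks for \emph{some} sequence of times. You pick good times $s_n\in[t_n,t_n+1]$ and carry $u_{t_n}\to\varphi$ over to $u_{s_n}$ through the vanishing kinetic energy on the windows.

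Your route needs no time regularity of $\Theta$ beyond what already enters the energy--dissipation identity, so it is the more robust argument for the theorem as stated. What it gives up is the stronger conclusion $\mathscr{G}_t(u_t)\to0$, and the association of $\bar\Theta$ with the same times $t_n$. The latter can be recovered when $\Theta$ is uniformly continuous: shrink the windows to lengths $\delta_n\to0$ with $\delta_n^{-1}\int_{t_n}^{t_n+1}\mathscr{G}_\tau^2(u_\tau)\dd\tau\to0$.

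Two precisions:
\begin{itemize}
\item For the velocity field of \eqref{eq:cont_eq_with_mobility}, Lemma~\ref{lem:metric_der} only gives $|\dot u|^2(t)\le\int b_{u_t}\|\bar v_t\|^2\dd u_t$, not equality. That inequality is, however, exactly the direction you need.
\item As in the paper, tightness of $(\Theta_t)$ does not actually follow from \ref{hyp:ker4}, which constrains $\mathscr{K}$ rather than $\Theta_t$. It is therefore an implicit extra hypothesis in both arguments.
\end{itemize}
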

\begin{proof}
   Since $\mathbb{S}^{d-1}$ is compact, $\mathcal{P}_2(\mathbb{S}^{d-1})$ is sequentially compact for the narrow topology by Prokhorov's Theorem (see e.g.\ \cite[Box~1.4]{Santambrogio}), so $\omega_u \neq \emptyset$.
   
 Since $u$ solves~\eqref{eq:cont_eq_with_mobility}, equality holds in~\eqref{eq:var_ineq} of Theorem~\ref{thm:GMMs_are_curve_of_max_slope}, which yields the EDI
\begin{equation}\label{eq:EDI_stationary_new}
\mathcal{E}(u_r,r)+\frac{1}{2}\int_s^r|\dot u|^2(t)\,\dd t+\frac{1}{2}\int_s^r\mathscr{G}_t^2(u_t)\,\dd t = \mathcal{E}(u_s,s)+\int_s^r\partial_t\mathcal{E}(u_t,t)\,\dd t,\qquad\forall\,0\leq s\leq r<\infty.
\end{equation}
Defining
\[
F(t) := \mathcal{E}(u_t,t)-\int_0^t\partial_\tau\mathcal{E}(u_\tau,\tau)\,\dd\tau,
\]
\eqref{eq:EDI_stationary_new} rewrites as
\begin{equation}\label{eq:F_monotone_new}
F(s)-F(r) = \frac{1}{2}\int_s^r|\dot u|^2(t)\,\dd t+\frac{1}{2}\int_s^r\mathscr{G}_t^2(u_t)\,\dd t\geq 0,\qquad\forall\,0\leq s\leq r<\infty,
\end{equation}
so $F$ is non-increasing. Moreover, by \ref{hyp:ker4}--\ref{hyp:ker1} and compactness of $\mathbb{S}^{d-1}$,
there exists $C_0 \in \mathbb{R}$ such that $\mathcal{E}(\mu, t) \geq -C_0$ for every
$\mu \in \mathcal{P}_2(\mathbb{S}^{d-1})$ and $t \geq 0$. Combined with~\eqref{eq:integrable_time_derivative_energy}, this gives
\[
F(t)\geq -C_0-\sup_{t\geq 0}\int_0^t\partial_\tau\mathcal{E}(u_\tau,\tau)\,\dd\tau > -\infty,
\]
so $F$ is bounded from below.
Hence $F(t) \to \ell \in \mathbb{R}$, and
\eqref{eq:F_monotone_new} gives, for every fixed $s > 0$,
\begin{equation}\label{eq:decay_new}
\frac{1}{2}\int_t^{t+s} |\dot{u}|^2(\tau)\,\dd\tau
+ \frac{1}{2}\int_t^{t+s} \mathscr{G}_\tau^2(u_\tau)\,\dd\tau
= F(t) - F(t+s) \underset{t \to \infty}{\longrightarrow} 0.
\end{equation} 
By the explicit form of the velocity field in~\eqref{eq:cont_eq_with_mobility} and the bounds in~\ref{hyp:ker4} and~\ref{hyp:ker1}, the metric derivatives $|\dot u_t|$ and $|\dot\Theta_t|$ are uniformly bounded in $t$, and $t\mapsto\mathscr{G}_t^2(u_t)$ is Lipschitz on $[0,\infty)$, hence uniformly continuous. Together with~\eqref{eq:decay_new}, this yields
\begin{equation}\label{eq:G_to_zero}
\mathscr{G}_t(u_t)\underset{t\to\infty}{\longrightarrow}0.
\end{equation}
Now fix $\varphi\in\omega_u$ and let $t_n\nearrow\infty$ with $u_{t_n}\rightharpoonup\varphi$ narrowly. By~\eqref{eq:G_to_zero}, $\mathscr{G}_{t_n}(u_{t_n})\to 0$. Moreover, by~\ref{hyp:ker4}, $(\Theta_{t_n})$ is supported in a common compact set, so by Prokhorov's theorem we can extract a (non-relabeled) subsequence such that $\Theta_{t_n}\rightharpoonup\bar\Theta$ for some $\bar\Theta\in\Omega(\Theta)$. Together with Definition~\ref{def:stationary_point_fixed}, we conclude $\varphi\in\mathbb{E}_\mathscr{G}^{\bar\Theta}$.
\end{proof}

\begin{remark}\label{rem:role_of_theta}
We emphasize that the weight dynamics $\Theta_t$ enters Theorem~\ref{thm:stationary_points} through assumption~\eqref{eq:integrable_time_derivative_energy} and the set $\Omega(\Theta)$. By a direct computation from~\eqref{eq:full_transformer_ODE} and assumptions~\ref{hyp:ker4} and~\ref{hyp:ker1}, there exists a constant $C>0$ such that $|\partial_t\mathcal{E}(\mu,t)|\leq C\,|\dot\Theta_t|$ for every $\mu\in\mathcal{P}_2(\S^{d-1})$ and a.e.\ $t\geq 0$, where $|\dot\Theta_t|$ denotes the metric derivative of $\Theta_t$ in $\mathcal{P}_2(\R^\alpha)$. Hence, $\int_0^\infty|\dot\Theta_t|\,\dd t<\infty$ is a sufficient condition for~\eqref{eq:integrable_time_derivative_energy}. 
Moreover, we observe that Theorem~\ref{thm:stationary_points} associates to each $\omega$-limit point a specific $\bar\Theta \in \Omega(\Theta)$ via the subsequence along which $\Theta_{t_n}$ converges. Different $\omega$-limit points may, a priori, be stationary for different $\bar\Theta$. In particular, when $\Theta_t \rightharpoonup \bar\Theta$ as $t \to \infty$, the set $\Omega(\Theta)$ reduces to $\{\bar\Theta\}$ and the inclusion~\eqref{eq:inclusion} becomes $\omega_u \subseteq \mathbb{E}_\mathscr{G}^{\bar\Theta}$.
\end{remark}

\section{Robustness and stability}\label{sec:stability}
In this section we aim to investigate two important stability properties. First, we concentrate on proving the \emph{robustness} of the considered models with respect to input perturbations, from which we will deduce also their uniqueness. Informally, considering the common application of transformers to natural language models, this property corresponds to a good response of our architecture when fed with imperfect prompts (misspells, syntax or grammar errors). Second, we tackle the more subtle problem of
perturbing the weights initialization strategy. On the one hand, the continuous
dependence on the weights distribution provides interesting insights on the regularity
of the parameter landscape. On the other hand, recalling the mean-field nature of the
considered setup, this \emph{stability} result allows us to interpret the multi-headed
transformers used in practice as finite approximations of our continuous theory.

\subsection{Robustness under input perturbation}
We consider now a perturbation of the initial data $u_0^\eta\in B_\eta(u_0)$, $\eta\geq 0$, where $B_\eta(u_0)$ denotes the $\mathrm{W}_2$-ball of radius $\eta$ around $u_0$. The following holds.
\begin{thm}\label{thm:continuous_dependence_on_data}
Let $u_t^\eta$ and $u_t$ be gradient flows of $\mathcal{E}$ with initial data
$u_0^\eta$ and $u_0$, respectively. Then, there exist two positive constants $C_1$,
$C_2$ such that the following estimate holds:
\begin{equation}\label{eq:gronwall_estimate}
{\mathrm{W}}_{m,2}(u_t^\eta, u_t) \leq C_1\, {\mathrm{W}}_{m,2}(u_0^\eta, u_0)\, e^{C_2 t},
\qquad \forall\, t \in [0, T].
\end{equation}
\end{thm}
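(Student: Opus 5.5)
Since $\mathrm{W}_{m,2}$ and $\mathrm{W}_2$ are equivalent by Proposition~\ref{prop:equiv_of_norms}, I would prove the Grönwall estimate in the classical $\mathrm{W}_2$ distance and transfer it at the end; the factor $C_1$ then absorbs $\sqrt{\bar c/c}$. By Theorem~\ref{thm:GMMs_are_curve_of_max_slope}, both $u^\eta$ and $u$ solve the continuity equation \eqref{eq:cont_eq_with_mobility}. Their velocity fields are
\[
w_t[\mu](x):=-\int_{\R^\alpha}\frac{\mathcal D\mathscr K[\mu](x,\theta)}{m_\mu(x,\theta)}\,\dd\Theta_t(\theta).
\]
The key point is that these are nonlocal fields evaluated on two \emph{different} measures, so the argument is a Dobrushin-type stability estimate.

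The first step is a Lipschitz estimate on the velocity. I want to show that, uniformly in $t\in[0,T]$,
\[
\|w_t[\mu](x)-w_t[\nu](y)\|\le L\bigl(\|x-y\|+\mathrm{W}_2(\mu,\nu)\bigr).
\]
This follows from three ingredients. First, \ref{hyp:ker1} gives $\nabla^x_{\S^{d-1}}\mathscr K$ Lipschitz in $(x,y)$, hence $\mathcal D\mathscr K[\mu](x,\theta)$ is Lipschitz in $x$ and, by Kantorovich duality, in $\mu$ with respect to $\mathrm{W}_1\le\mathrm{W}_2$. Second, the same holds for $m_\mu$, since $\mathfrak M\in C^{1,1}$ and by \ref{hyp:ker2} $|\mathfrak M|=\mathfrak M$. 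Third, $m_\mu\ge c>0$, so the quotient is Lipschitz. All constants are uniform in $\theta$ on the compact support from \ref{hyp:ker4}, so the $\Theta_t$-average preserves them. The constant may depend on $\sup\|D(\theta)\|$-type quantities, and this is where $M_\theta(t)$ would enter in a refined version.

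The second step uses Lipschitz continuity to write the solutions as push-forwards. Because $w_t[u_t]$ is Lipschitz in $x$, uniqueness of the linear continuity equation with Lipschitz field gives $u_t=(X_t)_\#u_0$ and $u^\eta_t=(Y_t)_\#u^\eta_0$, where $\dot X_t=w_t[u_t](X_t)$ and $\dot Y_t=w_t[u^\eta_t](Y_t)$. Take an optimal coupling $\gamma_0$ of $(u_0,u^\eta_0)$ and set
\[
Q(t):=\int\|X_t(x)-Y_t(y)\|^2\,\dd\gamma_0(x,y)\ \ge\ \mathrm{W}_2^2(u_t,u^\eta_t).
\]
Differentiating and using the Lipschitz estimate gives
\[
Q'(t)\le 2L\int\|X_t-Y_t\|\bigl(\|X_t-Y_t\|+\mathrm{W}_2(u_t,u^\eta_t)\bigr)\,\dd\gamma_0\le 4L\,Q(t).
\]
Here the Euclidean chordal distance on $\S^{d-1}$ is used; it is equivalent to the geodesic one up to a factor $\pi/2$, which is absorbed into the constants. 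Grönwall then yields $\mathrm{W}_2(u_t,u^\eta_t)\le e^{2Lt}\,\mathrm{W}_2(u_0,u^\eta_0)$, and converting back via Proposition~\ref{prop:equiv_of_norms} gives \eqref{eq:gronwall_estimate} with $C_2=2L$.

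The main obstacle is the first step: proving the Lipschitz estimate of $\mu\mapsto m_\mu$ and $\mu\mapsto\mathcal D\mathscr K[\mu]$ uniformly in $\theta$ and $t$, and in particular checking that the second-derivative bounds needed survive for $C^{1,1}$ kernels. A secondary concern is that the Lagrangian representation requires a well-posed flow for a field defined only for a.e.\ $t$. Measurability in $t$ comes from $\Theta_t$, and boundedness from \ref{hyp:ker4}, so Carathéodory theory should suffice. If that step proves delicate, I would instead bound $\frac{\dd}{\dd t}\mathrm{W}_2^2(u_t,u^\eta_t)$ directly via \cite[Thm.~8.4.7]{ambrosio2008gradient} with an optimal plan at each time, which leads to the same differential inequality.
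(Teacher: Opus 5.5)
Your proposal is correct and shares the paper's key estimate: the Dobrushin-type bound $\|w_t[\mu](x)-w_t[\nu](y)\|\le L\bigl(\|x-y\|+\mathrm{W}_2(\mu,\nu)\bigr)$, followed by Gr\"onwall in $\mathrm{W}_2$ and transfer to $\mathrm{W}_{m,2}$ via Proposition~\ref{prop:equiv_of_norms}. You differ in how you obtain the differential inequality.

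\begin{itemize}
\item \textbf{The paper's route (Eulerian).} It differentiates $t\mapsto\mathrm{W}_2^2(u^\eta_t,u_t)$ directly with \cite[Theorem~8.4.7]{ambrosio2008gradient}, choosing an optimal plan at each time. This needs no flow maps. It does rely on adapting that derivative formula to $\S^{d-1}\subset\R^d$, and it tacitly identifies the intrinsic $\mathrm{W}_2$ with the chordal one. The fallback you mention at the end is exactly this argument.
\item \textbf{Your route (Lagrangian).} You freeze the nonlinear fields and use uniqueness for the linear continuity equation with a bounded, $x$-Lipschitz field to write $u_t=(X_t)_\#u_0$ and $u^\eta_t=(Y_t)_\#u^\eta_0$. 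You then transport a fixed initial optimal coupling. This needs only the elementary inequality $Q(t)\ge\mathrm{W}_2^2(u_t,u^\eta_t)$ and an ordinary derivative of $Q$. Your explicit chordal/geodesic conversion, absorbed into $C_1$, makes the constants cleaner than in the paper.
\end{itemize}

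Two further remarks. First, your Lipschitz-in-$\mu$ estimate for $\mathcal{D}\mathscr K[\mu]$ covers a term that the paper's splitting into $I_1$ and $I_2$ leaves out, namely $\frac{1}{m_{u^\eta_t}(x,\theta)}\int_{\S^{d-1}}\nabla_{\S^{d-1}}\mathscr K(y,z,\theta)\,\dd(u^\eta_t-u_t)(z)$. This term is harmless: it is bounded by $C\,\mathrm{W}_1(u^\eta_t,u_t)$ via Kantorovich--Rubinstein, so \eqref{eq:est_on_v_xy_final} still holds.

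Second, the obstacle you flag about second-derivative bounds is not real. $C^{1,1}$ regularity means exactly that the gradients are Lipschitz jointly in $(x,y)$, and that is all either argument uses.
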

\begin{proof}
Since $u_t^\eta$ and $u_t$ are gradient flows of $\mathcal{E}$, by Theorem~\ref{thm:existence_of_sol_cont_eq_mobility} there exist Borel velocity fields $v_t^\eta,v_t\in\Gamma([0,T]\times\S^{d-1}\times\R^\alpha;\mathrm{T}(\S^{d-1}\times\R^\alpha))$ with $(u_t^\eta,v_t^\eta),(u_t,v_t)\in\CE_\theta(0,T)$ and
\[
\pi_1(v_t^\eta(x,\theta))=\frac{\mathcal{D}\mathscr{K}[u_t^\eta](x,\theta)}{m_{u_t^\eta}(x,\theta)},\qquad
\pi_1(v_t(x,\theta))=\frac{\mathcal{D}\mathscr{K}[u_t](x,\theta)}{m_{u_t}(x,\theta)}.
\]
Let $\Pi(a,b)$ denote the set of optimal transport plans between $a,b\in\mathcal{P}_2(\S^{d-1})$. For every $\gamma\in\Pi(u_t^\eta,u_t)$, adapting~\cite[Theorem~8.4.7]{ambrosio2008gradient} to $\S^{d-1}$ embedded in $\R^d$, we have
\begin{equation}\label{eq:derivative_wasserstein}
\frac{\dd}{\dd t}\mathrm{W}_2^2(u_t^\eta,u_t) = 2\iint_{\S^{d-1}\times\S^{d-1}}\biggl\langle x-y,\int_{\R^\alpha}\bigl(\pi_1(v_t^\eta(x,\theta))-\pi_1(v_t(y,\theta))\bigr)\,\dd\Theta_t(\theta)\biggr\rangle\dd\gamma(x,y),
\end{equation}
where $\langle\cdot,\cdot\rangle$ is the standard inner product in $\R^d$.

Now, we estimate $\|\pi_1(v_t^\eta(x,\theta))-\pi_1(v_t(y,\theta))\|$. By Theorem~\ref{thm:existence_of_sol_cont_eq_mobility} and~\eqref{eq:m}, we can write
\begin{equation}\label{eq:est_on_v_xy}
\begin{aligned}
&\|\pi_1(v_t^\eta(x,\theta))-\pi_1(v_t(y,\theta))\|\\
&\quad=\biggl\|\int_{\S^{d-1}}\frac{\nabla_{\S^{d-1}}\mathscr{K}(x,z,\theta)}{m_{u_t^\eta}(x,\theta)}\,\dd u_t^\eta(z)-\int_{\S^{d-1}}\frac{\nabla_{\S^{d-1}}\mathscr{K}(y,z,\theta)}{m_{u_t}(y,\theta)}\,\dd u_t(z)\biggr\|\\
&\quad\leq\underbrace{\frac{1}{m_{u_t^\eta}(x,\theta)}\int_{\S^{d-1}}\|\nabla_{\S^{d-1}}\mathscr{K}(x,z,\theta)-\nabla_{\S^{d-1}}\mathscr{K}(y,z,\theta)\|\,\dd u_t^\eta(z)}_{=:I_1}\\
&\qquad+\underbrace{\biggl|\frac{1}{m_{u_t^\eta}(x,\theta)}-\frac{1}{m_{u_t}(y,\theta)}\biggr|\int_{\S^{d-1}}\|\nabla_{\S^{d-1}}\mathscr{K}(y,z,\theta)\|\,\dd u_t(z)}_{=:I_2}.
\end{aligned}
\end{equation}
For $I_1$, by the $C^{1,1}$-regularity of $\mathscr{K}$ in \ref{hyp:ker1} and the
lower bound $m_\mu \geq c > 0$ from \ref{hyp:ker2}, we immediately get
\begin{equation}\label{eq:I1_bound}
I_1\leq C\|x-y\|.
\end{equation}
For $I_2$, by \ref{hyp:ker2}, we have
\begin{equation}\label{eq:inv_m_diff}
\biggl|\frac{1}{m_{u_t^\eta}(x,\theta)}-\frac{1}{m_{u_t}(y,\theta)}\biggr|\leq\frac{|m_{u_t^\eta}(x,\theta)-m_{u_t}(y,\theta)|}{c^2}.
\end{equation}
Recalling $m_\mu(x,\theta)=\int_{\S^{d-1}}M(x,z,\theta)\,\dd\mu(z)$ and adding and subtracting $\int_{\S^{d-1}}M(y,z,\theta)\,\dd u_t^\eta(z)$, we get
\begin{equation}\label{eq:m_diff_split}
|m_{u_t^\eta}(x,\theta)-m_{u_t}(y,\theta)|\leq\int_{\S^{d-1}}|M(x,z,\theta)-M(y,z,\theta)|\,\dd u_t^\eta(z)+\biggl|\int_{\S^{d-1}}M(y,z,\theta)\,\dd(u_t^\eta-u_t)(z)\biggr|.
\end{equation}
The first term is bounded by $C\|x-y\|$ by the Lipschitz regularity of $M(\cdot,z,\theta)$ inherited from~\ref{hyp:ker1}. For the second term, since $z\mapsto M(y,z,\theta)$ is Lipschitz with constant $C$ by~\ref{hyp:ker1}, the Kantorovich--Rubinstein duality gives
\[
\biggl|\int_{\S^{d-1}}M(y,z,\theta)\,\dd(u_t^\eta-u_t)(z)\biggr|\leq C\,\mathrm{W}_1(u_t^\eta,u_t)\leq C\,\mathrm{W}_2(u_t^\eta,u_t).
\]
Combining with~\eqref{eq:inv_m_diff} and using $\int\|\nabla_{\S^{d-1}}\mathscr{K}(y,z,\theta)\|\,\dd u_t(z)\leq C$ by~\ref{hyp:ker1}, we obtain
\begin{equation}\label{eq:I2_bound}
I_2\leq C\bigl(\|x-y\|+\mathrm{W}_2(u_t^\eta,u_t)\bigr).
\end{equation}
Putting~\eqref{eq:I1_bound} and~\eqref{eq:I2_bound} together, we conclude
\begin{equation}\label{eq:est_on_v_xy_final}
\|\pi_1(v_t^\eta(x,\theta))-\pi_1(v_t(y,\theta))\|\leq C\bigl(\|x-y\|+\mathrm{W}_2(u_t^\eta,u_t)\bigr).
\end{equation}
Substituting~\eqref{eq:est_on_v_xy_final} into~\eqref{eq:derivative_wasserstein} and applying Cauchy--Schwarz, we get
\[
\begin{aligned}
\frac{\dd}{\dd t}\mathrm{W}_2^2(u_t^\eta,u_t)
&\leq 2C\iint_{\S^{d-1}\times\S^{d-1}}\|x-y\|\bigl(\|x-y\|+\mathrm{W}_2(u_t^\eta,u_t)\bigr)\,\dd\gamma\\
&= 2C\,\mathrm{W}_2^2(u_t^\eta,u_t)+2C\,\mathrm{W}_2(u_t^\eta,u_t)\iint_{\S^{d-1}\times\S^{d-1}}\|x-y\|\,\dd\gamma\\
&\leq C\,\mathrm{W}_2^2(u_t^\eta,u_t),
\end{aligned}
\]
where we used $\iint\|x-y\|^2\,\dd\gamma=\mathrm W_2^2(u_t^\eta,u_t)$ since $\gamma$ is an optimal $\mathrm W_2$-plan.
Finally, by Gr\"onwall's lemma, we obtain
\[
{\mathrm{W}}_2(u_t^\eta, u_t) \leq {\mathrm{W}}_2(u_0^\eta, u_0)\,e^{Ct},
\]
and the thesis follows from the equivalence of ${\mathrm{W}}_{m,2}$ and ${\mathrm{W}}_2$
in Proposition~\ref{prop:equiv_of_norms}.
\end{proof}

\begin{remark}
Let us fix $q \in C^{2}(\mathbb{R})$ and consider $\mathscr{K}$ of the form
\begin{equation}\label{eq:inner_prod_kernel}
\mathscr{K}(x, y, \theta) = q(\langle x, D(\theta) y\rangle).
\end{equation}
Then, we get the refined estimate
\begin{equation}\label{eq:gronwall_estimate_dot_product}
{\mathrm{W}}_{m, 2}(u_t^\eta, u_t) \leq C_1 {\mathrm{W}}_{m, 2}(u_0^\eta, u_0) e^{C_2 M_\theta(t)}
\end{equation}
where
\begin{equation}\label{eq:norm_control}
M_\theta(t) = \int_{0}^{t} \int_{\R^{\alpha}} \|D(\theta)\|^2 \dd \Theta_\tau(\theta) \dd \tau.
\end{equation}

We point out that, considering an $H$-headed transformer architecture with $L$ layers and associated weights distribution $\Theta_t = \delta_{\tilde{\theta}_t}$, where
\[
\tilde\theta_t = \frac{1}{t\sqrt{H}}\left(\theta_t^{(1)}, ..., \theta_t^{(H)}\right),
\]
we get
\[
M_\theta(L) = \frac{1}{H} \sum_{l=1}^L \sum_{h = 1}^H \frac{\|\theta_l^{(h)}\|^2}{l^2} \leq C\max_{l, h} \|\theta_l^{(h)}\|^2,
\]
so that the expansion is independent of the network depth. The above normalization is standard in several numerical algorithms  aiming to stabilize the training of the networks of the models (see for instance \cite{pmlr-v130-hayou21a, hayou2021robust, zhang2022stabilize}). Thus, Theorem \ref{thm:continuous_dependence_on_data} theoretically supports common regularization techniques enhancing the robustness of the models such as weight decay and $L^2$-norm control.
\end{remark}

Finally, from the above results we obtain the following uniqueness corollary.
\begin{cor}\label{cor:uniqueness}
Let $u_0\in\mathcal{P}_2(\S^{d-1})$. Then, there exists exactly one $u\in AC^2([0,T];\mathcal{P}_2(\S^{d-1}))$ that is a gradient flow of $\mathcal{E}$ with initial datum $u_0$.
\end{cor}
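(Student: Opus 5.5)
The proof splits into existence and uniqueness, both reduced to results already established in the paper.

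\emph{Existence.} Theorem~\ref{thm:existence_of_GMMs}, together with its weighted analogue for the scheme \eqref{eq:min_mov_weighted}, yields a limit curve $u\in AC^2([0,T];\mathcal{P}_2(\S^{d-1}))$ with $u(0)=u_0$. By Theorem~\ref{thm:existence_of_sol_cont_eq_mobility}, this curve is a weak solution of \eqref{eq:cont_eq_with_mobility}. By Theorem~\ref{thm:GMMs_are_curve_of_max_slope}, it is then a curve of maximal slope for $\mathcal{E}$. So at least one gradient flow with initial datum $u_0$ exists, whichever of the equivalent notions (limit of minimizing movements, curve of maximal slope, or solution of \eqref{eq:cont_eq_with_mobility}) is taken as the definition.

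\emph{Uniqueness.} Let $u$ and $\tilde u$ be two gradient flows of $\mathcal{E}$ with the same initial datum $u_0$. By Theorem~\ref{thm:GMMs_are_curve_of_max_slope}, being a curve of maximal slope is equivalent to solving \eqref{eq:cont_eq_with_mobility}. Hence both curves solve the continuity equation with velocity
\[
\int_{\R^\alpha}\frac{\mathcal{D}\mathscr{K}[\cdot](x,\theta)}{m_{\cdot}(x,\theta)}\,\dd\Theta_t(\theta),
\]
with the same $\Theta_t$, which is fixed a priori by the second equation of \eqref{eq:continuity_eq_full}. This is exactly the setting in which the proof of Theorem~\ref{thm:continuous_dependence_on_data} works. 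That proof only used the continuity equation with this velocity field, the Lipschitz bound \eqref{eq:est_on_v_xy_final}, and the derivative formula \eqref{eq:derivative_wasserstein}. Applying it with $\eta=0$, i.e.\ $u_0^\eta=u_0$, gives
\[
\mathrm{W}_{m,2}(\tilde u_t,u_t)\le C_1\,\mathrm{W}_{m,2}(u_0,u_0)\,e^{C_2t}=0
\]
for all $t\in[0,T]$. By Proposition~\ref{prop:equiv_of_norms}, $\mathrm{W}_{m,2}$ is a genuine distance, so $\tilde u_t=u_t$ for every $t$.

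\emph{Main obstacle.} The delicate point is justifying \eqref{eq:derivative_wasserstein} for two arbitrary $AC^2$ solutions, not only for minimizing-movement limits. This needs the derivative formula \cite[Theorem~8.4.7]{ambrosio2008gradient} for $\mathrm{W}_2^2$ along two absolutely continuous curves with velocities in $L^2$. I would handle it in three steps. First, note that the velocity fields are uniformly bounded and Lipschitz in $x$, by \ref{hyp:ker1} and \ref{hyp:ker2}. Second, use this to get that $t\mapsto \mathrm{W}_2^2(u_t,\tilde u_t)$ is absolutely continuous, with the formula holding for a.e.\ $t$. Third, the Grönwall step then only requires the differential inequality almost everywhere, which is all that the integral form of Grönwall's lemma uses.
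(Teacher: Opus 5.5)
Your proposal is correct and follows the paper's own route. Uniqueness comes from applying the Grönwall estimate \eqref{eq:gronwall_estimate} of Theorem~\ref{thm:continuous_dependence_on_data} to two flows with the same initial datum; the paper says this as ``letting $\eta\to 0$'', which is equivalent to your direct choice $u_0^\eta=u_0$. Existence comes from the minimizing-movement construction together with Theorem~\ref{thm:existence_of_sol_cont_eq_mobility}, which the paper leaves implicit, and your extra justification of the derivative formula \eqref{eq:derivative_wasserstein} for arbitrary $AC^2$ solutions adds rigor without changing the argument.
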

\begin{proof}
The result immediately follows by letting $\eta\to 0$ in~\eqref{eq:gronwall_estimate}.
\end{proof}

\subsection{Stability under weight perturbation}
We consider now the stability of the gradient flow constructed in the previous section with respect to a perturbation of the weight parameters. For a more detailed analysis on the topic we refer to~\cite{sandier2004gamma,serfaty2011gamma,serfatyCoulombflows}.

Let us define the following perturbed version of~\eqref{eq:time_dep_interaction_energy}:
\begin{equation}\label{eq:time_dep_interaction_energy_perturbed}
\mathcal{E}^\varepsilon(\mu_t,t) = \frac{1}{2}\iiint_{\R^\alpha\times\S^{d-1}\times\S^{d-1}}\mathscr{K}(x,y,\theta)\,\dd\mu_t(y)\,\dd\mu_t(x)\,\dd\Theta_t^\varepsilon(\theta),
\end{equation}
where
\begin{enumerate}[label=(\textbf{S\arabic*})]
\item\label{hyp:stab_theta} $\Theta_t^\varepsilon$ satisfies~\eqref{eq:continuity_eq_full} for some $f^\varepsilon$, $g^\varepsilon$, and $\Theta_t^\varepsilon\rightharpoonup\Theta_t$ narrowly as $\varepsilon\to 0$ for a.e.\ $t\in[0,T]$; \item\label{hyp:stab_f} $f^\varepsilon\in L^1_{\mathrm{loc}}([0,\infty);L^\infty_{\mathrm{loc}}(\R^\alpha;\R^\alpha))$, $g^\varepsilon\in L^2_{\mathrm{loc}}([0,\infty);L^\infty_{\mathrm{loc}}(\R^\alpha;\R^{\alpha\times\alpha}))$, and $f^\varepsilon(\cdot,t)\to f(\cdot,t)$, $g^\varepsilon(\cdot,t)\to g(\cdot,t)$ weakly in $L^2_{\mathrm{loc}}$ as $\varepsilon\to 0$ for a.e.\ $t\in[0,T]$.
\end{enumerate}
We denote by $b_\mu^{\varepsilon}(x):=\bigl(\int 1/m_\mu(x,\theta)\,\dd\Theta_t^\varepsilon(\theta)\bigr)^{-1}$ the harmonic mean prefactor associated with the perturbed parameter measure.
The following holds.

\begin{lemma}\label{lem:gamma_conv_prop}
Suppose~\ref{hyp:stab_theta} and~\ref{hyp:stab_f} hold, and let $(\mu_t^\varepsilon)_{\varepsilon>0}\subset AC^2((0,T);\mathcal{P}_2(\S^{d-1}))$ be such that
\[
\mu_t^\varepsilon\rightharpoonup\mu_t\qquad\text{narrowly in }\mathcal{P}_2(\S^{d-1}),\quad\forall\,t\in[0,T],
\]
for some $\mu_t\in AC^2((0,T);\mathcal{P}_2(\S^{d-1}))$. Then, for every $t\in[0,T]$, it holds
\begin{enumerate}[label=\upshape(\roman*)]
\item\label{gamma_i} $\liminf_{\varepsilon\to 0}\mathcal{E}^\varepsilon(\mu_t^\varepsilon,t)\geq\mathcal{E}(\mu_t,t)$;
\item\label{gamma_ii} $\liminf_{\varepsilon\to 0}-\partial_t\mathcal{E}^\varepsilon(\mu_t^\varepsilon,t)\geq-\partial_t\mathcal{E}(\mu_t,t)$;
\item\label{gamma_iii} $\liminf_{\varepsilon\to 0}\mathscr{G}_t^\varepsilon(\mu_t^\varepsilon)\geq\mathscr{G}_t(\mu_t)$,
\end{enumerate}
where
\[
\mathscr{G}_t^\varepsilon(\mu) := \biggl(\int_{\S^{d-1}}b_\mu^{\varepsilon}(x)\biggl\|\int_{\R^\alpha}\frac{\mathcal{D}\mathscr{K}[\mu](x,\theta)}{m_\mu(x,\theta)}\,\dd\Theta_t^\varepsilon(\theta)\biggr\|^2\,\dd\mu(x)\biggr)^{1/2}
\]
is the perturbed strong upper gradient.
\end{lemma}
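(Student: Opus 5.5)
The plan is to reduce all three statements to a single convergence principle. If $\mu^\varepsilon\rightharpoonup\mu$ narrowly on the compact $\S^{d-1}$ and $\Theta^\varepsilon_t\rightharpoonup\Theta_t$ narrowly, then for every bounded continuous $\Phi$ on $\S^{d-1}\times\S^{d-1}\times\R^\alpha$ we have $\int\Phi\,\dd(\mu^\varepsilon\otimes\mu^\varepsilon\otimes\Theta^\varepsilon_t)\to\int\Phi\,\dd(\mu\otimes\mu\otimes\Theta_t)$. This follows because narrow convergence is preserved under tensor products of narrowly convergent sequences. By \ref{hyp:ker4}, all relevant integrands are supported in $\theta$ on a fixed compact set, so tightness of $\Theta^\varepsilon_t$ is not an issue there. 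In fact I expect to obtain full convergence rather than merely $\liminf$ inequalities in (i) and (iii), and to handle (ii) by a weak-convergence argument.

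\textbf{(i).} The integrand $\mathscr K$ is continuous and bounded by \ref{hyp:ker1}, \ref{hyp:ker4}. The product principle then gives $\mathcal E^\varepsilon(\mu^\varepsilon_t,t)\to\mathcal E(\mu_t,t)$, which is stronger than the claimed inequality.

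\textbf{(iii).} I proceed in three steps.
\begin{itemize}
\item First, since $\mathfrak M$ and $\nabla^x_{\S^{d-1}}\mathscr K$ are Lipschitz in $y$ (\ref{hyp:ker1}), Kantorovich--Rubinstein gives uniform convergence in $(x,\theta)$: $m_{\mu^\varepsilon}\to m_\mu$ and $\mathcal D\mathscr K[\mu^\varepsilon]\to\mathcal D\mathscr K[\mu]$. Together with $m\ge c$, this yields $\mathcal D\mathscr K[\mu^\varepsilon]/m_{\mu^\varepsilon}\to\mathcal D\mathscr K[\mu]/m_\mu$ uniformly.
\item Second, equicontinuity in $\theta$ of $1/m_{\mu^\varepsilon}(x,\cdot)$ (Lipschitz constant $L$, as in the proof of Lemma~\ref{lem:metric_der}) and of $\mathcal D\mathscr K[\mu^\varepsilon](x,\cdot)/m_{\mu^\varepsilon}$ (compact support by \ref{hyp:ker4}, $C^2$ in $\theta$) lets me pass to the limit in $\theta$. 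Concretely, the functions $x\mapsto b^\varepsilon_{\mu^\varepsilon}(x)$ and $x\mapsto\int\mathcal D\mathscr K[\mu^\varepsilon]/m_{\mu^\varepsilon}\,\dd\Theta^\varepsilon_t$ converge uniformly in $x$ to their unperturbed counterparts. I will check the $1/m$ term carefully, since it is not compactly supported in $\theta$. I plan to use that narrow convergence plus uniform equicontinuity and boundedness on a tight family gives uniform convergence; if tightness of $\Theta^\varepsilon_t$ is not available, I fall back on assuming it or on the $\mathrm W_2$ framework of \ref{hyp:stab_theta}.
\item Third, the integrand of $\mathscr G^{\varepsilon\,2}_t(\mu^\varepsilon)$ converges uniformly in $x$ and is integrated against $\mu^\varepsilon\rightharpoonup\mu$. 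Hence $\mathscr G^\varepsilon_t(\mu^\varepsilon_t)\to\mathscr G_t(\mu_t)$.
\end{itemize}

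\textbf{(ii).} This is the main obstacle, because $\partial_t\mathcal E^\varepsilon$ involves $f^\varepsilon,g^\varepsilon$, which converge only weakly. Using the weak form \eqref{eq:ito_weak}, I write
\[
\partial_t\mathcal E^\varepsilon(\mu,t)=\tfrac12\int\big(\langle\nabla_\theta\mathscr K[\mu\otimes\mu],f^\varepsilon\rangle-\nabla^2_\theta\mathscr K[\mu\otimes\mu]:G^\varepsilon\big)\,\dd\Theta^\varepsilon_t,
\]
where $\mathscr K[\mu\otimes\mu](\theta)=\iint\mathscr K\,\dd\mu\,\dd\mu$ converges uniformly in $\theta$ with compact support. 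The test functions converge strongly, but the products $f^\varepsilon\,\dd\Theta^\varepsilon_t$ pair two weakly convergent objects, so the limit does not follow directly. I would first try to show that the fluxes $f^\varepsilon\Theta^\varepsilon_t$ and $G^\varepsilon\Theta^\varepsilon_t$ converge weakly as measures, arguing from \ref{hyp:stab_f} and the $L^\infty_{\mathrm{loc}}$ bounds.

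The quadratic $G^\varepsilon=\frac12g^\varepsilon g^{\varepsilon\top}$ only gives lower semicontinuity, which is presumably why only a $\liminf$ for $-\partial_t\mathcal E$ is claimed. If needed, I would integrate in time instead: $\int_s^r\partial_t\mathcal E^\varepsilon\,\dd t=\mathcal E^\varepsilon(\cdot,r)-\mathcal E^\varepsilon(\cdot,s)$ plus a correction for the $\mu^\varepsilon$-dependence, and (i) handles both endpoints. This gives the statement in integrated form, which is what is used downstream for the EDI.
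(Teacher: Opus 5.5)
Your arguments for (i) and (iii) are correct but follow a different route from the paper. The paper derives both from the product convergence $\mu^\varepsilon_t\times\Theta^\varepsilon_t\rightharpoonup\mu_t\times\Theta_t$, combined with a lower semicontinuity lemma for integrals against narrowly converging measures. For (iii) this is loose: $\mathscr G^\varepsilon_t(\mu)^2$ is not a linear integral against $\mu\otimes\Theta^\varepsilon_t$, since it contains the harmonic mean $b^\varepsilon_\mu$ and the squared norm of a $\Theta^\varepsilon_t$-average. Your Kantorovich--Rubinstein/equicontinuity argument handles that nonlinearity directly. It also gives actual convergence in (i) and (iii), which you will need below.

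Your tightness worry is unnecessary. A narrowly convergent sequence in $\mathcal P(\R^\alpha)$ is automatically tight. Moreover, $\theta\mapsto 1/m_\mu(x,\theta)$ is bounded by $1/c$ and $L$-Lipschitz uniformly in $(\mu,x)$. So $\sup_x\bigl|\int m_\mu(x,\cdot)^{-1}\dd(\Theta^\varepsilon_t-\Theta_t)\bigr|$ is controlled by the bounded-Lipschitz distance, which metrizes narrow convergence. Note that you and the paper both obtain (i) and (iii) only at those $t$ where \ref{hyp:stab_theta} gives $\Theta^\varepsilon_t\rightharpoonup\Theta_t$, i.e.\ for a.e.\ $t$.

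For (ii) there is a genuine gap, and it cannot be closed as written: the pointwise-in-$t$ statement (ii) is false under \ref{hyp:stab_theta} and \ref{hyp:stab_f}. Your first idea, narrow convergence of the flux $f^\varepsilon(\cdot,t)\Theta^\varepsilon_t$ at fixed $t$, fails. So does the paper's proof, which simply asserts $\partial_t\mathcal E^\varepsilon(\mu^\varepsilon_t,t)\to\partial_t\mathcal E(\mu_t,t)$ for a.e.\ $t$ from \ref{hyp:stab_theta}, \ref{hyp:stab_f} and boundedness of $\nabla_\theta\mathscr K,\nabla^2_\theta\mathscr K$. That is exactly the weak-times-weak pairing you flagged.

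Here is a counterexample. Take $\alpha=1$, $g^\varepsilon=g=0$, $\mathscr K(x,y,\theta)=\rho(\theta)$ with $\rho\in C^\infty_c(\R)$ and $\rho'(0)=1$, and fix $\mu^\varepsilon_t=\mu_t=\mu$. Set
\[
\theta^\varepsilon(t):=\varepsilon\sin(t/\varepsilon),\qquad \Theta^\varepsilon_t:=\delta_{\theta^\varepsilon(t)},\qquad f^\varepsilon(\theta,t):=\cos(t/\varepsilon)\,\chi\Bigl(\frac{\theta-\theta^\varepsilon(t)}{\varepsilon}\Bigr),
\]
with $\chi\in C^\infty_c((-1,1))$, $0\le\chi\le 1$, $\chi(0)=1$. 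Then:
\begin{enumerate}
\item[$\bullet$] $\dot\theta^\varepsilon=f^\varepsilon(\theta^\varepsilon,t)$, so $\Theta^\varepsilon$ solves the weight equation.
\item[$\bullet$] $\|f^\varepsilon(\cdot,t)\|_{L^2}\le\sqrt{2\varepsilon}\to 0=:f$.
\item[$\bullet$] $\Theta^\varepsilon_t\rightharpoonup\delta_0=:\Theta_t$, which solves the weight equation with $f=0$.
\item[$\bullet$] Yet $\partial_t\mathcal E^\varepsilon(\mu,t)=\tfrac12\rho'(\theta^\varepsilon(t))\cos(t/\varepsilon)$, while $\partial_t\mathcal E\equiv 0$.
\end{enumerate}
Along $\varepsilon_k=t/(2\pi k)$ one gets $-\partial_t\mathcal E^{\varepsilon_k}(\mu,t)=-\tfrac12$, so (ii) fails at every $t>0$. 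Along $t/((2k+1)\pi)$ the sign flips, so no pointwise inequality survives in either direction. The flux $\cos(t/\varepsilon)\,\delta_{\theta^\varepsilon(t)}$ has no limit. Your heuristic that the quadratic $G^\varepsilon$ still gives lower semicontinuity does not help either, since $\nabla^2_\theta\mathscr K$ has no sign.

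Your fallback, the time-integrated statement, is therefore the right replacement. It is also all that Theorem~\ref{thm:stability} actually uses, since (ii) only enters there through Fatou's lemma applied to $\int_0^T\partial_t\mathcal E^\varepsilon\,\dd t$. As written, however, it is only a sketch. The correction term is
\[
\int_s^r\!\iint\bigl\langle\mathcal D\mathscr K[\mu^\varepsilon_\tau](x,\theta),\bar v^\varepsilon_\tau(x)\bigr\rangle\,\dd\Theta^\varepsilon_\tau(\theta)\,\dd\mu^\varepsilon_\tau(x)\,\dd\tau,
\]
which pairs a field converging uniformly in $x$ (by your (iii) argument) with momenta $\bar v^\varepsilon\mu^\varepsilon$ that converge only weakly. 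To pass to the limit you need three things:
\begin{enumerate}
\item[$\bullet$] a uniform bound $\sup_\varepsilon\int_0^T|\dot\mu^\varepsilon|^2\,\dd t<\infty$, which is not among the lemma's hypotheses but does hold for the gradient flows in Theorem~\ref{thm:stability};
\item[$\bullet$] extraction of a limit momentum;
\item[$\bullet$] identification of that limit as $\bar v\mu$ with $(\mu,\bar v)\in\mathrm{CE}(0,T)$.
\end{enumerate}
Since the chain rule for $\mathcal E(\cdot,t)$ along $\mu$ does not depend on which admissible velocity is used, this closes the argument. It also requires full convergence of the endpoint difference $\mathcal E^\varepsilon(\mu^\varepsilon_r,r)-\mathcal E^\varepsilon(\mu^\varepsilon_s,s)$. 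Your strengthened (i) provides this; the paper's $\liminf$ version would not.
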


\begin{proof}
Since $\Theta_t^\varepsilon \to \Theta_t$ narrowly for a.e. $t\in [0,T]$ and by applying \cite[Theorem 2.8]{billingsley2013convergence} we have that
\begin{equation}\label{eq:product_conv}
\mu^\varepsilon_t \times \Theta_t^\varepsilon
\underset{\varepsilon \to 0}{\rightharpoonup}
\mu_t \times \Theta_t
\qquad \text{for a.e.\ } t \in [0, T].
\end{equation}
\ref{gamma_i} Recalling \eqref{eq:time_dep_interaction_energy_perturbed},
$\mathcal{E}^\varepsilon(\mu_t^\varepsilon, t)$ can be written as the integral of
$\mathscr{K}(x, y, \theta)$ against the product measure
$\mu_t^\varepsilon \otimes \mu_t^\varepsilon \otimes \Theta_t^\varepsilon$. The result follows from
\eqref{eq:product_conv} and the lower semicontinuity property for narrowly converging measures (see for instance \cite[Lemma 5.1.7]{ambrosio2008gradient}).

\ref{gamma_ii} By a direct computation from~\eqref{eq:full_transformer_ODE}, $\partial_t\mathcal{E}^\varepsilon(\mu_t^\varepsilon,t)$ depends on $f^\varepsilon$, $g^\varepsilon$, and $\Theta_t^\varepsilon$ through the terms $\int\langle\nabla_\theta\mathscr{K},f^\varepsilon\rangle\,\dd\Theta_t^\varepsilon$ and $\int\nabla_\theta^2\mathscr{K}:g^\varepsilon\,\dd\Theta_t^\varepsilon$. By~\ref{hyp:stab_theta} and~\ref{hyp:stab_f}, combined with the boundedness of $\nabla_\theta\mathscr{K}$ and $\nabla_\theta^2\mathscr{K}$ from~\ref{hyp:ker4} and~\ref{hyp:ker1}, $\partial_t\mathcal{E}^\varepsilon(\mu_t^\varepsilon,t)\to\partial_t\mathcal{E}(\mu_t,t)$ for a.e.\ $t$, which in particular implies the result.

\ref{gamma_iii} By~\ref{hyp:ker1} and~\ref{hyp:ker2}, the integrand in $\mathscr{G}_t^\varepsilon(\mu_t^\varepsilon)^2$ is lower semicontinuous in $\mu$ with respect to narrow convergence, and integration against $\Theta_t^\varepsilon$ converges narrowly by~\eqref{eq:product_conv}. The claim follows again from~\cite[Lemma~5.1.7]{ambrosio2008gradient}.
\end{proof}

We are then ready to state and prove the main result of this section.
\begin{thm}\label{thm:stability}
Let us consider the same setting as in Lemma~\ref{lem:gamma_conv_prop}. Let $u_t^\varepsilon$ be a gradient flow of $\mathcal{E}^\varepsilon$ with initial datum $u_0$. Then, there exists $u\in AC^2((0,T);\mathcal{P}_2(\S^{d-1}))$ such that
\begin{equation}\label{eq:conv_of_gf}
u_t^\varepsilon\rightharpoonup u_t\qquad\forall\,t\in[0,T],
\end{equation}
narrowly in $\mathcal{P}_2(\S^{d-1})$. Moreover, $u$ is a gradient flow of $\mathcal{E}$ with initial datum $u_0$.
\end{thm}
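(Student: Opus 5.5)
We follow the Sandier--Serfaty $\Gamma$-convergence of gradient flows scheme, using the EDI characterization of Theorem~\ref{thm:GMMs_are_curve_of_max_slope} together with the liminf inequalities of Lemma~\ref{lem:gamma_conv_prop}.

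\textbf{Step 1: uniform a priori bounds.} By Theorem~\ref{thm:GMMs_are_curve_of_max_slope} applied to $\mathcal{E}^\varepsilon$, each $u^\varepsilon$ satisfies the EDI
\begin{equation*}
\mathcal{E}^\varepsilon(u^\varepsilon_t,t)+\frac12\int_0^t|\dot u^\varepsilon|^2\dd s+\frac12\int_0^t(\mathscr{G}^\varepsilon_s(u^\varepsilon_s))^2\dd s=\mathcal{E}^\varepsilon(u_0,0)+\int_0^t\partial_s\mathcal{E}^\varepsilon(u^\varepsilon_s,s)\dd s .
\end{equation*}
The following quantities are bounded uniformly in $\varepsilon$, by \ref{hyp:ker4}--\ref{hyp:ker1} and the compactness of $\S^{d-1}$:
\begin{itemize}
\item $\mathcal{E}^\varepsilon$ itself;
\item $\partial_t\mathcal{E}^\varepsilon$, which is controlled as in the proof of Theorem~\ref{thm:existence_of_GMMs} by $C(\|f^\varepsilon\|_{L^1L^\infty}+\|g^\varepsilon\|^2_{L^2L^\infty})$.
\end{itemize}
For the second bound we need this norm to be bounded uniformly in $\varepsilon$. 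This follows from \ref{hyp:stab_f} if we read it as including uniform local bounds; weak convergence alone gives boundedness in $L^2_{\mathrm{loc}}$, and we would state this use explicitly. Hence $\int_0^T|\dot u^\varepsilon|^2\le C$. By Proposition~\ref{prop:equiv_of_norms}, which is uniform since $c\le m\le\bar c$ does not depend on $\Xi$, the curves are equi-$\frac12$-Hölder in $\mathrm W_2$. Compactness of $\mathcal{P}_2(\S^{d-1})$ and the refined Ascoli--Arzel\`a theorem \cite[Prop.~3.3.1]{ambrosio2008gradient} then give a subsequence with $u^\varepsilon_t\rightharpoonup u_t$ for every $t$, with $u\in AC^2$ and $u_0$ preserved.

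\textbf{Step 2: passing to the limit in the EDI.} Each term is handled separately:
\begin{itemize}
\item \emph{Energy at time $t$.} Lemma~\ref{lem:gamma_conv_prop}\ref{gamma_i} gives the liminf inequality; in fact, continuity of $\mathscr{K}$ and \eqref{eq:product_conv} give convergence.
\item \emph{Initial energy.} $\mathcal{E}^\varepsilon(u_0,0)\to\mathcal{E}(u_0,0)$ by narrow convergence of $\Theta^\varepsilon_0$.
\item \emph{Time-derivative term.} Lemma~\ref{lem:gamma_conv_prop}\ref{gamma_ii} gives a.e.\ convergence, and dominated convergence in $s$ applies thanks to the uniform bound from Step 1.
\item \emph{Slope term.} Lemma~\ref{lem:gamma_conv_prop}\ref{gamma_iii} combined with Fatou yields $\int\mathscr{G}_s^2(u_s)\le\liminf\int(\mathscr{G}^\varepsilon_s)^2$.
\end{itemize}

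\textbf{Step 3: metric derivative term (main obstacle).} This is the hardest step, because the metric $\mathrm W_{m,2}(\cdot,\cdot;\Theta^\varepsilon)$ itself depends on $\varepsilon$, so standard lower semicontinuity of the action does not apply directly. We would use the representation of Lemma~\ref{lem:metric_der}:
\begin{equation*}
|\dot u^\varepsilon|^2=\int b^{\varepsilon}_{u^\varepsilon}\|\bar v^\varepsilon\|^2\dd u^\varepsilon ,
\end{equation*}
with $(u^\varepsilon,\bar v^\varepsilon)\in\CE$. Since $b^\varepsilon\ge c$, the momenta $E^\varepsilon=\bar v^\varepsilon u^\varepsilon$ are bounded in measure. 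Extract $E^\varepsilon\rightharpoonup E=\bar v u$ with $(u,\bar v)\in\CE$. Then $b^\varepsilon_{u^\varepsilon}(x)\to b_u(x)$ uniformly in $x$, using the Lipschitz bound \eqref{eq:b_lip} together with narrow convergence of $\Theta^\varepsilon$ and $u^\varepsilon$ (the KR argument applied to $1/m$). The joint convexity and lower semicontinuity of $(\rho,E)\mapsto\int b\,|E|^2/\rho$ \cite[Thm.~2.34-type Benamou--Brenier lsc]{ambrosio2008gradient} therefore yields
\begin{equation*}
\int_0^t\!\int b_u\|\bar v\|^2\dd u\,\dd s\le\liminf_{\varepsilon\to 0}\int_0^t|\dot u^\varepsilon|^2\dd s .
\end{equation*}
The converse part of Lemma~\ref{lem:metric_der} then bounds $\int_0^t|\dot u|^2$ by the left side.

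\textbf{Step 4: conclusion.} Combining Steps 2 and 3, $u$ satisfies the inequality \eqref{eq:EDI_td} on $[0,t]$. The reverse inequality \eqref{eq:var_ineq} always holds by Theorem~\ref{thm:GMMs_are_curve_of_max_slope}, so equality holds and $u$ solves \eqref{eq:cont_eq_with_mobility}. Hence $u$ is the gradient flow of $\mathcal{E}$ from $u_0$. By Corollary~\ref{cor:uniqueness} this limit is unique, so the whole family $u^\varepsilon$ converges, not merely a subsequence.
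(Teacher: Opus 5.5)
Your skeleton matches the paper's: equi-H\"older compactness with the refined Ascoli--Arzel\`a theorem, then $\liminf$ in the EDI of $u^\varepsilon$ through Lemma~\ref{lem:gamma_conv_prop}, then \eqref{eq:var_ineq} to force equality. Two steps are handled differently.

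\textbf{Compactness.} The paper does not go through the EDI here. Since $u^\varepsilon$ solves \eqref{eq:cont_eq_with_mobility} with $\Theta^\varepsilon$, Lemma~\ref{lem:metric_der} gives $|\dot u^\varepsilon|^2(r)\le(\mathscr{G}^\varepsilon_r(u^\varepsilon_r))^2$. The right-hand side is bounded uniformly in $r$ and $\varepsilon$ using only \ref{hyp:ker4}, \ref{hyp:ker1} and \ref{hyp:ker2}. You should use this, so that Step~1 no longer needs uniform bounds on $\|f^\varepsilon\|_{L^1L^\infty}$ and $\|g^\varepsilon\|_{L^2L^\infty}$, which \ref{hyp:stab_f} does not provide.

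Such control is still needed for the $\partial_t\mathcal{E}^\varepsilon$ term in your Step~2, and in a stronger form: you need an $\varepsilon$-independent integrable majorant, since a uniform $L^1$ bound does not license dominated convergence. The paper's Fatou step tacitly requires the same lower bound on $-\partial_t\mathcal{E}^\varepsilon$, so this is a weakness of the hypotheses rather than of your route. Both can be sidestepped by passing to the limit directly in the weak form of \eqref{eq:cont_eq_with_mobility}. Its velocity fields are explicit, uniformly bounded, and converge uniformly in $x$ for a.e.\ $t$ under the same regularity you already use for $b^\varepsilon$.

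\textbf{Metric-derivative term.} This is the substantive difference. The paper obtains \eqref{eq:metric_der_conv} by setting $\pi_1(v_t):=\mathcal{D}\mathscr{K}[u_t]/m_{u_t}$ and then invoking \eqref{eq:holder_young}--\eqref{eq:total_diff} for this field. That presupposes the field is a velocity of $u$, i.e.\ that $u$ already solves the limit equation. Your Benamou--Brenier argument supplies what is actually needed. Bounded momenta $E^\varepsilon=\bar v^\varepsilon u^\varepsilon$ give a limit pair $(u,\bar v)\in\CE(0,T)$, and lower semicontinuity of the weighted action gives an admissible velocity of $u$ with
$\int_0^t\!\int b_{u}\|\bar v\|^2\dd u\,\dd s\le\liminf_{\varepsilon}\int_0^t|\dot u^\varepsilon|^2\dd s$.
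This pairs with \eqref{eq:var_ineq} for the horizontal lift of $\bar v$, so on this point your argument is more complete than the paper's.

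Two details in that step need adjusting:
\begin{itemize}
\item \eqref{eq:b_lip} is stated with $\mathrm W_2(\Xi_\tau,\Xi_{\bar t})$, which narrow convergence of $\Theta^\varepsilon_t$ does not control. Use instead that $\theta\mapsto 1/m_\mu(x,\theta)$ is bounded and uniformly Lipschitz; the bounded-Lipschitz metric then gives $b^\varepsilon_{u^\varepsilon_t}\to b_{u_t}$ uniformly in $x$.
\item This convergence holds only for a.e.\ $t$, so derive the weighted lower semicontinuity from the pointwise inequality $b^\varepsilon\|\bar v^\varepsilon\|^2\ge 2\langle B,\bar v^\varepsilon\rangle-\|B\|^2/b^\varepsilon$ for continuous tangent test fields $B$. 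Pass to the limit using $E^\varepsilon\rightharpoonup E$ and dominated convergence in time, then take the supremum over $B$.
\end{itemize}

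Finally, your appeal to Corollary~\ref{cor:uniqueness} upgrades subsequential convergence to convergence of the whole family, as the statement asserts. This fills a step the paper's proof leaves out.
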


\begin{proof}
We divide the proof into two steps: in the first one, we show~\eqref{eq:conv_of_gf}; the second is devoted to proving that $u$ is a gradient flow of $\mathcal{E}$.

\textbf{Step 1.} Let us first observe that, since $u^\varepsilon \in AC^2((0, T); \mathcal{P}_2(\mathbb{S}^{d - 1}))$, we have 
\begin{equation}\label{eq:abs_cont_gd}
{\mathrm{W}}_{m, 2}(u_t^\varepsilon, u_s^\varepsilon) \leq \int_s^t |\dot{u}^\varepsilon|(r) \dd r.
\end{equation}
Moreover, recalling that $u^\varepsilon$ is a gradient flow, by Lemma \ref{lem:metric_der} and Theorem \ref{thm:existence_of_sol_cont_eq_mobility}, it holds
\[
|\dot u^\varepsilon|^2(r) = \int_{\S^{d-1}}b_{u_r^\varepsilon}^{\varepsilon}(x)\biggl\|\int_{\R^\alpha}\frac{\mathcal{D}\mathscr{K}[u_r^\varepsilon](x,\theta)}{m_{u_r^\varepsilon}(x,\theta)}\,\dd\Theta_r^\varepsilon(\theta)\biggr\|^2\,\dd u_r^\varepsilon(x)\leq C,
\]
for some constant $C>0$ independent of $\varepsilon$, thanks to~\ref{hyp:ker4},~\ref{hyp:ker1}, and~\ref{hyp:ker2}. Then, by Hölder inequality, we get
\[
{\mathrm{W}}_{m, 2}(u_t^\varepsilon, u_s^\varepsilon) \leq (t-s)^{1/2}
      \left(\int_s^t |\dot u^\varepsilon|^2(r)\,\mathrm{d}r\right)^{1/2} \leq C |t - s|^{1/2}.
\]
Finally, by compactness of $\S^{d-1}$ and Prokhorov's theorem, the family $(u_t^\varepsilon)_{\varepsilon>0,\,t\in[0,T]}$ is narrowly relatively compact in $\mathcal{P}(\S^{d-1})$; the Hölder estimate gives equicontinuity in $t$. Hence~\eqref{eq:conv_of_gf} follows by the refined Ascoli--Arzelà theorem~\cite[Prop.~3.3.1]{ambrosio2008gradient}.

\medskip
\noindent\textbf{Step 2.} 
Since $u^\varepsilon$ is a curve of maximal slope for
$\mathcal{E}^\varepsilon$, by Theorem~\ref{thm:GMMs_are_curve_of_max_slope} it
satisfies the energy-dissipation identity
\begin{equation}\label{eq:EDI_eps}
\begin{aligned}
\mathcal{E}^\varepsilon(u_0, 0) - \mathcal{E}^\varepsilon(u_T^\varepsilon, T)
= &\, -\int_0^T \partial_t\mathcal{E}^\varepsilon(u_t^\varepsilon, t)\,\dd t
  + \frac{1}{2}\int_0^T |\dot{u}^\varepsilon|^2(t)\,\dd t
  + \frac{1}{2}\int_0^T (\mathscr{G}_t^\varepsilon)^2(u_t^\varepsilon)\,\dd t.
\end{aligned}
\end{equation}
We take $\liminf_{\varepsilon \to 0}$ on both sides. For the left-hand side, since
$u_0^\varepsilon = u_0$ for every $\varepsilon > 0$ and
$\Theta_0^\varepsilon\rightharpoonup\Theta_0$ by~\ref{hyp:stab_theta}, we have
$\mathcal{E}^\varepsilon(u_0, 0) \to \mathcal{E}(u_0, 0)$. Moreover, by
\ref{gamma_i} in Lemma~\ref{lem:gamma_conv_prop}, we have
\[
\liminf_{\varepsilon \to 0} -\mathcal{E}^\varepsilon(u_T^\varepsilon, T)
\leq -\mathcal{E}(u_T, T),
\]
so that
\begin{equation}\label{eq:liminf_lhs}
\liminf_{\varepsilon \to 0}
\bigl[\mathcal{E}^\varepsilon(u_0, 0) - \mathcal{E}^\varepsilon(u_T^\varepsilon, T)\bigr]
\leq \mathcal{E}(u_0, 0) - \mathcal{E}(u_T, T).
\end{equation}
For the right-hand side, reasoning as in the proof of~\ref{gamma_iii} of Lemma~\ref{lem:gamma_conv_prop}, we get
\begin{equation}\label{eq:metric_der_conv}
\liminf_{\varepsilon\to 0}|\dot u^\varepsilon|^2(t)\geq\int_{\S^{d-1}}b_{u_t}(x)\biggl\|\int_{\R^\alpha}\pi_1(v_t(x,\theta))\,\dd\Theta_t(\theta)\biggr\|^2\,\dd u_t(x),
\end{equation}
where $\pi_1(v_t):=\mathcal{D}\mathscr{K}[u_t]/m_{u_t}$.

Since the left-hand side and right-hand side of \eqref{eq:EDI_eps} are equal for each
$\varepsilon$, combining \eqref{eq:liminf_lhs} with \eqref{eq:metric_der_conv},
\ref{gamma_ii}, \ref{gamma_iii} in Lemma~\ref{lem:gamma_conv_prop} and Fatou's lemma, we obtain 
\begin{equation}\label{eq:var_ineq_liminf}
\begin{aligned}
\mathcal{E}(u_0,0)-\mathcal{E}(u_T,T)&\geq -\int_0^T\partial_t\mathcal{E}(u_t,t)\,\dd t\\
&\quad+\frac{1}{2}\int_0^T\int_{\S^{d-1}}b_{u_t}(x)\biggl\|\int_{\R^\alpha}\pi_1(v_t(x,\theta))\,\dd\Theta_t(\theta)\biggr\|^2\,\dd u_t(x)\,\dd t\\
&\quad+\frac{1}{2}\int_0^T\int_{\S^{d-1}}b_{u_t}(x)\biggl\|\int_{\R^\alpha}\frac{\mathcal{D}\mathscr{K}[u_t](x,\theta)}{m_{u_t}(x,\theta)}\,\dd\Theta_t(\theta)\biggr\|^2\,\dd u_t(x)\,\dd t.
\end{aligned}
\end{equation}
Recalling now \eqref{eq:holder_young}
and \eqref{eq:total_diff}, we get that $\mathcal{E}(u_0, 0) - \mathcal{E}(u_T, T)$ is  also a lower bound for the right-hand side of the previous inequality, which implies that all the above inequalities are in fact equalities.
 In particular, equality in~\eqref{eq:var_ineq_liminf} gives
\[
\begin{aligned}
\mathcal{E}(u_T,T)-\mathcal{E}(u_0,0)&-\int_0^T\partial_t\mathcal{E}(u_t,t)\,\dd t\\
&+\frac{1}{2}\int_0^T\int_{\S^{d-1}}b_{u_t}(x)\biggl\|\int_{\R^\alpha}\pi_1(v_t(x,\theta))\,\dd\Theta_t(\theta)\biggr\|^2\,\dd u_t(x)\,\dd t\\
&+\frac{1}{2}\int_0^T\int_{\S^{d-1}}b_{u_t}(x)\biggl\|\int_{\R^\alpha}\frac{\mathcal{D}\mathscr{K}[u_t](x,\theta)}{m_{u_t}(x,\theta)}\,\dd\Theta_t(\theta)\biggr\|^2\,\dd u_t(x)\,\dd t = 0,
\end{aligned}
\]
and the result follows from the last claim of Theorem~\ref{thm:GMMs_are_curve_of_max_slope}.
\end{proof}

\begin{remark}
   Lemma~\ref{lem:gamma_conv_prop} and Theorem~\ref{thm:stability} are a reformulation
of \cite[Theorem~2]{serfaty2011gamma} in our setting. The key observation is that
Step~2 reverses the proof of Theorem~\ref{thm:GMMs_are_curve_of_max_slope}: the EDI
for $u^\varepsilon$ is combined with the $\Gamma$-liminf properties
\ref{gamma_i}--\ref{gamma_iii} to recover the EDI for $u$, from which the gradient
flow characterization follows. The $\Gamma$-limsup property (recovery sequence) is
used implicitly when passing from
$\mathcal{E}^\varepsilon(u_0, 0) \to \mathcal{E}(u_0, 0)$, which holds since
$u_0^\varepsilon = u_0$ is fixed and $\Theta_0^\varepsilon\rightharpoonup\Theta_0$ by~\ref{hyp:stab_theta}.

Moreover, since the measure valued formulation recovers the multi-head model when $\Theta_t$ is empirical,
it is natural to approximate a general parameter distribution $\Theta_t$ by empirical measures
\begin{equation}
\Theta_t^H:=\frac1H\sum_{h=1}^H \delta_{\theta_t^{(h,H)}}, \qquad H\in\mathbb N,
\end{equation}
chosen such that $\Theta_t^H \rightharpoonup \Theta_t$ as $H\to\infty$. If, in addition, these approximations satisfy the perturbative hypotheses \ref{hyp:stab_theta} and \ref{hyp:stab_f},
then Theorem~\ref{thm:stability} yields convergence of the corresponding gradient flows to the gradient flow
associated with $\Theta_t$.
\end{remark}

\section{Numerical Experiments}\label{sec:numerics_long_time}

We complement the theoretical results above with numerical examples demonstrating that the continuous framework employed can accurately describe the discrete model in~\eqref{eq:full_transformer_ODE}\footnote{The full code is available at \url{https://github.com/alexmassucco/Transformers_as_time_dependent_gradient_flows.git}}. Throughout this section, we write $D_t^{(h)} := D(\theta_t^{(h)})$ for the matrix-valued weights of head $h$, and set $\Theta_t = \frac{1}{H}\sum_{h=1}^H \delta_{D_t^{(h)}}$ for the corresponding empirical measure on $\mathrm{Sym}(d)$. We discretize~\eqref{eq:full_transformer_ODE} via a projected explicit Euler method, obtaining
\[
  x_i(t+\Delta t)
  = \Pi\!\left(
      x_i(t)
      + \frac{\Delta t}{H}
        \sum_{h=1}^{H}\sum_{j=1}^{n}
        \frac{e^{\langle x_i,\, D_t^{(h)} x_j\rangle}}
             {\sum_{p=1}^{n} e^{\langle x_i,\, D_t^{(h)} x_p\rangle}}
        D_t^{(h)}\,x_j
    \right),
\]
where $\Pi(x) = x/\|x\|$ denotes the radial projection onto the sphere, and the weight matrices evolve according to the Euler--Maruyama update
\[
  D_{t+\Delta t}^{(h)}
  = D_t^{(h)}
    + f\!\left(D_t^{(h)},t\right)\Delta t
    + g\!\left(D_t^{(h)},t\right) W_{\Delta t}^{(h)},
\]
with the symmetrized Gaussian increment
\[
  W_{\Delta t}^{(h)} = \tfrac{1}{2}(Z + Z^{\top}),
  \qquad
  Z \sim \mathcal{N}(0,\,\Delta t\,\mathrm{Id}),\qquad h = 1,\dots, H.
\]
This symmetry constraint ensures that $D_t^{(h)}$ remains in the space of symmetric matrices for all $t$, consistent with the continuous model.

\paragraph{Discrete energy balance.}
The stochastic formulation requires all quantities to be interpreted as random variables. Applying the discrete It\^{o} formula to the energy
\[
  E_t
  = \frac{1}{2Hn^2}
    \sum_{i,j,h} e^{\langle x_i,\, D_t^{(h)} x_j\rangle},
\]
the energy-balance identity~\eqref{eq:var_ineq} takes the discrete stochastic form
\begin{equation}\label{eq:discrete_energy_balance}
\begin{aligned}
E_T - E_0
  \;=\;&
  \sum_{k=0}^{\lfloor T/\Delta t\rfloor - 1}
  \frac{1}{2Hn^2}\sum_{i,j,h}
    e^{\langle x_i, D_{t_k}^{(h)} x_j\rangle}\,
    \bigl(x_i x_j^{\top} : f(D_{t_k}^{(h)},t_k)\bigr)
    \,\Delta t \\
  &+
  \sum_{k=0}^{\lfloor T/\Delta t\rfloor - 1}
  \frac{1}{4Hn^2}\sum_{i,j,h}
    e^{\langle x_i, D_{t_k}^{(h)} x_j\rangle}
    \bigl(x_i x_j^{\top} : g(D_{t_k}^{(h)},t_k)\bigr)^2\,\Delta t \\
  &-
  \sum_{k=0}^{\lfloor T/\Delta t\rfloor - 1}
    \mathscr{G}_{t_k}^2(u_{t_k})\,\Delta t
  \;+\; M_T,
\end{aligned}
\end{equation}
where $\mathscr{G}_{t}^2$ is the squared strong upper gradient defined in~\eqref{eq:upper_grad}, $A : B = \mathrm{tr}(A^\top B)$ denotes the Frobenius inner product, and
\[
  M_T
  = \frac{1}{2Hn^2}
    \sum_{k=0}^{\lfloor T/\Delta t\rfloor-1}
    \sum_{i,j,h}
    e^{\langle x_i, D_{t_k}^{(h)} x_j\rangle}
    \bigl(x_i x_j^{\top} : g(D_{t_k}^{(h)},t_k)\,W_{\Delta t}^{(h,k)}\bigr)
\]
is the martingale increment arising from the It\^{o} correction, with $W_{\Delta t}^{(h,k)}$ the independent Gaussian increment for head $h$ at step $k$. In the deterministic case $g\equiv 0$, identity~\eqref{eq:discrete_energy_balance} reduces to a discrete gradient-flow equality mirroring the continuous identity~\eqref{eq:var_ineq}. Moreover, since $M_T$ is a martingale with $\mathbb{E}[M_T]=0$, taking expectations in~\eqref{eq:discrete_energy_balance} eliminates the stochastic residual and recovers~\eqref{eq:var_ineq} in the continuous limit.

\begin{figure}[ht!]
  \centering
  \includegraphics[width=0.95\linewidth]{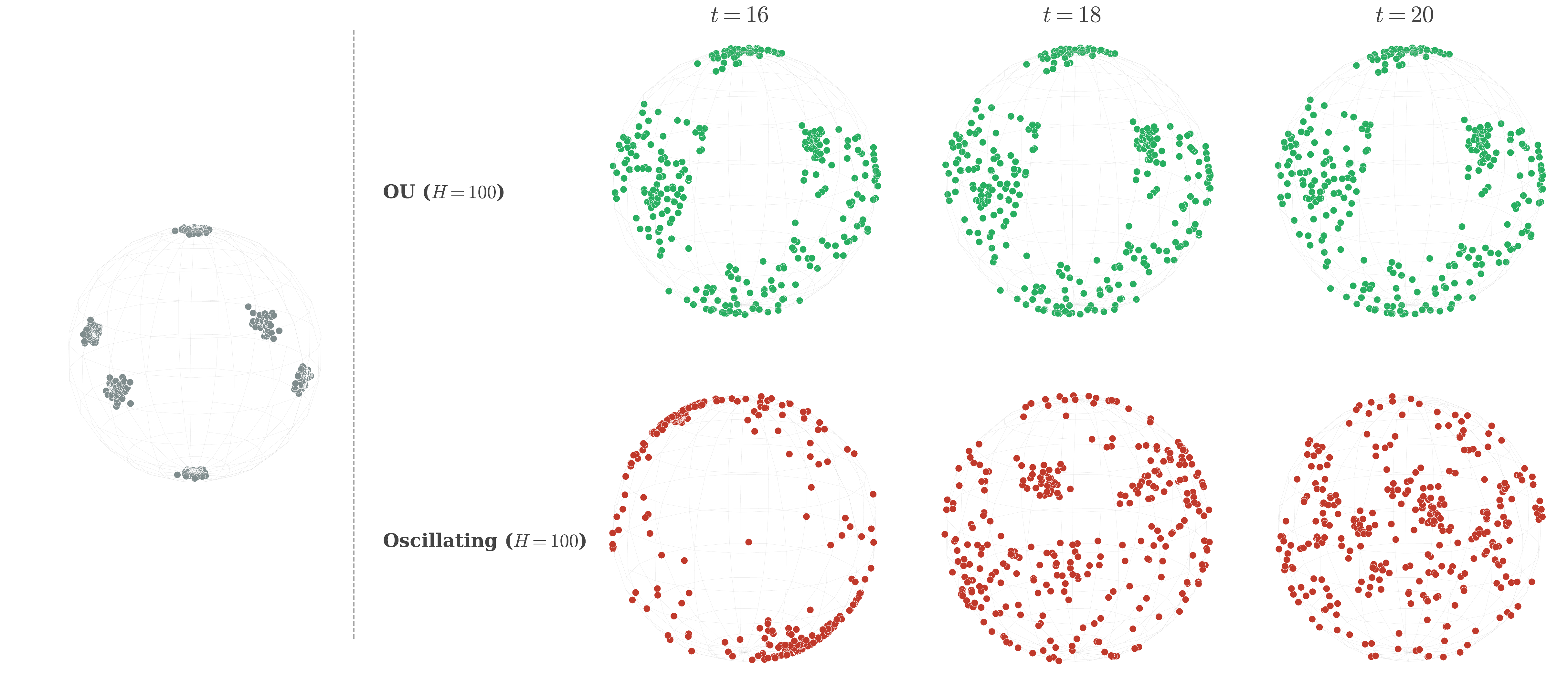}
  \caption{%
    Token cloud at $t\in\{0,16,18,20\}$ on $\mathbb{S}^2$ ($n=150$ tokens).
    Top row: OU weights (Monte Carlo mean over $N_{\mathrm{MC}} = 20$ trajectories), showing progressive token clustering consistent with convergence to a stationary point of $\mathcal{E}$ as predicted by Theorem~\ref{thm:stationary_points}.
    Bottom row: oscillating weights, showing no persistent clustering, consistent with the absence of a stationary limit when Assumption~\eqref{eq:integrable_time_derivative_energy} is violated.
  }
  \label{fig:combined_snapshots}
\end{figure}

\paragraph{Ornstein--Uhlenbeck weights.}
We consider the generalized Ornstein--Uhlenbeck (OU) drift and diffusion
coefficients
\[
  f(D,t) = F - D,
  \qquad
  g(D,t) = \sqrt{2\sigma^2},
\]
for a fixed symmetric matrix $F \in \mathrm{Sym}(d)$, so that the weight SDE becomes
\[
  \dd D_t^{(h)} = (F - D_t^{(h)})\,\dd t + \sqrt{2\sigma^2}\,\dd W_t^{(h)},\qquad h=1,\dots,H.
\]
This is a standard matrix-valued OU process whose law converges
exponentially fast to the invariant Gaussian distribution on $\mathrm{Sym}(d)$ with mean $F$ and entrywise variance $\sigma^2$, denoted $\overline\Theta = \mathcal{N}(F, \sigma^2\mathrm{Id})$, independently of the
initial distribution $\Theta_0$~\cite{Pavliotis2014}. In particular,
\begin{equation}\label{eq:exp_conv_mean}
  \mathbb{E}[D_t^{(h)}] = F + e^{-t}(\mathbb{E}[D_0^{(h)}] - F) \;\underset{t \to \infty}{\rightarrow}\; F,
  \qquad \mathrm{Cov}(D_t^{(h)}) = \sigma^2\,\mathrm{Id} + \mathcal{O}(e^{-2t}).
\end{equation}
Since $\mathcal{E}(u_t, t)$ depends on $t$ only through $\Theta_t$, and $\Theta_t$ reverts to its mean at an exponential rate, the partial derivative $\partial_t \mathcal{E}(u_t, t)$ relaxes exponentially fast in expectation. In particular,
\[
\mathbb{E}\bigl[\,\sup_{t \geqslant 0}
  \int_0^t \partial_\tau \mathcal{E}(u_\tau, \tau)\,\dd\tau\,\bigr]
  < +\infty,
\]
so that Assumption~\eqref{eq:integrable_time_derivative_energy} is satisfied in expectation along trajectories and Theorem~\ref{thm:stationary_points} applies. Every limit point of the flow therefore belongs to $\bigcup_{\bar\Theta \in \Omega(\Theta)} \mathbb{E}_{\mathscr{G}}^{\bar\Theta}$, confirming that the token dynamics relax to a stationary configuration of the energy at the limiting weight distribution $\overline\Theta = \mathcal{N}(F, \sigma^2\mathrm{Id})$. This setup provides a continuous-time interpretation of standard neural-network initialization schemes such as normal initialization ($\Theta_0 = \mathcal{N}(0, \sigma^2\mathrm{Id})$) and He initialization~\cite{He2015} ($\sigma^2 = 2/n_{\text{in}}$, with $n_{\text{in}}$ the fan-in), both of which are absorbed into the unique stationary distribution $\overline\Theta = \mathcal{N}(F, \sigma^2\mathrm{Id})$ of the OU flow.

We simulate $N_{\mathrm{MC}} = 20$ independent trajectories of the coupled system $(x_t, \Theta_t)$ with $F = \mathrm{Id}$ on $\mathbb{S}^2$ ($n=300$ tokens, $d=3$), using time-step $\Delta t = 1\times 10^{-2}$ up to $T=20$, and vary the number of attention heads $H \in \{1,10,100\}$ with fixed $\sigma^2 = 1$.

Figure~\ref{fig:combined_snapshots} (top row) displays snapshots of the mean token cloud (averaged over all $N_{\mathrm{MC}}$ trajectories) at times $t \in \{16,18,20\}$, alongside the initial configuration at $t=0$. For $H=100$, the mean token positions rapidly converge to a static distribution induced by the stationary distribution of the weights. Figure~\ref{fig:combined_energy_balance} (left) visualizes the discrete energy identity~\eqref{eq:discrete_energy_balance} for $H=100$, showing the Monte Carlo mean of each term---$\Delta\mathcal{E}$, $\int_0^t\mathscr{G}_s^2\,\dd s$, $\int_0^t\partial_s\mathcal{E}\,\dd s$, and the stochastic residual $M_t$---with $\pm 1\sigma$ shaded bands. The total balance remains zero throughout, providing a consistency check of the numerical scheme.

Recall that the velocity of the tokens is given by
\[
  v_i(t)
  \;=\;
  \frac{1}{H}\sum_{h=1}^{H}
  \underbrace{P_{x_i}^\perp\!\Bigl(\sum_{j=1}^{n} A_{ij}(D_t^{(h)})\,D_t^{(h)}\, x_j\Bigr)}_{=:\,V_i^{(h)}(t)\;\in\;T_{x_i}\mathbb{S}^{d-1}}, \qquad \forall i \in [1, n],
\]
where $A_{ij}(D)$ denotes the softmax attention weights from~\eqref{eq:self_att}.
Using i.i.d.\ sampling of the weight matrices, and conditioning on the current token configuration $\{x_j(t)\}$, the squared upper gradient decomposes as
\begin{equation}\label{eq:G2_decomp}
\mathbb{E}_{\Theta_t}\!\bigl[\mathscr{G}_t^2\bigr]
  \;=\;
  \frac{1}{n}\sum_{i=1}^{n}\mathbb{E}_{\Theta_t}\!\bigl[\bigl\|v_i(t)\bigr\|^2\bigr]
  \;=\; \frac{1}{n}\sum_{i=1}^{n}
    \Bigl(\frac{1}{H}\,\mathrm{Var}\!\left[V_i^{(1)}(t)\right]
    \;+\;
\left\|\mathbb{E}_{\Theta_t}\Bigl[V_i^{(1)}(t)\Bigr]\right\|^2 \Bigr),
\end{equation}
where $\mathrm{Var}[V_i^{(1)}] = \mathbb{E}_{\Theta_t}[\|V_i^{(1)}\|^2] -
\|\mathbb{E}_{\Theta_t}[V_i^{(1)}]\|^2$ is the per-token variance of a single head's tangential output.

The two terms in~\eqref{eq:G2_decomp} exhibit distinct decay mechanisms. On one hand, by~\eqref{eq:exp_conv_mean} and the boundedness of the attention weights, $\mathrm{Var}[V_i^{(1)}(t)]$ is uniformly bounded in $t$, so the variance term is $\mathcal{O}(1/H)$ uniformly in~$t$. On the other hand, the mean single-head velocity $\bar V_i(t):=\mathbb{E}_{\Theta_t}[V_i^{(1)}(t)]$ is analysed via the Taylor expansion of $D\mapsto V_i^{(1)}$ around the equilibrium $F$. Taking the expectation under the law of $D_t^{(1)}$ and under the invariant law $\overline\Theta$ and subtracting yields
\[
\bar V_i(t) = \bar V_i^\infty(t) + \mathcal{O}(e^{-t}),
\qquad
\bar V_i^\infty(t) := \mathbb{E}_{D\sim\overline\Theta}\!\left[V_i^{(1)}(D,\{x_j(t)\})\right].
\]
Indeed, by~\eqref{eq:exp_conv_mean} the law of $D_t^{(1)}$ and $\overline\Theta$ share the same mean $F$ and the same covariance $\sigma^2\,\mathrm{Id}$ up to $\mathcal{O}(e^{-2t})$, so the zeroth- and second-order terms in the expansion cancel, leaving only the first-moment difference $\mathbb{E}[D_t^{(1)} - F] = e^{-t}(\mathbb{E}[D_0] - F)$. The quantity $\bar V_i^\infty(t)$ is therefore the expected velocity at the current token configuration $\{x_j(t)\}$ when the parameter is sampled from the invariant law $\overline\Theta$ rather than from the actual law of $D_t^{(1)}$ at time~$t$. By Theorem~\ref{thm:stationary_points}, the tokens reach a critical configuration of $\mathcal{E}(\cdot,\infty)$, hence $\bar V_i^\infty(t) \to 0$. This is consistent with the classical theory of the OU process, for which we additionally know that $\bar V_i^\infty(t) = \mathcal{O}(e^{-t})$ due to the exponential mean-reversion. Combining the above, there exist constants $C_0, C_1 > 0$ such that the strong upper gradient satisfies the two-parameter bound
\begin{equation}\label{eq:G2_twoparam}
\mathbb{E}_{\Theta_t}\!\bigl[\mathscr{G}_t^2\bigr]
  \;\leq\;
  \frac{C_1}{H}
  \;+\;
  e^{-2 t}\,C_0.
\end{equation}
This theoretical prediction is confirmed numerically in Figure~\ref{fig:combined_gradient} (top row). The left column shows the Monte Carlo mean time series of $\mathscr{G}_t^2$ for $H \in \{1,10,100\}$, confirming the expected convergence to $0$ as $t\to\infty$ and $H\to\infty$. The right column plots the time-averaged mean $\overline{\mathscr{G}^2}$ against $H$. We fit the interpolating function $k(H;a,b) = a\,H^{b}$ to the data; and obtain the following $95\%$ uncertainty bands
\[
a \in [0.28, 0.53], \qquad b \in [-1.04, -0.83]
\]
with optimal parameters $a^* = 0.384$ and $b^* = -0.933$, yielding a linear fit in $\log$-$\log$ scale with Pearson coefficient $r_{OU} = 1.00$, strongly supporting the quantitative decay predicted by~\eqref{eq:G2_twoparam}.

\begin{figure}[ht!]
  \centering
  \includegraphics[width=0.95\linewidth]{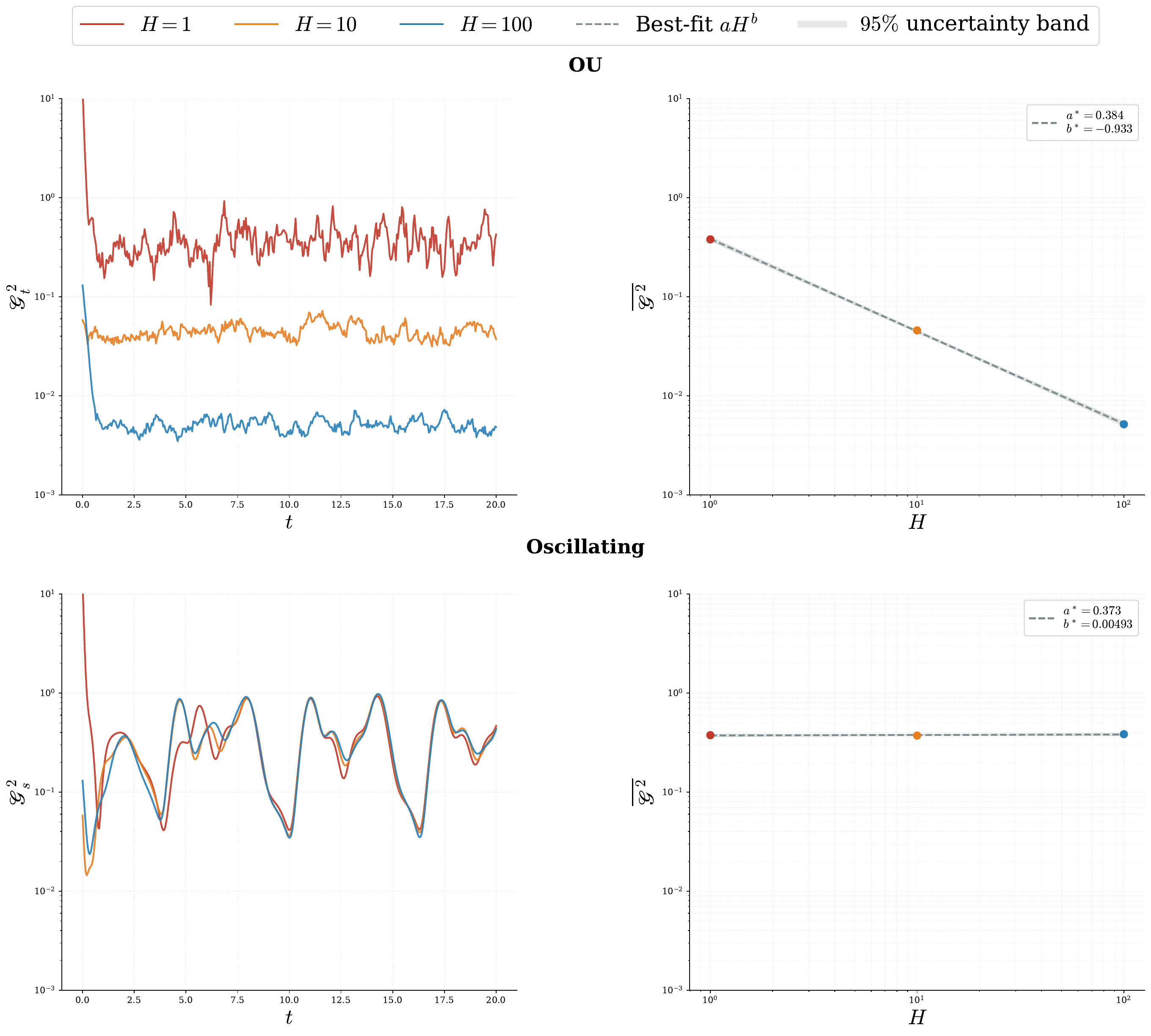}
  \caption{%
    Strong upper-gradient analysis on $\mathbb{S}^2$ for $H\in\{1,10,100\}$. Left column: $\mathscr{G}_t^2$ versus time. Right column: time-averaged mean $\overline{\mathscr{G}^2}$ versus $H$ with best-fit curve $aH^{b}$ (dashed) and $95\%$ uncertainty band. Top row: OU weights (Monte Carlo mean over $N_{\mathrm{MC}} = 20$ trajectories), confirming the gradient-variance decay of~\eqref{eq:G2_twoparam} with $\mathcal{O}(H^{-1})$ quantitative scaling. Bottom row: oscillating weights, exhibiting the non-decaying behavior induced by the non-stationary weights.
  }
  \label{fig:combined_gradient}
\end{figure}

\paragraph{Oscillating weights.}
We now consider the purely deterministic case
\[
  f(D, t) = F(t) - D,
  \qquad
  g(D, t) = 0,
\]
where $F(t)$ is a time-periodic symmetric matrix. Each weight matrix evolves according to $\dot D_t^{(h)} = F(t) - D_t^{(h)}$, with explicit solution
\[
  D_t^{(h)} = e^{-t} D_0^{(h)} + \int_0^t e^{-(t-s)} F(s)\,\dd s,\qquad h=1,\dots,H.
\]
Because $F$ is periodic and non-decaying, $D_t^{(h)}$ is attracted toward a time-periodic orbit; in particular, $\dot{D}_t^{(h)} = F(t) - D_t^{(h)}$ oscillates around a strictly positive value and does not decay to zero. Since $\mathcal{E}(u_t,t)$ inherits this periodicity through $D_t^{(h)}$, the integrand $\partial_t\mathcal{E}(u_t,t)$ also oscillates without a definite sign, and its running integral grows without bound:
\[
  \sup_{t \geqslant 0}
  \int_0^t \partial_\tau\mathcal{E}(u_\tau,\tau)\,\dd\tau = +\infty.
\]
Assumption~\eqref{eq:integrable_time_derivative_energy} is therefore violated, Theorem~\ref{thm:stationary_points} cannot be applied, and the conclusion $\omega_u \subseteq \bigcup_{\bar\Theta\in\Omega(\Theta)} \mathbb{E}_{\mathscr{G}}^{\bar\Theta}$ may fail. Correspondingly, the metric slope $\mathscr{G}_t^2$ receives a persistent non-zero contribution from the mean-velocity term: since $\dot{D}_t^{(h)}$ does not vanish, $\bar V_i(t)\not\to 0$ and bound~\eqref{eq:G2_twoparam} no longer holds.

Concretely, we simulate deterministic ($g = 0$) trajectories on $\mathbb{S}^2$ ($n=500$, $d=3$) with the oscillating symmetric targets
\begin{align*}
  F_{\mathbb{S}^2}^{(h)}(t)
  &=
  \begin{pmatrix}
    2 + 1.5\cos \left(t + 2\pi\frac{h - 1}{H}\right) & \sin \left(2t + 2\pi\frac{h - 1}{H}\right)       & \sin \left(t + 2\pi\frac{h - 1}{H}\right)   \\
    \sin \left(2t + 2\pi\frac{h - 1}{H}\right)       & 2+1.5\sin \left(t + 2\pi\frac{h - 1}{H}\right)   & \cos \left(2t + 2\pi\frac{h - 1}{H}\right)  \\
    \sin \left(t + 2\pi\frac{h - 1}{H}\right)        & \cos \left(2t + 2\pi\frac{h - 1}{H}\right)       & 2+1.5\cos\left(t + \frac{\pi}{4} + 2\pi\frac{h - 1}{H}\right)
  \end{pmatrix},
\end{align*}
and initial conditions $D_0^{(h)} \sim \mathcal{N}(0, \mathrm{Id})$ for $H \in \{1, 10, 100\}$. Each diagonal entry of $F(t)$ oscillates with non-decaying amplitude, ensuring that the integrability condition is persistently violated.

Figure~\ref{fig:combined_snapshots} (bottom row) displays snapshots of the token cloud at times $t\in\{16,18,20\}$ alongside the initial configuration at $t=0$. Unlike the OU case, the token cloud does not settle into a stable distribution but continues to evolve, reflecting the absence of a stationary limit.

Figure~\ref{fig:combined_energy_balance} (right) plots the identity in~\eqref{eq:discrete_energy_balance} along with its individual terms. As predicted by the theory, the energy balance is again preserved in time.

Finally, Figure~\ref{fig:combined_gradient} (bottom row) shows the time series of $\mathscr{G}_t^2$ (left column) and the time-averaged mean (right column). As in the OU case, we fit the interpolating function $k(H;a,b) = a\,H^{b}$ and obtain $95\%$ uncertainty bands $a \in [0.27, 0.54]$ and $b \in [-0.06, 0.07]$, with optimal parameters $a^* = 0.373$, $b^* = 0.005$ ($r_{\mathrm{osc}} = -0.33$ in $\log$-$\log$ scale), confirming the non-decaying behaviour of the flow expected from the violation of Assumption~\eqref{eq:integrable_time_derivative_energy}.

\begin{figure}[ht!]
  \centering
  \includegraphics[width=0.9\linewidth]{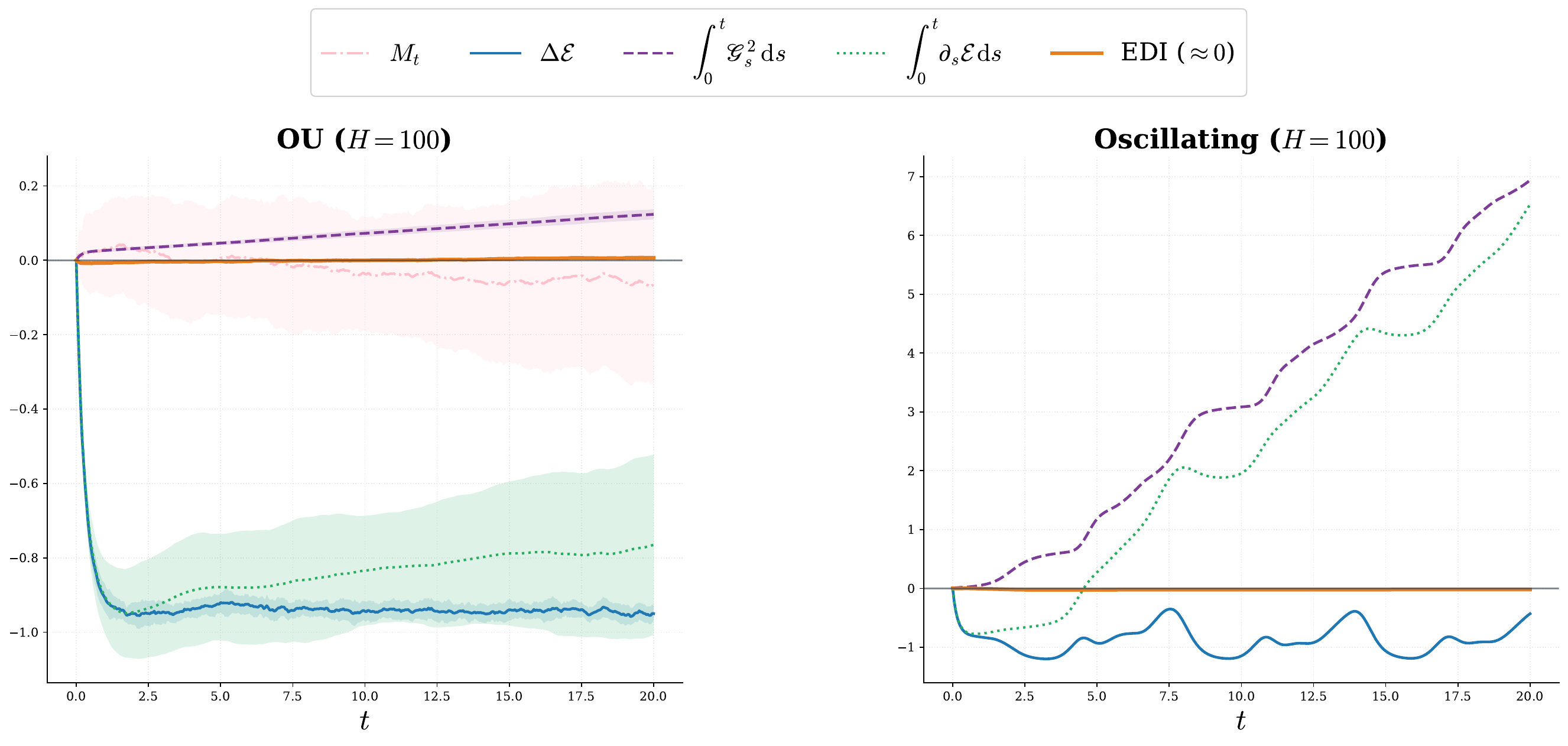}
  \caption{%
    Energy balance on $\mathbb{S}^2$ ($H=100$): individual terms of the energy
    identity~\eqref{eq:discrete_energy_balance}---$\Delta\mathcal{E}$
    (blue), $\int_0^t\mathscr{G}_s^2\,\dd s$ (purple),
    $\int_0^t\partial_s\mathcal{E}\,\dd s$ (green), and the stochastic
    residual $M_t$ (rose, OU only)---together with their total balance
    (EDI, orange). Left: OU weights (Monte Carlo mean over $N_{\mathrm{MC}} = 20$ trajectories, $\pm1\sigma$ bands shaded); the EDI lies within the error band $\mathcal{O}(H^{-1})$ as expected.
    Right: oscillating weights.
  }
  \label{fig:combined_energy_balance}
\end{figure}

\section{Conclusions and Future Directions}
In this work we generalized previous mean-field analyses of fixed-weight transformers to multi-headed, depth-dependent architectures, narrowing the gap with the architectures used in practice. Indeed, by modelling layer-wise parameter variation through a time-dependent probability measure $\Theta_t$, we formulated inference dynamics as a genuinely time-dependent Wasserstein gradient flow. A central theoretical contribution is the derivation of the effective scalar mobility $b^{\Theta_t}_\mu(x)$, the $\Theta_t$-weighted harmonic mean of the head-dependent mobilities, which compresses the multi-head transport geometry into a single scalar factor on the token space. This framework rigorously introduces the Transformer PDE and extends the energetic perspective of attention mechanisms to dynamic regimes in which the interaction energy $\mathcal{E}(\mu, t)$ carries an explicit time dependence.\\
We also established that the structural evolution of the weights intrinsically governs the macroscopic behaviour of the token distribution. When the parameter dynamics satisfy a suitable integrability condition, as shown by Theorem~\ref{thm:stationary_points} and illustrated by the Ornstein--Uhlenbeck simulations of Section~\ref{sec:numerics_long_time}, the token distribution relaxes to a stationary configuration and the strong upper gradient decays at rate $\mathcal{O}(H^{-1})$. Conversely, sustained oscillations in $\Theta_t$ obstruct this relaxation, providing numerical evidence that token clustering is not an inescapable artefact of deep attention but a phenomenon controlled by the parameter dynamics. Furthermore, we investigated the stability of our models with respect to perturbations of the initial data and of the weight distribution, obtaining a Grönwall-type estimate for the former and a $\Gamma$-convergence result for the latter; together these provide a mathematical justification for common architectural initialization techniques such as weight decay and $L^2$-norm control.

\subsection{Future Research Directions}
The rigorous connection between the Transformer PDE and non-autonomous gradient flows opens several paths for characterising the geometry of stationary points. A natural next step is a broader classification of the $\omega$-limit sets, and the investigation of how head multiplicity shapes the underlying energy landscape. In particular, one may ask whether specific spectral properties of the $\Theta_t$-averaged attention matrices dictate the number and spatial distribution of limiting clusters on the sphere $\mathbb{S}^{d-1}$. A range of quantitative results on clustering, synchronization, and metastability in the autonomous regime have recently been obtained (see for instance \cite{alcalde2026quantifying, bruno2025emergence, chen2025quantitative, criscitiello2024synchronization}); extending such estimates to the time-dependent, multi-head setting developed here is a natural next step, and would in turn shed light on the emergence of distinct token sub-populations and on sharp bounds on the expressivity of a given attention configuration.

Additionally, the time-dependent interaction energy $\mathcal{E}(\mu, t)$ introduced here provides a rigorous mathematical foundation for designing novel neural architectures, rather than merely analyzing existing ones. Within our framework both the parameter measure $\Theta_t$ and the energy $\mathcal{E}$ itself can be treated as design objects: one may formulate optimal control problems over $\Theta_t$, or modify $\mathcal{E}$ by adding regularizing terms. A natural example is the addition of an entropic penalty $\varepsilon\int\mu\log\mu$, whose Wasserstein gradient flow yields a McKean--Vlasov equation on the sphere, that is, an interaction dynamics with built-in diffusion of the tokens. The stationary states, bifurcation structure, and clustering dynamics of such noisy transformer models, and more generally of the weakly interacting diffusions underlying them, have recently attracted considerable attention (see e.g.\ \cite{   balasubramanian2025structure, fedorov2026clustering, gerber2025formation, shalova2026solutions} and references therein); the time-dependent formulation developed here provides a natural setting in which to design and analyze architectures of this kind that retain the dissipation and robustness properties established in this paper. The variational viewpoint further suggests building energy-stable transformer layers via structure-preserving discretizations of the resulting gradient flow, such as JKO-type minimizing-movement schemes that perform a time-discrete minimization of the chosen energy directly. Such physically motivated architectures would inherently resist token collapse, promote feature separation, and achieve robustness by design, offering a principled alternative to the regularization strategies currently prevalent in the literature.

\bibliographystyle{abbrv}
\bibliography{EntTr}

@inproceedings{
tong2025neural,
title={Neural {ODE} Transformers: Analyzing Internal Dynamics and Adaptive Fine-tuning},
author={Anh Tong and Thanh Nguyen-Tang and Dongeun Lee and Duc Nguyen and Toan Tran and David Leo Wright Hall and Cheongwoong Kang and Jaesik Choi},
booktitle={International Conference on Learning Representations},
year={2025},
url={https://openreview.net/forum?id=XnDyddPcBT}
}

@book{diestel1977vector,
  title={Vector Measures},
  author={Diestel, Joseph and Uhl, John Jerry},
  year={1977},
  publisher={American Mathematical Society}
}

@article{alcalde2026quantifying,
  title={Quantifying Concentration Phenomena of Mean-Field Transformers in the Low-Temperature Regime},
  author={Alcalde, Albert and Bungert, Leon and Riedl, Konstantin and Roith, Tim},
  journal={arXiv preprint arXiv:2605.10931},
  year={2026}
}

@article{vaswani2017attention,
  title={Attention is all you need},
  author={Vaswani, Ashish and Shazeer, Noam and Parmar, Niki and Uszkoreit, Jakob and Jones, Llion and Gomez, Aidan N and Kaiser, {L}ukasz and Polosukhin, Illia},
  journal={Advances in neural information processing systems},
  volume={30},
  year={2017}
}

@article{zhang2019root,
  title={Root mean square layer normalization},
  author={Zhang, Biao and Sennrich, Rico},
  journal={Advances in neural information processing systems},
  volume={32},
  year={2019}
}

@inproceedings{
hayou2021robust,
title={Robust Pruning at Initialization},
author={Soufiane Hayou and Jean-Francois Ton and Arnaud Doucet and Yee Whye Teh},
booktitle={International Conference on Learning Representations},
year={2021},
url={https://openreview.net/forum?id=vXj_ucZQ4hA}
}

@article{zhang2022stabilize,
  title={Stabilize deep {R}es{N}et with a sharp scaling factor $\tau$},
  author={Zhang, Huishuai and Yu, Da and Yi, Mingyang and Chen, Wei and Liu, Tie-Yan},
  journal={Machine Learning},
  volume={111},
  number={9},
  pages={3359--3392},
  year={2022},
  publisher={Springer}
}

@inproceedings{pmlr-v130-hayou21a,
  title = 	 {Stable {R}es{N}et},
  author =       {Hayou, Soufiane and Clerico, Eugenio and He, Bobby and Deligiannidis, George and Doucet, Arnaud and Rousseau, Judith},
  booktitle = 	 {Proceedings of The 24th International Conference on Artificial Intelligence and Statistics},
  pages = 	 {1324--1332},
  year = 	 {2021},
  editor = 	 {Banerjee, Arindam and Fukumizu, Kenji},
  volume = 	 {130},
  series = 	 {Proceedings of Machine Learning Research},
  month = 	 {13--15 Apr},
  publisher =    {PMLR},
  url = 	 {https://proceedings.mlr.press/v130/hayou21a.html}
}

@article{shen2023study,
  title={A study on {R}e{LU} and softmax in transformer},
  author={Shen, Kai and Guo, Junliang and Tan, Xu and Tang, Siliang and Wang, Rui and Bian, Jiang},
  journal={arXiv preprint arXiv:2302.06461},
  year={2023}
}

@inproceedings{hua2022transformer,
  title={Transformer quality in linear time},
  author={Hua, Weizhe and Dai, Zihang and Liu, Hanxiao and Le, Quoc},
  booktitle={International Conference on Machine Learning},
  pages={9099--9117},
  year={2022},
  organization={PMLR}
}

@inproceedings{
ramapuram2024theory,
title={Theory, Analysis, and Best Practices for Sigmoid Self-Attention},
author={Jason Ramapuram and Federico Danieli and Eeshan Gunesh Dhekane and Floris Weers and Dan Busbridge and Pierre Ablin and Tatiana Likhomanenko and Jagrit Digani and Zijin Gu and Amitis Shidani and Russell Webb},
booktitle={International Conference on Learning Representations},
year={2025},
url={https://openreview.net/forum?id=Zhdhg6n2OG}
}

@article{burger2025analysis,
  title={Analysis of mean-field models arising from self-attention dynamics in transformer architectures with layer normalization},
  author={Burger, Martin and Kabri, Samira and Korolev, Yury and Roith, Tim and Weigand, Lukas},
  journal={Philosophical Transactions A},
  volume={383},
  number={2298},
  pages={20240233},
  year={2025},
  publisher={The Royal Society}
}

@inproceedings{xiong2020layer,
  title={On layer normalization in the transformer architecture},
  author={Xiong, Ruibin and Yang, Yunchang and He, Di and Zheng, Kai and Zheng, Shuxin and Xing, Chen and Zhang, Huishuai and Lan, Yanyan and Wang, Liwei and Liu, Tieyan},
  booktitle={International Conference on Machine Learning},
  pages={10524--10533},
  year={2020},
  organization={PMLR}
}

@article{huo2025capturing,
  title={Capturing {AI}'s Attention: Physics of Repetition, Hallucination, Bias and Beyond},
  author={Huo, Frank Yingjie and Johnson, Neil F},
  journal={arXiv preprint arXiv:2504.04600},
  year={2025}
}

@inproceedings{
ramsauer2020hopfield,
title={Hopfield Networks is All You Need},
author={Hubert Ramsauer and Bernhard Sch{\"a}fl and Johannes Lehner and Philipp Seidl and Michael Widrich and Lukas Gruber and Markus Holzleitner and Thomas Adler and David Kreil and Michael K Kopp and G{\"u}nter Klambauer and Johannes Brandstetter and Sepp Hochreiter},
booktitle={International Conference on Learning Representations},
year={2021},
url={https://openreview.net/forum?id=tL89RnzIiCd}
}

@article{kan2025stability,
  title={Stability of transformers under layer normalization},
  author={Kan, Kelvin and Li, Xingjian and Zhang, Benjamin J and Sahai, Tuhin and Osher, Stanley and Kumar, Krishna and Katsoulakis, Markos A},
  journal={arXiv preprint arXiv:2510.09904},
  year={2025}
}

@article{geneva2022transformers,
  title={Transformers for modeling physical systems},
  author={Geneva, Nicholas and Zabaras, Nicholas},
  journal={Neural Networks},
  volume={146},
  pages={272--289},
  year={2022},
  publisher={Elsevier}
}

@article{zhang2024transformers,
  title={Why transformers need {A}dam: A {H}essian perspective},
  author={Zhang, Yushun and Chen, Congliang and Ding, Tian and Li, Ziniu and Sun, Ruoyu and Luo, Zhi-Quan},
  journal={Advances in neural information processing systems},
  volume={37},
  pages={131786--131823},
  year={2024}
}

@inproceedings{zhang2019fixup,
  title={Fixup initialization: Residual learning without normalization},
  author={Zhang, Hongyi and Dauphin, Yann N and Ma, Tengyu},
  booktitle={International Conference on Learning Representations},
  year={2019}
}

@article{radford2019language,
  title={Language models are unsupervised multitask learners},
  author={Radford, Alec and Wu, Jeffrey and Child, Rewon and Luan, David and Amodei, Dario and Sutskever, Ilya and others},
  journal={OpenAI blog},
  volume={1},
  number={8},
  pages={9},
  year={2019}
}

@article{rossi2008metric,
  title={A metric approach to a class of doubly nonlinear evolution equations and applications},
  author={Rossi, Riccarda and Mielke, Alexander and Savar{\'e}, Giuseppe},
  journal={Annali della Scuola Normale Superiore di Pisa, Classe di Scienze},
  pages={97--169},
  year={2008}
}

@article{team2023gemini,
  title={Gemini: a family of highly capable multimodal models},
  author={{Gemini Team} and Anil, Rohan and Borgeaud, Sebastian and Alayrac, Jean-Baptiste and Yu, Jiahui and Soricut, Radu and Schalkwyk, Johan and Dai, Andrew M and Hauth, Anja and Millican, Katie and others},
  journal={arXiv preprint arXiv:2312.11805},
  year={2023}
}

@book{billingsley2013convergence,
  title={Convergence of probability measures},
  author={Billingsley, Patrick},
  year={2013},
  publisher={John Wiley \& Sons}
}

@article{liu2024deepseek,
  title={{D}eep{S}eek-{V}3 technical report},
  author={Liu, Aixin and Feng, Bei and Xue, Bing and Wang, Bingxuan and Wu, Bochao and Lu, Chengda and Zhao, Chenggang and Deng, Chengqi and Zhang, Chenyu and Ruan, Chong and others},
  journal={arXiv preprint arXiv:2412.19437},
  year={2024}
}

@article{wu2016google,
  title={Google's neural machine translation system: Bridging the gap between human and machine translation},
  author={Wu, Yonghui and Schuster, Mike and Chen, Zhifeng and Le, Quoc V and Norouzi, Mohammad and Macherey, Wolfgang and Krikun, Maxim and Cao, Yuan and Gao, Qin and Macherey, Klaus and others},
  journal={arXiv preprint arXiv:1609.08144},
  year={2016}
}

@inproceedings{he2016deep,
  title={Deep residual learning for image recognition},
  author={He, Kaiming and Zhang, Xiangyu and Ren, Shaoqing and Sun, Jian},
  booktitle={Proceedings of the IEEE Conference on Computer Vision and Pattern Recognition},
  pages={770--778},
  year={2016}
}

@inproceedings{
dosovitskiy2020image,
title={An Image is Worth 16x16 Words: Transformers for Image Recognition at Scale},
author={Alexey Dosovitskiy and Lucas Beyer and Alexander Kolesnikov and Dirk Weissenborn and Xiaohua Zhai and Thomas Unterthiner and Mostafa Dehghani and Matthias Minderer and Georg Heigold and Sylvain Gelly and Jakob Uszkoreit and Neil Houlsby},
booktitle={International Conference on Learning Representations},
year={2021},
url={https://openreview.net/forum?id=YicbFdNTTy}
}

@inproceedings{radford2021learning,
  title={Learning transferable visual models from natural language supervision},
  author={Radford, Alec and Kim, Jong Wook and Hallacy, Chris and Ramesh, Aditya and Goh, Gabriel and Agarwal, Sandhini and Sastry, Girish and Askell, Amanda and Mishkin, Pamela and Clark, Jack and others},
  booktitle={International Conference on Machine Learning},
  pages={8748--8763},
  year={2021},
  organization={PMLR}
}

@article{baevski2020wav2vec,
  title={wav2vec 2.0: A framework for self-supervised learning of speech representations},
  author={Baevski, Alexei and Zhou, Yuhao and Mohamed, Abdelrahman and Auli, Michael},
  journal={Advances in neural information processing systems},
  volume={33},
  pages={12449--12460},
  year={2020}
}

@inproceedings{radford2023robust,
  title={Robust speech recognition via large-scale weak supervision},
  author={Radford, Alec and Kim, Jong Wook and Xu, Tao and Brockman, Greg and McLeavey, Christine and Sutskever, Ilya},
  booktitle={International Conference on Machine Learning},
  pages={28492--28518},
  year={2023},
  organization={PMLR}
}

@article{jumper2021highly,
  title={Highly accurate protein structure prediction with {A}lpha{F}old},
  author={Jumper, John and Evans, Richard and Pritzel, Alexander and Green, Tim and Figurnov, Michael and Ronneberger, Olaf and Tunyasuvunakool, Kathryn and Bates, Russ and {\v{Z}}{\'\i}dek, Augustin and Potapenko, Anna and others},
  journal={Nature},
  volume={596},
  number={7873},
  pages={583--589},
  year={2021},
  publisher={Nature Publishing Group UK London}
}

@article{stokes2020deep,
  title={A deep learning approach to antibiotic discovery},
  author={Stokes, Jonathan M and Yang, Kevin and Swanson, Kyle and Jin, Wengong and Cubillos-Ruiz, Andres and Donghia, Nina M and MacNair, Craig R and French, Shawn and Carfrae, Lindsey A and Bloom-Ackermann, Zohar and others},
  journal={Cell},
  volume={180},
  number={4},
  pages={688--702},
  year={2020},
  publisher={Elsevier}
}

@article{lam2023learning,
  title={Learning skillful medium-range global weather forecasting},
  author={Lam, Remi and Sanchez-Gonzalez, Alvaro and Willson, Matthew and Wirnsberger, Peter and Fortunato, Meire and Alet, Ferran and Ravuri, Suman and Ewalds, Timo and Eaton-Rosen, Zach and Hu, Weihua and others},
  journal={Science},
  volume={382},
  number={6677},
  pages={1416--1421},
  year={2023},
  publisher={American Association for the Advancement of Science}
}

@article{gu2020empirical,
  title={Empirical asset pricing via machine learning},
  author={Gu, Shihao and Kelly, Bryan and Xiu, Dacheng},
  journal={The Review of Financial Studies},
  volume={33},
  number={5},
  pages={2223--2273},
  year={2020},
  publisher={Oxford University Press}
}

@article{topol2019high,
  title={High-performance medicine: the convergence of human and artificial intelligence},
  author={Topol, Eric J},
  journal={Nature Medicine},
  volume={25},
  number={1},
  pages={44--56},
  year={2019},
  publisher={Nature Publishing Group US New York}
}

@article{geshkovski2023mathematical,
  title={A mathematical perspective on transformers},
  author={Geshkovski, Borjan and Letrouit, Cyril and Polyanskiy, Yury and Rigollet, Philippe},
  journal={Bulletin of the American Mathematical Society},
  volume={62},
  number={3},
  pages={427--479},
  year={2025}
}

@article{sandier2004gamma,
  title={Gamma-convergence of gradient flows with applications to {G}inzburg--{L}andau},
  author={Sandier, Etienne and Serfaty, Sylvia},
  journal={Communications on Pure and Applied Mathematics: A Journal Issued by the Courant Institute of Mathematical Sciences},
  volume={57},
  number={12},
  pages={1627--1672},
  year={2004},
  publisher={Wiley Online Library}
}

@book{villani2008optimal,
  title={Optimal transport: old and new},
  author={Villani, C{\'e}dric},
  volume={338},
  year={2008},
  publisher={Springer}
}

@book {Santambrogio,
    AUTHOR = {Santambrogio, F.},
     TITLE = {Optimal transport for applied mathematicians},
    SERIES = {Progress in Nonlinear Differential Equations and their
              Applications},
    VOLUME = {87},
 PUBLISHER = {Birkh\"{a}user/Springer, Cham},
      YEAR = {2015},
     PAGES = {xxvii+353},
      ISBN = {978-3-319-20827-5; 978-3-319-20828-2},
   MRCLASS = {49-02 (35J96 49J45 49M29 58E50 90C05 90C48 91B02)},
  MRNUMBER = {3409718},
MRREVIEWER = {Luigi De Pascale},
       DOI = {10.1007/978-3-319-20828-2},
       URL = {https://doi.org/10.1007/978-3-319-20828-2},
}

@article{serfaty2011gamma,
  title={Gamma-convergence of gradient flows on {H}ilbert and metric spaces and applications},
  author={Serfaty, Sylvia},
  journal={Discrete and Continuous Dynamical Systems},
  volume={31},
  number={4},
  pages={1427--1451},
  year={2011},
}

@inproceedings{kim2021lipschitz,
  title={The {L}ipschitz constant of self-attention},
  author={Kim, Hyunjik and Papamakarios, George and Mnih, Andriy},
  booktitle={International Conference on Machine Learning},
  pages={5562--5571},
  year={2021},
  organization={PMLR}
}

@inproceedings{sander2022sinkformers,
  title={Sinkformers: Transformers with doubly stochastic attention},
  author={Sander, Michael E and Ablin, Pierre and Blondel, Mathieu and Peyr{\'e}, Gabriel},
  booktitle={International Conference on Artificial Intelligence and Statistics},
  pages={3515--3530},
  year={2022},
  organization={PMLR}
}

@article{karagodin2024clustering,
  title={Clustering in causal attention masking},
  author={Karagodin, Nikita and Polyanskiy, Yury and Rigollet, Philippe},
  journal={Advances in Neural Information Processing Systems},
  volume={37},
  pages={115652--115681},
  year={2024}
}

@article{serfatyCoulombflows,
  title={Mean field limit for {C}oulomb-type flows},
  author={Serfaty, Sylvia},
  journal={Duke Mathematical Journal},
  volume={169},
  number={15},
  pages={2887--2935},
  year={2020},
}

@article{geshkovski2024emergence,
  title={The emergence of clusters in self-attention dynamics},
  author={Geshkovski, Borjan and Letrouit, Cyril and Polyanskiy, Yury and Rigollet, Philippe},
  journal={Advances in Neural Information Processing Systems},
  volume={36},
  year={2024}
}

@article{geshkovski2024measure,
  title={Measure-to-measure interpolation using Transformers},
  author={Geshkovski, Borjan and Rigollet, Philippe and Ruiz-Balet, Dom{\`e}nec},
  journal={arXiv preprint arXiv:2411.04551},
  year={2024}
}

@article{castin2025unified,
  title={A Unified Perspective on the Dynamics of Deep Transformers},
  author={Castin, Val{\'e}rie and Ablin, Pierre and Carrillo, Jos{\'e} Antonio and Peyr{\'e}, Gabriel},
  journal={arXiv preprint arXiv:2501.18322},
  year={2025}
}

@book{ambrosio2008gradient,
  title={Gradient flows: in metric spaces and in the space of probability measures},
  author={Ambrosio, Luigi and Gigli, Nicola and Savar{\'e}, Giuseppe},
  year={2008},
  publisher={Springer Science \& Business Media}
}

@article{ferreira2018gradient,
  title={Gradient flows of time-dependent functionals in metric spaces and applications to {PDE}s},
  author={Ferreira, Lucas CF and Valencia-Guevara, Julio C},
  journal={Monatshefte f{\"u}r Mathematik},
  volume={185},
  pages={231--268},
  year={2018},
  publisher={Springer}
}

@inproceedings{bruno2025emergence,
  title={Emergence of meta-stable clustering in mean-field transformer models},
  author={Bruno, Giuseppe and Pasqualotto, Federico and Agazzi, Andrea},
  booktitle={International Conference on Learning Representations},
  pages={7496--7526},
  year={2025}
}

@article{dolbeault2009new,
  title={A new class of transport distances between measures},
  author={Dolbeault, Jean and Nazaret, Bruno and Savar{\'e}, Giuseppe},
  journal={Calculus of Variations and Partial Differential Equations},
  volume={34},
  number={2},
  pages={193--231},
  year={2009},
  publisher={Springer}
}

@article{MMCS,
author = {Matthes, D. and McCann, R. J. and Savar\'{e}, G.},
title = {A Family of Nonlinear Fourth Order Equations of Gradient Flow Type},
journal = {Communications in Partial Differential Equations},
volume = {34},
number = {11},
pages = {1352--1397},
year = {2009},
publisher = {Taylor \& Francis},
doi = {10.1080/03605300903296256},
URL = {https://doi.org/10.1080/03605300903296256},
eprint = {https://doi.org/10.1080/03605300903296256}
}

@article{benamou2000computational,
  title={A computational fluid mechanics solution to the {M}onge--{K}antorovich mass transfer problem},
  author={Benamou, Jean-David and Brenier, Yann},
  journal={Numerische Mathematik},
  volume={84},
  number={3},
  pages={375--393},
  year={2000},
  publisher={Springer-Verlag Berlin/Heidelberg}
}

@article{criscitiello2024synchronization,
  title={Synchronization on circles and spheres with nonlinear interactions},
  author={Criscitiello, Christopher and Rebjock, Quentin and McRae, Andrew D and Boumal, Nicolas},
  journal={arXiv preprint arXiv:2405.18273},
  year={2024}
}

@article{touvron2023llama,
  title={Llama: Open and efficient foundation language models},
  author={Touvron, Hugo and Lavril, Thibaut and Izacard, Gautier and Martinet, Xavier and Lachaux, Marie-Anne and Lacroix, Timoth{\'e}e and Rozi{\`e}re, Baptiste and Goyal, Naman and Hambro, Eric and Azhar, Faisal and others},
  journal={arXiv preprint arXiv:2302.13971},
  year={2023}
}

@book{Pavliotis2014,
author = {Pavliotis, Grigorios A.},
address = {New York},
booktitle = {Stochastic processes and applications : diffusion processes, the Fokker-Planck and Langevin equations},
isbn = {9781493913237},
language = {eng},
publisher = {Springer},
series = {Texts in applied mathematics ; Volume 60},
title = {Stochastic processes and applications : diffusion processes, the Fokker-Planck and Langevin equations },
url = {https://ebookcentral.proquest.com/lib/surrey/detail.action?docID=6314316},
year = {2014},
}

@article{shalova2026solutions,
  title={Solutions of stationary {M}c{K}ean--{V}lasov equation on a high-dimensional sphere and other {R}iemannian manifolds},
  author={Shalova, Anna and Schlichting, Andr{\'e}},
  journal={Advances in Nonlinear Analysis},
  volume={15},
  number={1},
  pages={20250141},
  year={2026},
  publisher={De Gruyter}
}

@article{fedorov2026clustering,
  title={Clustering in Deep Stochastic Transformers},
  author={Fedorov, Lev and Sander, Micha{\"e}l E and Elie, Romuald and Marion, Pierre and Lauri{\`e}re, Mathieu},
  journal={arXiv preprint arXiv:2601.21942},
  year={2026}
}

@article{gerber2025formation,
  title={Formation of clusters and coarsening in weakly interacting diffusions},
  author={Gerber, Nicolai and Gvalani, Rishabh S and Hairer, Martin and Pavliotis, Grigorios A and Schlichting, Andr{\'e}},
  journal={arXiv preprint arXiv:2510.17629},
  year={2025}
}

@article{balasubramanian2025structure,
  title={On the structure of stationary solutions to {M}c{K}ean--{V}lasov equations with applications to noisy transformers},
  author={Balasubramanian, Krishnakumar and Banerjee, Sayan and Rigollet, Philippe},
  journal={arXiv preprint arXiv:2510.20094},
  year={2025}
}

@inproceedings{he2015,
  title={Delving deep into rectifiers: Surpassing human-level performance on {I}mage{N}et classification},
  author={He, Kaiming and Zhang, Xiangyu and Ren, Shaoqing and Sun, Jian},
  booktitle={Proceedings of the IEEE International Conference on Computer Vision},
  pages={1026--1034},
  year={2015}
}

@article{chen2025quantitative,
  title={Quantitative Clustering in Mean-Field Transformer Models},
  author={Chen, Shi and Lin, Zhengjiang and Polyanskiy, Yury and Rigollet, Philippe},
  journal={arXiv preprint arXiv:2504.14697},
  year={2025}
}

@article {KLS,
    AUTHOR = {Hauer, Daniel and Maz\'on, Jos\'e{} M.},
     TITLE = {Kurdyka--{L}ojasiewicz--{S}imon inequality for gradient flows in
              metric spaces},
   JOURNAL = {Trans. Amer. Math. Soc.},
  FJOURNAL = {Transactions of the American Mathematical Society},
    VOLUME = {372},
      YEAR = {2019},
    NUMBER = {7},
     PAGES = {4917--4976},
      ISSN = {0002-9947,1088-6850},
   MRCLASS = {49Q20 (35K90 39B62 49J52 58J35)},
  MRNUMBER = {4009443},
MRREVIEWER = {Robin\ Neumayer},
       DOI = {10.1090/tran/7801},
       URL = {https://doi.org/10.1090/tran/7801},
}

\end{document}